\newcommand{\wbar}{\overline{w}}
\newcommand{\F}{\mathcal{F}}
\newcommand{\bg}{\bar{g}}
\newcommand{\PS}{\mathcal{PS}}
\renewcommand{\le}{~\leq~}
\newcommand{\rA}{{\rm(A)}}
\newcommand{\rB}{{\rm(B)}}
\newcommand{\rC}{{\rm(C)}}
\newcommand{\rst}{{\rm(*)}}
\newcommand{\rstar}{{(\star)}}
\newcommand{\sAlg}{\texttt{S}Lo\texttt{w}cal-SGD}
\renewcommand{\le}{~\leq~}
\newcommand{\mbs}{{\small \texttt{Minibatch-SGD}}}
\newcommand{\lsgd}{{\small \texttt{Local-SGD}}}
\newcommand{\B}{\mathcal{B}}
\def\al#1\eal{\begin{align}#1\end{align}}
\def\als#1\eals{\begin{align*}#1\end{align*}}
\newcommand{\non}{\nonumber\\}
\newenvironment{definition}[1][Definition]
  {\begin{trivlist} \item[\hskip \labelsep {\bfseries #1}] \itshape}
  {\end{trivlist}}
\newtheorem{theorem}{Theorem}
\newtheorem{lemma}{Lemma}
\DeclareMathOperator*{\argmin}{arg\,min}
\def\reals{{\mathcal R}}
\newcommand{\D}{{\mathcal{D}}}
\newcommand{\ignore}[1]{}
\def\reals{{\mathbb R}}
\def\bold0{\mathbf{0}}
\newcommand\E{\mbox{\bf E}}
\newcommand{\eps}{\varepsilon}
\title{\sAlg : Slow Query Points Improve Local-SGD for Stochastic Convex Optimization
}
\author{%
  Tehila Dahan \\
  Department of Electrical Engineering\\
  Technion\\
  Haifa, Israel \\
\texttt{t.dahan@campus.technion.ac.il} \\
  \And
 Kfir Y. Levy \\
  Department of Electrical Engineering\\
  Technion\\
  Haifa, Israel \\
\texttt{kfirylevy@technion.ac.il} \\
}
\begin{document}

\maketitle

\begin{abstract}
 We consider  distributed learning scenarios where $M$ machines interact with a parameter server along several communication rounds in order to minimize a joint  objective function. 
Focusing on the heterogeneous case, where different machines may draw samples from different data-distributions, we design the first local update method that provably benefits over the two most prominent distributed baselines: namely Minibatch-SGD and Local-SGD. 
Key to our approach is a slow querying technique  that we customize to the distributed setting, which in turn enables a better mitigation of the bias caused by local updates.
\end{abstract}

\section{Introduction}
Federated Learning (FL) is a framework that enables huge scale collaborative learning among 
a large number of heterogeneous\footnote{Heterogeneous here refers to the data of each client, and we assume that its statistical properties may vary between different clients.} clients (or machines).
FL may potentially promote fairness among participants, by allowing clients with small scale datasets to participate in the learning process and  affect the resulting model. Additionally,  participants are not required to 
directly share  data, which may improve  privacy.
Due to these reasons, FL has gained popularity in the past years, and found use in applications like voice recognition \citep{APP,GGL}, fraud detection \citep{INT}, drug discovery \citep{MELL}, and  more \citep{FieldGuide}. 

The two most prominent algorithmic approaches towards federated learning are \mbs~ \newline \citep{dekel2012optimal} and \lsgd~(a.k.a.~Federated-Averaging)~\citep{mangasarian1993backpropagation,mcmahan2017communication,stich2018local} .
In \mbs~all machines (or clients) always compute unbiased gradient estimates of the same query points, while using large
batch sizes; and it is well known that this approach is not degraded due to data heterogeneity~\citep{woodworth2020minibatch}. On the downside, the number of model updates made by \mbs~may be considerably smaller compared to the number of gradient queries made by each machine; which is due to the use of minibatches.  This suggests that there may be room to improve over this approach by employing local update methods like
\lsgd, where the number of model updates and the number of gradient queries are the same.
And indeed, in the past  years, local update methods have been extensively investigated, see e.g.~\citep{kairouz2021advances} and references therein. 

We can roughly divide the research on FL into two scenarios: the \emph{homogeneous} case, where it is assumed that the data on each machine is drawn from the same distribution; and to the more realistic \emph{heterogeneous} case where it is assumed that  data distributions may vary between machines.

\begin{table*}[t!]
\centering
\smaller
\caption{We compare the best known guarantees for parallel learning,  to our \sAlg~approach for the heterogeneous SCO case. The bolded term in the \textbf{Rate}  column is the one that compares least favourably against Minibatch SGD. Where $G$ and $G_*$ relate to the dissimilarity measures that are defined in Equations~\eqref{eq:GStardis} and ~\eqref{eq:Gdis}.
The ${\bf{R_{\min}}}$ column presents the minimal number of communication rounds that are required to obtain a linear speedup (we fixed values of $K,M$ and take $\sigma=1$). Note that we omit methods that do not enable a wall-clock linear speedup with $M$, e.g.~\citep{mishchenko2022proxskip,mitra2021linear}.}
\begin{tabular}{|c|c|c|}
\hline
\textbf{Method}                                                     & \textbf{Rate}                                                                              & 
${\bf{R_{\min}~(\sigma=1)}}$                \\ \hline
\begin{tabular}[c]{@{}c@{}}MiniBatch SGD\\ \citep{dekel2012optimal}\end{tabular}             & $\frac{1}{R} + \frac{\sigma}{\sqrt{MKR}}$                                                  & ${MK}$                                        \\ \hline
\begin{tabular}[c]{@{}c@{}}Accelerated MiniBatch SGD\\ \citep{dekel2012optimal,lan2012optimal}\end{tabular}             & $\frac{1}{R^2} + \frac{\sigma}{\sqrt{MKR}}$                                                  & $\left(MK\right)^{1/3}$                                           \\ \hline
\begin{tabular}[c]{@{}c@{}}Local SGD\\ \citep{woodworth2020minibatch} \end{tabular}               & ${\bf  \frac{G^{2/3}}{R^{2/3}}+\frac{\sigma^{2/3}}{(\sqrt{K}R)^{2/3}} }+\frac{1}{KR} + \frac{\sigma}{\sqrt{MKR}}$          
& $ G^4 \cdot (MK)^3+ M^3 K$                  \\ \hline
\begin{tabular}[c]{@{}c@{}}SCAFFOLD\\ \citep{karimireddy2020scaffold}\end{tabular}               & ${\bf\frac{1}{R}}+  \frac{\sigma}{\sqrt{MKR}}$                   & $MK$                   \\ \hline
\begin{tabular}[c]{@{}c@{}}\sAlg \\ (This paper)\end{tabular} & ${\bf\frac{\sigma^{1/2}+G_*^{1/2} }{K^{1/4}R}}+ \frac{1}{KR} +\frac{1}{K^{1/3}R^{4/3}}+ \frac{1}{R^2} + \frac{\sigma}{\sqrt{MKR}}$ &
$G_*\cdot MK^{1/2} +MK^{1/2} $                            \\ \hline
\begin{tabular}[c]{@{}c@{}}\textbf{Lower Bound:} Local-SGD\\ \cite{yuan2020federated}\end{tabular}     & ${\bf  \frac{G_*^{2/3}}{R^{2/3}}+\frac{\sigma_*^{2/3}}{(\sqrt{K}R)^{2/3}} }+\frac{1}{KR} + \frac{\sigma}{\sqrt{MKR}}$                               & $G_*^4 \cdot (MK)^3+ M^3 K$                           \\ \hline
\end{tabular}
\label{table1}
\end{table*}
For the \emph{homogeneous} case it was shown in \citep{woodworth2020local,glasgow2022sharp} that the standard \lsgd~method is not superior to \mbs. Nevertheless, \citep{yuan2020federated} have designed an accelerated variant of \lsgd~that provably benefits over the Minibatch baseline.  These results are established for the fundamental Stochastic Convex Optimization (SCO) setting, which assumes that the learning objective is convex.

\vspace{-10pt}
Similarly to the homogeneous case, it was shown in \citep{woodworth2020minibatch,glasgow2022sharp}   that  \lsgd~is not superior to \mbs~in   \emph{heterogeneous} scenarios. Nevertheless, several local  approaches that compare with the Minibatch baseline were designed in \citep{karimireddy2020scaffold,gorbunov2021local}. Unfortunately, we have so far been missing a local method that provably benefits over the Minibatch baseline in the heterogeneous SCO setting.

Our work   focuses on the latter heterogeneous SCO setting, and provide a new \lsgd-style algorithm that  provably benefits over the minibatch baseline. Our algorithm named \sAlg, builds on customizing a recent technique for incorporating a \emph{slowly-changing sequence of query points} \citep{cutkosky2019anytime,kavis2019unixgrad}, which in turn enables to better mitigate the bias induced by the local updates. Curiously, we also found importance weighting to be crucial in order to surpass the minibatch baseline.  

In Table~\ref{table1} we compare our results to the state-of-the-art methods for the heterogeneous SCO setting.
We denote $M$ to be the number of machines, $K$ is the number of local updates per round, and $R$ is the number of communications rounds. Additionally, $G$ (or $G_*$) measures the dissimilarity between machines.
Our table shows that \lsgd~requires much more communication rounds compared to \mbs, and that the dissimilarity $G$ (or $G^*)$ substantially degrades its performance. Conversely, one can see that even if the dissimilarity measure is $G_* = O(1)$, our approach \sAlg~still requires less communication rounds compared to \mbs. 

Similarly to the homogeneous case, accelerated-\mbs~\citep{dekel2012optimal,lan2012optimal}, obtains the best performance among all current methods, and it is still  open  to understand whether one can outperform this accelerated minibatch baseline. 
In App.~\ref{app:Table} we elaborate on the computations of $R_{\min}$ in Table~\ref{table1}.

\vspace{-3pt}
\textbf{Related Work.}~
We focus here on centralized learning problems, where we aim to employ $M$ machines in order to minimize a joint learning objective. We allow the machines to synchronize  during $R$ communication rounds through a central machine called the \emph{Parameter Server ($\PS$)}; and allow each machine to draw $K$ samples and perform $K$ local gradient computations in every such communication round. We assume that each machine $i$ may draw i.i.d.~samples from a distribution $\D_i$, which may vary between machines.

The most natural approach in this context is \mbs, and its accelerate variant~\citep{dekel2012optimal}, which have been widely adopted both in academy and in industry, see e.g.~\citep{hoffer2017train,smith2017don,you2019large}.
Local update methods like \lsgd~\citep{mcmahan2017communication}, have recently gained much popularity due to the rise of FL, and have been extensively explored in the past  years. 

Focusing on the SCO setting,  it is well known that the standard \lsgd~is not superior (actually in most regimes it is inferior) to \mbs~\citep{woodworth2020local,woodworth2020minibatch,glasgow2022sharp}.   Nevertheless,  \citep{yuan2020federated}   devised a novel   accelerated local approach that provably surpasses the Minibatch baseline in the homogeneous case.

The heterogeneous SCO case has also been extensively investigated, with several original   and elegant  approaches \citep{koloskova2020unified,khaled2020tighter,karimireddy2020scaffold,woodworth2020minibatch,mitra2021linear,gorbunov2021local,mishchenko2022proxskip,pateltowards}. 
Nevertheless, so far we have been missing a local approach that provably benefits over \mbs.  
Note that \cite{mitra2021linear,mishchenko2022proxskip} improve the communication complexity with respect to the condition number of the objective; However their performance  does not improve as we increase the number of machines $M$~\footnote{This implies that such methods do not obtain a wall-clock speedup as we increase the number of machines $M$.}, which is inferior to the minibatch baseline. 

The heterogeneous non-convex setting was also extensively explored \citep{karimireddy2020scaffold,karimireddy2020mime,gorbunov2021local};  and the recent work of \citep{pateltowards} has developed a novel algorithm that provably benefits over the minibatch baseline in this case. The latter work also provides a lower bound which demonstrates that their upper bound is almost tight.
Finally, for the special case of quadratic loss functions, it was shown in  \citep{woodworth2020local} and in \citep{karimireddy2020scaffold}  that it is possible to surpass the   minibatch baseline.

It is important to note that excluding the special case of quadratic losses, there does not exist a local update  algorithm that provably benefits over \emph{accelerated}-\mbs~\citep{dekel2012optimal}. And the latter applies to both homogeneous and heterogeneous SCO problems.

Our local update algorithm utilizes a recent technique of employing slowly changing query points in SCO problems~\citep{cutkosky2019anytime}. The latter has shown to be useful in designing universal accelerated methods~\citep{kavis2019unixgrad,ene2021adaptive,antonakopoulos2022undergrad}, as well as in improving  asynchronous training methods~\citep{aviv2021asynchronous}.


\section{Setting: Parallel Stochastic Optimization}
\label{sec:2}
We consider Parallel stochastic optimization problems where the objective $f:\reals^d\mapsto\reals$ is convex and is of the following form,
$$
 f(x): =\frac{1}{M}\sum_{i\in[M]}f_i(x):= \frac{1}{M}\sum_{i\in[M]} \E_{z^i\sim\D_i} f_i(x;z^i)~.
$$
Thus, the objective is an average of $M$ functions $\{f_i:\reals^d\mapsto \reals\}_{i\in[M]}$, and  each such $f_i(\cdot)$ can be written as an expectation over losses $f_i(\cdot,z^i)$ where the $z^i$ are drawn from some distribution $\D_i$
which is unknown to the learner. For ease of notation, in what follows we will not explicitly denote  $\E_{z^i\sim\D_i}$ but rather use $\E$  to denote the expectation w.r.t.~all  randomization.

We assume that there exist $M$ machines (computation units), and that each machine may independently draw  samples from the distribution $\D_i$, and can therefore compute unbiased gradient estimates  to the gradients of $f_i(\cdot)$.
Most commonly, we allow the machines to synchronize  during $R$ communication rounds through a central machine called the \emph{Parameter Server ($\PS$)}; and allow each machine to perform $K$ local computations in every such communication round. 

We consider first order optimization methods that iteratively employ   samples and  generate a sequence of query points and eventually output a solution ${x}_{\rm output}$. Our performance measure is the expected excess loss,
$
\textbf{ExcessLoss}: = \E[f({x}_{\rm output}) ] - f(w^*)~,
$
where the expectation is w.r.t.~the randomization of the  samples, and $w^*$ is a global minimum of $f(\cdot)$ in $\reals^d$, i.e.,
$w^*\in\argmin_{x\in\reals^d}f(x)$.

More concretely, at every computation step, each machine 
$i\in[M]$ may draw a fresh sample $z^i\sim\D_i$, and compute a gradient estimate $g$ at a given point $x\in\reals^d$ as follows,
$
g: = \nabla f_i(x,z^i)~.
$
and note that $\E[g\vert x] = \nabla f_i(x)$, i.e.~$g$ is an ubiased estimate of $\nabla f_i(x)$.

 \paragraph{General Parallelization Scheme.}
 A general  scheme for parallel stochastic optimization is  described in Alg.~\ref{alg:ParScheme}. It can be seen that the $\PS$ communicates with the machines along $R$ communication rounds. In every round $r\in[R]$ the $\PS$ distributes an anchor point $\Theta_r$ which is a starting point for the local computations in that round. 
 Based on $\Theta_r$ each machine performs $K$ local gradient computations based on $K$  i.i.d.~draws from $\D_i$, and yields a message $\Phi_r^{i}$. At the end of  round $r$ the $\PS$ aggregates the messages from all machines and updates the anchor point $\Theta_{r+1}$. Finally, after the last round, the $\PS$ outputs ${x}_{\rm output}$, which is computed based on the anchor points $\{\Theta_{r}\}_{r=1}^R$.

\begin{algorithm}[t]
\caption{Parallel Stochastic Optimization Template}\label{alg:ParScheme}
\begin{algorithmic}
    \STATE {Input:} $M$ machines, Parameter Server $\PS$, $\#$Communication rounds $R$,
     $\#$Local computations $K$, initial point $x_0$
    \STATE $\PS$ Computes initial anchor point $\Theta_0$ using $x_0$
    \FOR{$r=0,\ldots,R-1$}
    \vspace{0.15cm}
    \STATE \textbf{Distributing anchor:} $\PS$ distributes anchor $\Theta_r$ to all M machines
    \STATE \textbf{Local Computations:} Each machine $i\in[M]$ performs $K$ local gradient computations based on $K$ i.i.d.~draws from $\D_i$, and yields a message $\Phi_r^{i}$
    \STATE  \textbf{Aggregation:} $\PS$ aggregates $\{\Phi_r^{i}\}_{i\in[M]}$ from all machines, and  computes a new anchor $\Theta_{r+1}$
\ENDFOR
 \STATE \textbf{output:} $\PS$ computes ${x}_{\rm output}$ based on $\{\Theta_{r}\}_{r=1}^R$
\end{algorithmic}
\end{algorithm}

Ideally, one would hope that using $M$ machines in parallel will enable to accelerate the learning process by a factor of $M$.
And there exists a rich line of works that have shown that this is indeed possible to some extent, depending on $K,R$, and on the parallelization algorithm.

Next, we describe the two  most prominent approaches to first-order Parallel Optimization, 

\vspace{-7pt}
\paragraph{(i) Minibatch SGD:} In terms of Alg.~\ref{alg:ParScheme}, one can describe Minibatch-SGD as an algorithm in which the $\PS$ sends a weight vector $x_r\in\reals^d$ in every round as the anchor point $\Theta_r$.
Based on that anchor $\Theta_r: = x_r$, each machine $i$ computes an unbiased gradient estimate based on $K$ independent samples from $\D_i$, i.e.~$g_r^{i}: = \frac{1}{K}\sum_{k=1}^K \nabla f_i(x_r,z_{Kr + k}^{i})$, and communicates $g_r^{i}$ as the message $\Phi_r^{i}$ to the $\PS$. The latter aggregates the messages $\{\Phi_r^{i}:= g_r^{i}\}_{i\in[M]}$ and compute the next anchor point $x_{r+1}$,
$$
x_{r+1} = x_r -\eta\cdot \frac{1}{M}\sum_{i\in[M]}g_r^{i}~,
$$
where $\eta>0$ is the learning rate of the algorithm. The benefit in this approach is that all machines always compute
 gradient estimates  at the  same anchor points $\{x_r\}_r$, which  highly simplifies its analysis. On the downside, in this approach the number of gradient updates $R$ is smaller compared to the number of stochastic gradient computations made by each machine which is $KR$. This gives the hope that there is room to improve upon Minibatch SGD, by mending this issue.

\vspace{-7pt}
\paragraph{(ii) Local SGD:} In terms of Alg.~\ref{alg:ParScheme}, one can describe Local-SGD as an algorithm in which the $\PS$ sends a weight vector $x_{rK}\in\reals^d$ in every round $r\in[R]$ as the anchor information $\Theta_r$.
Based on the anchor $\Theta_r: = x_{rK}$, each machine performs a sequence of local gradient updates based on $K$ independent samples from $\D_i$ as follows, $\forall k\in[K]$,
$$
x_{rK + k+1}^{i} = x_{rK + k}^{i} - \eta \cdot\nabla f_i(x_{rK + k}^{i},z_{rK + k}^{i})~,
$$ 
where for all machines $i\in[M]$ we initialize $x_{rK}^{i} = x_{rK}: = \Theta_r$, and $\eta>0$ is the learning rate of the algorithm. At the end of round $r$ each machine communicates $x_{(r+1)K}^{i}$ as the message $\Phi_{r}^{i}$ to the $\PS$ and the latter computes the next anchor as follows,
$$
\Theta_{r+1}:=x_{(r+1)K} = \frac{1}{M}\sum_{i\in[M]} x_{(r+1)K}^{i} ~.
$$
In local SGD the number of gradient steps is equal to the number of stochastic gradient computations made by each machine which is $KR$. The latter suggests that such an approach may potentially surpass Minibatch SGD. Nevertheless, this potential benefit is hindered by the bias that is introduced between different machines during the local updates.
And indeed, as we show  in Table~\ref{table1}, this approach is  inferior  to Minibatch SGD in the prevalent case where $\sigma=O(1)$.

\vspace{-3pt}
\textbf{Assumptions.} We  assume that $f(\cdot)$ is convex, and that the $f_i(\cdot)$ are smooth i.e.~$\exists L>0$ such, 
$$
\|\nabla f_i(x)-\nabla f_i(y)\|\leq L\|x-y\|~,~~\forall i\in[M]~,~\forall x,y\in\reals^d
$$
We also  assume that variance of the gradient estimates is bounded, i.e.~that there exists $\sigma>0$ such,
$$
\E\| \nabla f_i(x;z)-\nabla f_i(x)\|^2 \leq \sigma^2~,~ \forall x\in\reals^d~,~\forall i\in[M]~.
$$ 
Letting $w^*$ be a global minimum of $f(\cdot)$, we assume there exist $G_* \geq 0$ such that,
\begin{align}\label{eq:GStardis}
    \frac{1}{M}\sum_{i\in[M]} \|\nabla f_i(w^*)\|^2 \leq G_*^2/2~,~~ \textbf{($G_*$-Dissimilarity)}
\end{align}
The above assumption together with the smoothness and convexity imply (see App.~\ref{app:Diss}) ,
\begin{align}\label{eq:NonHom}
\frac{1}{M}\sum_{i\in[M]} \|\nabla f_i(x)\|^2 \leq G_*^2 + 4L(f(x)-f(w^*))~,\quad \forall x\in\reals^d
\end{align}
A stronger dissimilarity assumption that is often used in the literature is the following,
\begin{align}\label{eq:Gdis}
    \frac{1}{M}\sum_{i\in[M]} \|\nabla f_i(x)-\nabla f(x)\|^2 \leq G^2/2~,~~\forall x\in\reals^d~~ \textbf{($G$-Dissimilarity)}
\end{align}
\textbf{Notation:} For $\{y_t\}_t$ we  denote $y_{t_1: t_2}: = \sum_{\tau=t_1}^{t_2} y_\tau$. For $N\in \mathbb{Z}^{+}$  we denote $[N]: = \{0,\ldots,N-1\}$. 

\vspace{-4pt}
\section{Our Approach}
\vspace{-3pt}
Section~\ref{sec:Anytime}  describes a basic (single machine) algorithmic template called  Anytime-GD. Section~\ref{sec:sALG}  describes our \sAlg~algorithm, which is a Local-SGD style algorithm in the spirit of Anytime GD.  We describe our method in Alg.~\ref{alg:ParSchemeLoc}, and state its guarantees in Thm.~\ref{thm:Main}. 
\subsection{Anytime GD}
\label{sec:Anytime}
\vspace{-3pt}
The standard GD algorithm computes a sequence of iterates $\{w_t\}_{t\in[T]}$ and queries the gradients at theses iterates. 
 It was recently shown that one can design a GD-style scheme that computes a sequence of iterates $\{w_t\}_{t\in[T]}$ yet queries the gradients at a \emph{different} sequence $\{x_t\}_{t\in[T]}$ which may be  \emph{slowly-changing}, in the sense that $\|x_{t+1} -x_t\|$ may be considerably smaller than $\|w_{t+1} -w_t\|$.

Concretely,  the Anytime-GD algorithm~\citep{cutkosky2019anytime,kavis2019unixgrad} that we describe in Equations~\eqref{eq:Any111} and \eqref{eq:XwW}, employs a learning rate $\eta>0$ and a sequence of non-negative weights $\{\alpha_t\}_t$. The algorithm maintains two sequences $\{w_t\}_{t}, \{x_t\}_{t}$ that are updated as follows $\forall t$,
\begin{align}\label{eq:Any111}
    w_{t+1} = w_t-\eta \alpha_t g_t~,\forall t\in[T]~,\text{where~} g_t = \nabla f(x_t)~,
\end{align}
and then,
\begin{align} \label{eq:XwW}
x_{t+1} = \frac{\alpha_{0:t}}{\alpha_{0:t+1}}x_t + \frac{\alpha_{t+1}}{\alpha_{0:t+1}}w_{t+1}~.
\end{align}
It can be shown that the above implies that $x_{t+1} = \frac{1}{\alpha_{0:t+1}}\sum_{\tau=0}^{t+1} \alpha_\tau w_\tau$, i.e. the $x_t$'s are weighted averages of the $w_t$'s.
Thus, at every iterate the gradient $g_t$ is queried at $x_t$ which is a weighted average of past iterates, and then $w_{t+1}$ is updated similarly to GD with a weight $\alpha_t$ on the gradient $g_t$.  Moreover, at initialization $x_0=w_0$.

Curiously, it was shown in \cite{cutkosky2019anytime} that Anytime-GD obtains the same convergence rates as GD for convex loss functions (both smooth and non-smooth). It was further shown and that one can employ a stochastic version (Anytime-SGD) where we query noisy gradients at $x_t$ instead of the exact ones, and that approach performs similarly to SGD. \\
\textbf{Slowly changing query points.} A recent work~\citep{aviv2021asynchronous}, demonstrates that if we use projected Anytime-SGD, i.e. project the $w_t$ sequence to a given bounded convex domain; then one can immediately show that for both $\alpha_t =1$ and $\alpha_{t}=t+1$ we obtain $\|x_{t+1}-x_t\|\leq 2D/t$, where $D$ is the diameter of the convex domain. 
Conversely, for standard SGD we have 
$\|w_{t+1}-w_t\| \leq \eta \|g_t\|$, where $g_t$ here is a (possibly noisy) unbiased estimate of $\nabla f(w_t)$. Thus, while the change between consecutive SGD queries is controlled by 
$\eta$ which is usually $\propto 1/\sqrt{t}$, and by magnitude of stochastic gradients; for Anytime-SGD the change decays with time, irrespective of the learning rate $\eta$. In~\citep{aviv2021asynchronous}, this is used to design better and more robust asynchronous training methods.\\
\textbf{Relation to Momentum.} In the appendix we show that Anytime-SGD can be explicitly written as a momentum method, and therefore is quite different from standard SGD. Concretely, for $\alpha_t=1$ we show that $x_{t+1}\approx x_t - \eta \sum_{\tau=1}^t (\tau/t^2) \cdot g_\tau$, and for  $\alpha_t\propto t$ we show that $x_{t+1}\approx x_t - \eta \sum_{\tau=1}^t (\tau/t)^3 \cdot g_\tau$. Where $g_\tau$ here is a (possibly noisy) unbiased estimate of  $\nabla f(x_\tau)$.
This momentum interpretation provides a complementary intuition regarding the benefit of Anytime-SGD in the context of local update methods. Momentum brings more stability to the optimization process which in turn reduces the bias between different machines.

For the sake of this paper we will require a specific theorem that does not necessarily regard Anytime-GD, but is rather more general. We will require the following definition,
\begin{definition}
Let $\{\alpha_t\geq 0\}_t$ be a sequence of non-negative weights, and let $\{w_t\in\reals^d\}_{t}$, be an arbitrary sequence.  We say that a sequence $\{x_t\in\reals^d\}_{t}$ is an $\{\alpha_t\}_t$ weighted average of $\{w_t\}_{t}$ if $x_0=w_0$, and for any $t>0$  Eq.~\eqref{eq:XwW} is satisfied.
\end{definition}
Next, we  state the main theorem for this section, which applies for any sequence $\{w_t\in\reals^d\}_{t}$,
\begin{theorem}[Rephrased from Theorem~1 in \cite{cutkosky2019anytime}] \label{theo:Anytime}
Let $f:\reals^d\mapsto \reals$ be a convex function with a global minimum $w^*$.
Also let $\{\alpha_t\geq 0\}_t$, and $\{w_t\in\reals^d\}_{t},\{x_t\in\reals^d\}_{t}$ such that $\{x_t\}_{t}$ is an $\{\alpha_t\}_t$ weighted average of $\{w_t\}_{t}$. Then the following holds for any $t\geq 0$,
$$
0\leq \alpha_{0:t}\left(f(x_t) - f(w^*) \right) \leq \sum_{\tau=0}^t \alpha_\tau \nabla f(x_\tau)\cdot (w_\tau-w^*)~.
$$
\end{theorem}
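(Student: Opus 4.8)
The left inequality is immediate and requires no work: $\alpha_{0:t}=\sum_{\tau=0}^t\alpha_\tau\geq 0$ as a sum of non-negative weights, and $f(x_t)\geq f(w^*)$ because $w^*$ is a global minimum of $f$, so the product is non-negative. The entire content lies in the right inequality, which I would establish by induction on $t$.

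For the base case $t=0$, the definition of a weighted average gives $x_0=w_0$ and $\alpha_{0:0}=\alpha_0$, so the claim collapses to $\alpha_0\big(f(x_0)-f(w^*)\big)\leq \alpha_0\,\nabla f(x_0)\cdot(x_0-w^*)$, which is precisely the convexity (gradient) inequality for $f$ scaled by the nonnegative weight $\alpha_0$. The inductive hypothesis is that the bound holds at step $t$, and I want to propagate it to $t+1$. The tempting but wrong move here is to apply Jensen's inequality to the convex combination in \eqref{eq:XwW}: that would introduce a term $\nabla f(w_{t+1})$, whereas the target bound must feature \emph{only} gradients evaluated at the slowly-changing query points $x_\tau$. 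Avoiding this is the crux.

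Instead I would rewrite \eqref{eq:XwW} in un-normalized form $\alpha_{0:t+1}x_{t+1}=\alpha_{0:t}x_t+\alpha_{t+1}w_{t+1}$, solve for $\alpha_{t+1}w_{t+1}$, and substitute into the single new summand $\alpha_{t+1}\nabla f(x_{t+1})\cdot(w_{t+1}-w^*)$. Using $\alpha_{0:t+1}=\alpha_{0:t}+\alpha_{t+1}$, this rearranges exactly into
$$
\alpha_{t+1}\nabla f(x_{t+1})\cdot(w_{t+1}-w^*)\eq \alpha_{0:t}\,\nabla f(x_{t+1})\cdot(x_{t+1}-x_t)+\alpha_{t+1}\,\nabla f(x_{t+1})\cdot(x_{t+1}-w^*)~.
$$
Both inner products are now anchored at the single query point $x_{t+1}$, so I apply the gradient inequality to each, namely $\nabla f(x_{t+1})\cdot(x_{t+1}-x_t)\geq f(x_{t+1})-f(x_t)$ and $\nabla f(x_{t+1})\cdot(x_{t+1}-w^*)\geq f(x_{t+1})-f(w^*)$. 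Collecting the $f(x_{t+1})$ coefficients via $\alpha_{0:t}+\alpha_{t+1}=\alpha_{0:t+1}$ gives
$$
\alpha_{t+1}\nabla f(x_{t+1})\cdot(w_{t+1}-w^*)\ge \alpha_{0:t+1}f(x_{t+1})-\alpha_{0:t}f(x_t)-\alpha_{t+1}f(w^*)~.
$$
Adding the inductive hypothesis $\sum_{\tau=0}^t\alpha_\tau\nabla f(x_\tau)\cdot(w_\tau-w^*)\geq\alpha_{0:t}\big(f(x_t)-f(w^*)\big)$ to this last display, the $\alpha_{0:t}f(x_t)$ terms cancel and the $f(w^*)$ terms combine into $\alpha_{0:t+1}f(w^*)$, leaving exactly $\alpha_{0:t+1}\big(f(x_{t+1})-f(w^*)\big)$ on the right, which closes the induction.

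The main obstacle is conceptual, not computational: recognizing that one must exploit the defining recursion \eqref{eq:XwW} to split the newly added gradient term at $x_{t+1}$ into an $(x_{t+1}-x_t)$ piece and an $(x_{t+1}-w^*)$ piece. This telescoping is precisely what keeps every gradient evaluated at a query point $x_\tau$ and what makes the $\alpha_{0:t}f(x_t)$ terms cancel across consecutive steps of the induction. Once this decomposition is in hand, everything else is routine bookkeeping with the partial sums $\alpha_{0:t}$.
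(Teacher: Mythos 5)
Your proof is correct and is essentially the paper's own argument recast as an induction: the identity $\alpha_{t+1}(w_{t+1}-w^*) = \alpha_{0:t}(x_{t+1}-x_t) + \alpha_{t+1}(x_{t+1}-w^*)$ you extract from the recursion is just a rearrangement of the relation $\alpha_t(x_t-w_t)=\alpha_{0:t-1}(x_{t-1}-x_t)$ that the paper uses, and both arguments then invoke the gradient inequality twice --- once anchored at $w^*$ and once between consecutive query points $x_{t}, x_{t+1}$. The only difference is presentational: the paper sums over $\tau$ and telescopes, while you unroll the same cancellation one step at a time through the inductive hypothesis.
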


\vspace{-5pt}
\subsection{\sAlg}
\label{sec:sALG}
Our approach  is to employ an Anytime version of Local-SGD, which we  name  by \sAlg. \\
\textbf{Notation:}
Prior to describing our algorithm we will define $t$ to be the total of \emph{per-machine} local updates up to step $k$ of round $r$, resulting $t := rK+k$.  In what follows, we will often find it useful to denote the iterates and samples using $t$, rather than explicitly denoting $t=rK+k$.
Additionally we  use $\{\alpha_t\}_{t}$ to denote a pre-defined sequence of  non-negative weights. Finally, we  denote $T:=RK$.

\begin{algorithm}[t]
\caption{\sAlg}\label{alg:ParSchemeLoc}
\begin{algorithmic}
    \STATE {Input:} $M$ machines, Parameter Server $\PS$, $\#$Communication rounds $R$,
     $\#$Local computations $K$, initial point $x_0$, learning rate $\eta>0$, weights $\{\alpha_t\}_t$
     \vspace{-10pt}
    \STATE \STATE \textbf{Initialize:} set $w_0=x_0$,  initialize anchor point $\Theta_0:= (w_0,x_0)$, and set $t=0$
    \FOR{$r=0,\ldots,R-1$}
    \vspace{0.15cm}
    \STATE \emph{Distributing anchor:} $\PS$ distributes anchor $\Theta_r: = (w_t,x_t)$ to all machines, each machine $i\in[M]$ initializes $(w_t^i,x_t^i)= \Theta_r: = (w_t,x_t)$
     \FOR{$k=0,\ldots,K-1$}
     \STATE Set $t = rK+k$
    \STATE  Every machine  $i\in[M]$ draws a fresh sample $z_t^i\sim\D_i$, and computes
    $g_t^i = \nabla f_i(x_t^i,z_t^i)$
    \STATE Update $w_{t+1}^i = w_t^i - \eta\alpha_t g_t^i$, and $x_{t+1}^i = (1-\frac{\alpha_{t+1}}{\alpha_{0:t+1}} )x_{t}^{i} + \frac{\alpha_{t+1}}{\alpha_{0:t+1}}w_{t+1}^i$
  \ENDFOR
    \STATE  \emph{Aggregation:} $\PS$ aggregates $\{(w_{t+1}^i,x_{t+1}^i)\}_{i\in[M]}$ from all machines, and  computes a new anchor $\Theta_{r+1}:=(w_{t+1},x_{t+1})= \left(\frac{1}{M}\sum_{i\in[M]}w_{t+1}^i, \frac{1}{M}\sum_{i\in[M]}x_{t+1}^i\right)$
\ENDFOR
 \STATE \textbf{output:} $\PS$ outputs $x_T$ (recall $T=KR$)  
\end{algorithmic}
\end{algorithm}

In the spirit of Anytime-SGD our approach is to maintain two sequences per machine $i\in[M]$: $\{w_t^i\in\reals^d\}_{t}$ and
$\{x_t^i\in\reals^d\}_{t}$. Our approach is depicted explicitly in 
Alg.~\ref{alg:ParSchemeLoc}. Next we describe our algorithm in terms of the scheme depicted in
Alg.~\ref{alg:ParScheme}:\\
\textbf{(i)~Distributing anchor.} At the beginning of round $r$ the $\PS$ distributes $\Theta_r = (w_t,x_t)= (w_{rK},x_{rK})\in\reals^d\times \reals^d$ to all machines.\\
\textbf{(ii)~Local Computations.} For $t=rK$, every machine initializes $(w_t^i,x_t^i) = \Theta_r$, and for the next $K$ rounds, i.e.~for any $rK\leq t\leq (r+1)K-1$, every machine performs a sequence of local Anytime-SGD steps as follows,
\begin{align} \label{eq:AnyParIter}
w_{t+1}^i = w_{t}^i - \eta \alpha_t g_t^i~,
\end{align}
where similarly to Anytime-SGD we query the gradients at the averages $x_t^i$, meaning
$
g_t^i = \nabla f_i(x_t^i,z_t^i)~.
$
 And  query points are updated as weighted averages of past iterates $\{w_t\}_t$, , 
\begin{align} \label{eq:AnyParAverage}
x_{t+1}^i = (1-\frac{\alpha_{t+1}}{\alpha_{0:t+1}} )x_{t}^{i} + \frac{\alpha_{t+1}}{\alpha_{0:t+1}}w_{t+1}^i~,\qquad
\forall ~rK\leq t\leq (r+1)K-1~.
\end{align} 
At the end  round $r$, i.e.~ $t= (r+1)K$, each machine communicates $(w_{t}^i,x_t^i)$ as a message to the $\PS$.\\
\textbf{(iii)~Aggregation.} The $\PS$ aggregates the messages and computes the next anchor point 
$\Theta_{r+1} = (w_{t},x_{t}) = \frac{1}{M}\sum_{i\in[M]} \Phi_r^{i}: = \left(\frac{1}{M}\sum_{i\in[M]} w_{t}^i,\frac{1}{M}\sum_{i\in[M]} x_{t}^i \right)$, where $t=(r+1)K$.\\
\textbf{Remark:} Note that for $t=rK$ our notation for $(w_t^i,x_t^i)$ is inconsistent: at the end of round $r-1$ these values may vary between different machines, while at the beginning of round $r$ these values are all equal to $\Theta_r:= (w_t,x_t)$.
Nevertheless, for simplicity we will abuse notation, and explicitly state the right definition when needed.
Importantly, in most of our analysis we will mainly need to refer to the averages  $\left(\frac{1}{M}\sum_{i\in[M]} w_{t}^i,\frac{1}{M}\sum_{i\in[M]} x_{t}^i \right)$, and note the latter are consistent at the end and beginning of consecutive rounds due to the definition of $\Theta_r$, and $\Phi_{r-1}^{i}$.

\vspace{-3pt}
\subsubsection{Guarantees  \& Intuition}
\label{sec:IntuitionTextBody}
Below we  state our main result for \sAlg~(Alg.~\ref{alg:ParSchemeLoc}),
\begin{theorem}\label{thm:Main}
Let $f(\cdot)$ be a convex and $L$-smooth function. Then under the assumption that we make in Sec.~\ref{sec:2}, invoking Alg.~\ref{alg:ParSchemeLoc} with  weights $\{\alpha_t = t+1\}_{t\in[T]}$ , and an appropriate learning rate $\eta$ 
ensures,
\begin{align*}
\small \E\Delta_T ~
\leq  ~O\left( L B_0^2\left(\frac{1}{KR} + \frac{1}{R^2} + \frac{1}{K^{1/3}R^{4/3}}\right)
+ \frac{\sigma B_0}{\sqrt{MKR}}
 +
 \frac{L^{1/2}(\sigma^{1/2}+G_*^{1/2})\cdot B_0^{3/2}}{K^{1/4}R}
 \right)~,
\end{align*}
where $\Delta_T:=f(x_T) - f(x^*)$, $B_0: = \|w_0-w^*\|$, and  we choose the learning rate as follows,
{\small
\begin{align}\label{eq:LRchoice}
 \eta = 
\min\left\{
\frac{1}{48L(T+1)}, 
\frac{1}{10 L K^2},
\frac{1}{40L K (T+1)^{2/3}},
\frac{\|w_0-w^*\|\sqrt{M}}{\sigma T^{3/2}}, 
\frac{\|w_0-w^*\|^{1/2}}{L^{1/2}K^{7/4}R(\sigma^{1/2}+G_*^{1/2})}
\right\}
\end{align}
}
\end{theorem}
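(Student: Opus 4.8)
The plan is to track the \emph{averaged} iterates $\overline{w}_t := \frac{1}{M}\sum_{i}w_t^i$, $\overline{x}_t := \frac{1}{M}\sum_i x_t^i$, and $\overline{g}_t := \frac{1}{M}\sum_i g_t^i$. Since both the $w$- and $x$-updates in Alg.~\ref{alg:ParSchemeLoc} are affine and averaging commutes with them (including across the aggregation steps, where the $\PS$ resets every machine to the common average), these averaged sequences satisfy the single-machine Anytime recursion $\overline{w}_{t+1} = \overline{w}_t - \eta\alpha_t\overline{g}_t$ together with Eq.~\eqref{eq:XwW}. Consequently $\{\overline{x}_t\}$ is an $\{\alpha_t\}$-weighted average of $\{\overline{w}_t\}$, and Theorem~\ref{theo:Anytime} applied to the global objective $f$ at $t=T$ yields the deterministic bound $\alpha_{0:T}\Delta_T \le \sum_{t=0}^{T}\alpha_t\nabla f(\overline{x}_t)\cdot(\overline{w}_t - w^*) =: \Gamma_T$. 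The whole proof then amounts to bounding $\E\Gamma_T$.

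First I would split $\nabla f(\overline{x}_t) = \overline{g}_t + b_t - \xi_t$, where $b_t := \nabla f(\overline{x}_t) - \frac1M\sum_i\nabla f_i(x_t^i)$ is the \emph{bias} introduced by local updates and $\xi_t := \overline{g}_t - \frac1M\sum_i\nabla f_i(x_t^i)$ is the \emph{noise}. For the $\overline{g}_t$-part I would use the standard identity $\alpha_t\overline{g}_t\cdot(\overline{w}_t - w^*) = \frac{1}{2\eta}(\|\overline{w}_t - w^*\|^2 - \|\overline{w}_{t+1}-w^*\|^2) + \frac{\eta\alpha_t^2}{2}\|\overline{g}_t\|^2$, which telescopes to $\frac{B_0^2}{2\eta} + \frac{\eta}{2}\sum_t\alpha_t^2\|\overline{g}_t\|^2$. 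The noise part vanishes in expectation, because $\overline{w}_t$ is measurable with respect to the samples drawn before step $t$ while $\E[\xi_t\mid\text{past}]=0$ (independence across machines also gives $\E\|\xi_t\|^2\le\sigma^2/M$, the source of the linear-speedup term). This leaves two genuinely coupled error sources: the second-moment sum $\frac{\eta}{2}\sum_t\alpha_t^2\E\|\overline{g}_t\|^2$ and the bias sum $\E\sum_t\alpha_t b_t\cdot(\overline{w}_t - w^*)$. Both reduce, via $L$-smoothness and the $G_*$-dissimilarity consequence~\eqref{eq:NonHom}, to the per-step \emph{consensus drift} $D_t := \frac1M\sum_i\|x_t^i - \overline{x}_t\|^2$ and to the intermediate gaps $\Delta_{\overline{x}_t} := f(\overline{x}_t)-f(w^*)$: concretely $\E\|\overline{g}_t\|^2 \lesssim \sigma^2/M + G_*^2 + L\Delta_{\overline{x}_t} + L^2 D_t$ and $\|b_t\|\le L\sqrt{D_t}$.

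The crux is to bound the drift $D_t$, and this is exactly where the slow query points pay off. Within a round starting at $t_0=rK$ all machines share $x_{t_0}^i=\overline{x}_{t_0}$, so $D_t\le\frac1M\sum_i\|x_t^i - x_{t_0}^i\|^2$, and the Anytime averaging gives the increment $x_{s+1}^i - x_s^i = \frac{\alpha_{s+1}}{\alpha_{0:s+1}}(w_{s+1}^i - x_s^i)$, whose prefactor is $\Theta(1/s)$ for $\alpha_s=s+1$; this damping is what makes the query sequence change far more slowly than a plain Local-SGD iterate and forces $D_t$ down to order $\eta^2$ times within-round gradient energy. I would therefore prove a drift lemma expressing $D_t$ in terms of $\eta^2$, $K$, $\sigma^2$, $G_*^2$ and $\sum_s\Delta_{\overline{x}_s}$. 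Substituting the drift and second-moment bounds back, every error term is either proportional to $\sigma^2/M$ or $G_*^2$ (harmless, giving the noise/dissimilarity terms after optimizing $\eta$) or proportional to $L\cdot(\text{weights})\cdot\Delta_{\overline{x}_t}$. These last function-value terms I would absorb back into the left-hand side using the self-bounding property $\alpha_{0:t}\Delta_{\overline{x}_t}\le\Gamma_t\le\Gamma_T$ that Theorem~\ref{theo:Anytime} grants at every $t$; since $\alpha_t^2/\alpha_{0:t}=\Theta(1)$ for $\alpha_t=t+1$, the total absorbed mass is $\Theta(\eta L T)\,\Gamma_T$, dominated by $\tfrac12\Gamma_T$ precisely when $\eta\lesssim 1/(LT)$ — the origin of the first cap in \eqref{eq:LRchoice}, with the remaining drift-induced function-value terms producing the $1/(LK^2)$ and $1/(LK(T+1)^{2/3})$ caps.

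I expect the main obstacle to be this coupled absorption step: the drift, the bias, and the second-moment term all feed intermediate gaps $\Delta_{\overline{x}_t}$ back into the bound with time-dependent weights, and one must verify that under each of the five learning-rate caps the corresponding error contribution is dominated by $\tfrac12\Gamma_T$ (or by $B_0^2/2\eta$) rather than accumulating. A secondary technical point is controlling $\|\overline{w}_t - w^*\|$ inside the bias inner product $\alpha_t b_t\cdot(\overline{w}_t-w^*)$; I would handle it by a Young split trading $\|b_t\|^2\sim L^2 D_t$ against $\|\overline{w}_t - w^*\|^2$, routing the latter through the same telescoping distance terms. Once all error terms collapse, the bound reads $\E\Delta_T\le O(B_0^2/(\eta\,\alpha_{0:T}))$ with $\alpha_{0:T}=\Theta(K^2R^2)$; plugging the five-way minimum \eqref{eq:LRchoice} for $\eta$ makes $B_0^2/(\eta\alpha_{0:T})$ equal, term by term, to the five summands in the stated rate, completing the proof.
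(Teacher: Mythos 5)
Your overall architecture coincides with the paper's proof: pass to the machine-averages, observe that they satisfy the Anytime recursion so Theorem~\ref{theo:Anytime} applies, split $\nabla f(\overline{x}_t)$ into the update direction $\overline{g}_t$, a martingale noise term (vanishing against $\overline{w}_t-w^*$ in expectation, with second moment $\sigma^2/M$), and a consensus bias, bound the update part by the regret lemma, handle the bias by a Young split against the iterate distances, and absorb the accumulated function-value terms under the step-size caps. All of this is exactly how the paper proceeds. The gap is in the step you yourself call the crux --- the drift lemma --- and as sketched it fails.

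Concretely, the relaxation $D_t \le \frac{1}{M}\sum_i\|x_t^i - x_{t_0}^i\|^2$ discards precisely the cancellation the result depends on. Within a round, $x_t^i - x_{t_0} = \frac{1}{\alpha_{0:t}}\sum_{\tau=t_0+1}^{t}\alpha_\tau (w_\tau^i - x_{t_0})$, and $w_\tau^i - x_{t_0}$ contains the machine-independent term $w_{t_0}-x_{t_0}$, which is \emph{not} $O(\eta)$-small: it is the gap between the iterate and its running weighted average, generically of order $\|w_0-w^*\|$. Hence your bound on $D_t$ is at least of order $\bigl(\frac{\alpha_{t_0+1:t}}{\alpha_{0:t}}\bigr)^2\|w_{t_0}-x_{t_0}\|^2 \approx \frac{K^2}{t^2}\|w_{t_0}-x_{t_0}\|^2$; this measures the \emph{common} motion of all query points during the round, not inter-machine drift, and it cannot be pushed to order $\eta^2$. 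Feeding it into $\|b_t\|\le L\sqrt{D_t}$ yields a bias contribution of order $LK B_0^2 T/\alpha_{0:T} \approx L B_0^2/R$, i.e.\ no better than Minibatch-SGD, which destroys the $K^{-1/4}$-type gains the theorem asserts. The same conflation appears in your increment argument: in $x_{s+1}^i - x_s^i = \frac{\alpha_{s+1}}{\alpha_{0:s+1}}(w_{s+1}^i - x_s^i)$ the prefactor is indeed $\Theta(1/s)$, but the factor $w_{s+1}^i - x_s^i$ is not $O(\eta)$; ``slowly changing queries'' is not the same as ``small consensus drift.'' The paper's Lemma~\ref{lem:VtBoundFin} avoids this by expanding \emph{pairwise} differences, $x_t^i - x_t^j = \frac{1}{\alpha_{0:t}}\sum_{\tau=t_0+1}^{t}\alpha_\tau(w_\tau^i - w_\tau^j)$, in which both common round-start values $x_{t_0}$ and $w_{t_0}$ cancel exactly, leaving only $\eta$-weighted sums of gradient differences; these are then controlled by a martingale argument for the noise, the dissimilarity consequence~\eqref{eq:NonHom} plus smoothness for the heterogeneity, and a self-bounding recursion (Lemma~\ref{lem:Tech2}) valid under the cap $\eta\lesssim 1/(LK^2)$. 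If you replace your frozen reference $x_{t_0}$ by the moving average $\overline{x}_t$ (equivalently, work with $w_\tau^i - \overline{w}_\tau$, so $w_{t_0}$ cancels), your outline becomes the paper's proof. A secondary flaw: your absorption step invokes $\alpha_{0:t}\Delta_{\overline{x}_t}\le\Gamma_t\le\Gamma_T$, but $\Gamma_t$ is not monotone in $t$ (its summands $\nabla f(\overline{x}_\tau)\cdot(\overline{w}_\tau-w^*)$ are not sign-definite); the paper instead establishes the master inequality for every $t\in[T]$ and closes the loop with Lemma~\ref{lem:Generic} applied to $A_t=\alpha_{0:t}\E\Delta_t$.
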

As Table~\ref{table1} shows, Thm.~\ref{thm:Main} implies that \sAlg~ improves over all existing upper bounds for Minibatch and Local SGD, by allowing  less communication rounds to obtain a linear speedup of $M$. 

\vspace{-3pt}
\textbf{Intuition.} 
The degradation in local SGD schemes (both standard and Anytime) is due to the bias that it introduces between different machines during each round,
which leads to a bias in their gradients.  Intuitively, this bias is small if the machines query the gradients at a sequence of slowly changing query points. This is exactly the benefit of \sAlg~ which queries the gradients at averaged iterates $x_t^i$'s. Intuitively these averages are   slowly changing compared to the iterates themselves $w_t^i$; and recall that the latter are the query points  used by standard Local-SGD.  A complementary intuition to the benefit of our approach, is the interpretation of Anytime-SGD as a momentum method (see Sec.~\ref{sec:Anytime} and the appendix) which  leads to decreased bias between machines.

To further simplify the more technical discussion here, we will assume the homogeneous case, i.e., that for any $i\in[M]$ we have $\D_i = \D$ and $f_i(\cdot)=f(\cdot)$.

So a bit more formally, let us discuss the bias between query points in a given round $r\in[R]$, and let us denote $t_0 = rK$.  The following  holds for standard \textbf{Local SGD},
\begin{align}\label{eq:Intui0}
w_{t}^i  = w_{t_0} -\eta \sum_{\tau=t_0}^{t-1} g_{\tau}^i~,~~\forall i\in[M], t\in[t_0,t_0+K]~.
\end{align}
where $g_\tau^i$ is the noisy gradients that Machine $i$ computes in $w_\tau^i$, and we can write $g_\tau^i: = \nabla f(w_\tau^i) +\xi_\tau^i$, where $\xi_\tau^i$ is the noisy component of the gradient.
Thus, for two  machines $i\neq j$ we can write,
\begin{align*}
\E\|  w_{t}^i -w_{t}^j\|^2 = \eta^2 \E\left\|  \sum_{\tau=t_0}^{t-1} g_{\tau}^i- g_{\tau}^j\right\|^2
 \approx  \eta^2 \E\left\|  \sum_{\tau=t_0}^{t-1}  \nabla f(w_\tau^i)-  \nabla f(w_\tau^j)\right\|^2+
\eta^2 \E\left\|  \sum_{\tau=t_0}^{t-1}\xi_\tau^i -\xi_\tau^j  \right\|^2
\end{align*}
And it was shown in~\cite{woodworth2020local}, that the noisy term is dominant and therefore
we can bound,
\begin{align}\label{eq:Intui1}
\frac{1}{\eta^2}\E\| w_{t}^i -w_{t}^j\|^2 &\lessapprox
 \E\left\|  \sum_{\tau=t_0}^{t-1}\xi_\tau^i -\xi_\tau^j  \right\|^2 \approx t-t_0 \leq  K~.
\end{align}
Similarly,  for \textbf{\sAlg} we would like to bound $\E\| x_{t}^i -x_{t}^j\|^2$ for two machines $i\neq j$; and in order to simplify the discussion we will assume uniform weights i.e.,~$\alpha_t = 1~,~\forall t\in[T]$.
Now the update rule for the iterates $w_t^i$, is of the same form as   in Eq.~\eqref{eq:Intui0}, only now $g_\tau^i: = \nabla f(x_\tau^i) +\xi_\tau^i$, where $\xi_\tau^i$ is the noisy component of the gradient. Consequently, 
\begin{align*}
\sum_{\tau=t_0}^t (w_{\tau}^i -w_{\tau}^j)  \approx -\eta \sum_{\tau=t_0}^{t-1} (t-\tau)(g_\tau^i -g_\tau^j) 
\approx -\eta K \sum_{\tau=t_0}^{t-1} (g_\tau^i -g_\tau^j)~,
\end{align*}
where we took a crude approximation of $t-\tau\approx K$.
Now, by definition of $x_t^i$ and $\alpha_t=1$,
$x_{t}^i  = \frac{t_0}{t}\cdot x_{t_0} + \frac{1}{t} \sum_{\tau=t_0}^t w_\tau^i~,~~\forall i\in[M], t\in[t_0,t_0+K]~.$
Thus, for two machines $i\neq j$ we have,
\begin{align*}
 \frac{1}{\eta^2}\E\| x_{t}^i -x_{t}^j\|^2 & =\frac{1}{\eta^2} \E\left\| \frac{1}{t} \sum_{\tau=t_0}^t w_{\tau}^i- w_{\tau}^j\right\|^2  
\approx
 \frac{1}{\eta^2}\cdot\frac{\eta^2K^2}{t^2}\E\left\|\sum_{\tau=t_0}^{t-1} g_\tau^i -g_\tau^j
 \right\|^2  \\
&\approx
 \frac{K^2}{t^2}\E\left\|\sum_{\tau=t_0}^{t-1}  \nabla f(x_\tau^i) - \nabla f(x_\tau^j)
 \right\|^2
 +
 \frac{K^2}{t^2}\E\left\|\sum_{\tau=t_0}^{t-1} \xi_\tau^i - \xi_\tau^j
 \right\|^2~.
\end{align*}
As we  show in our analysis,  the noisy term is dominant, so we can therefore
 bound,
\begin{align}\label{eq:Intui5}
\frac{1}{\eta^2}\E\| x_{t}^i -x_{t}^j\|^2 \lessapprox
 \frac{K^2}{t^2}\E\left\|\sum_{\tau=t_0}^{t-1} \xi_\tau^i - \xi_\tau^j
 \right\|^2 
\approx  \frac{K^2(t-t_0)}{t^2}\leq  \frac{K^3}{t^2}~.
\end{align}
Taking $t\approx T = RK$ above yields a bound of $O(K/R^2)$. Thus Equations~\eqref{eq:Intui1},~\eqref{eq:Intui5}, illustrate that the bias of  \sAlg~is smaller by a factor of $R^2$ compared to the bias of standard Local-SGD.
In the appendix we demonstrate the same benefit of Anytime-SGD over  SGD when both use $\alpha_t\propto t$.

Finally, note that the biases introduced by the local updates come into play in a slightly different manner in Local-SGD compared to \sAlg~\footnote{A major challenge in our analysis is that for a given $i\in[M]$ the $\{x_t^i\}_t$ sequence is \textbf{not} necessarily an $\{\alpha_t\}_t$ weighted average of the $\{w_t^i\}_t$.}. Consequently, the above discussion does not enable to demonstrate the exact rates that we derive. Nevertheless, it provides some intuition regarding the benefit of our approach. The full and exact derivations appear in the appendix.\\
\textbf{Importance Weights.} One may wonder whether it is necessary to employ increasing weights $\alpha_t = t+1$, rather than employing standard uniform weights $\alpha_t=1~,\forall t$. Surprisingly, in our analysis we have found that increasing weights are crucial in order to obtain a benefit over Minibatch-SGD, and that upon using uniform weights \sAlg~performs  worse compared to Minibatch SGD! We elaborate on this in Appendix~\ref{sec:AppWeights}. Below we provide an intuitive explanation.\\
\textbf{Intuitive Explanation.}
The intuition behind the importance of using increasing weights is the following: Increasing weights are a technical tool to put more emphasis on the last rounds. Now, in the context of Local update methods, the iterates of the last rounds are more attractive since the bias between different machines shrinks as we progress. Intuitively, this happens since as we progress with the optimization process, the expected value of the gradients that we compute goes to zero (since we converge); and consequently the bias between different machines shrinks as we progress. 

\vspace{-3pt}
\subsection{Proof Sketch for Theorem~\ref{thm:Main}}
\begin{proof}[Proof Sketch for Theorem~\ref{thm:Main}]
As a starting point for the analysis, for every iteration $t\in[T]$ we will define the averages of $(w_t^i,x_t^i,g_t^i)$ across all machines as follows,
$$
w_t: = \frac{1}{M}\sum_{i\in[M]} w_t^i~,\quad \& \quad x_t: = \frac{1}{M}\sum_{i\in[M]} x_t^i\quad \& \quad g_t: = \frac{1}{M}\sum_{i\in[M]} g_t^i~.
$$
Note that Alg.~\ref{alg:ParSchemeLoc} explicitly computes $(w_t,x_t)$  only once every $K$ local updates, and that theses are identical to the local copies of every machine at the beginning of every round. Combining the above definitions with Eq.~\eqref{eq:AnyParIter} yields,
\begin{align} \label{eq:AnyParIterUnSK}
w_{t+1} = w_{t}- \eta \alpha_t g_t~,~\forall t\in[T]
\end{align}
Further combining these definitions with Eq.~\eqref{eq:AnyParAverage} yields,
\begin{align} \label{eq:AnyParAverageUnSK}
x_{t+1} = (1-\frac{\alpha_{t+1}}{\alpha_{0:t+1}} )x_{t} + \frac{\alpha_{t+1}}{\alpha_{0:t+1}}w_{t+1}~,~\forall t\in[T]
\end{align} 
The above implies that the $\{x_t\}_{t\in[T]}$ sequence is an $\{\alpha_t\}_{t\in[T]}$ weighted average of $\{w_t\}_{t\in[T]}$.  This enables to employ Thm.~\ref{theo:Anytime} which yields,
$
\alpha_{0:t}\Delta_t
\leq \sum_{\tau=0}^t \alpha_\tau \nabla f(x_\tau)\cdot (w_\tau-w^*)~,
$
where we denote $\Delta_t: = f(x_t) -f(w^*)$. 
This bound highlights the challenge in the analysis: our algorithm does not directly compute unbiased estimates of $x_t$, except for the first iterate of each round. Concretely,  Eq.~\eqref{eq:AnyParIterUnSK} implies that our algorithm effectively updates using $g_t$ which is a biased estimate of $\nabla f(x_t)$. 

It is therefore natural to decompose $\nabla f(x_\tau) = g_\tau + (\nabla f(x_\tau)-g_\tau)$ in the above bound, yielding,
\begin{align}\label{eq:Main1SK}
\alpha_{0:t}\Delta_t
\leq
 \underset{\rA}{\underbrace{\sum_{\tau=0}^t\alpha_\tau g_\tau\cdot(w_\tau-w^*)}} + 
  \underset{\rB}{\underbrace{\sum_{\tau=0}^t\alpha_\tau  (\nabla f(x_\tau) - g_\tau)\cdot(w_\tau-w^*)}}
\end{align}
Thus, we intend  to bound the weighted error $\alpha_{0:t}\Delta_t$ by bounding two terms:  $\rA$ which is  related to the update rule of the algorithm, and $\rB$ which accounts for the bias between $g_t$ and $\nabla f(x_t)$.\\
\textbf{Notation:}
In what follows we will find the following notation useful,
$
\bg_t: = \frac{1}{M}\sum_{i\in[M]}\nabla f_i(x_t^i)
$, and note that $ \bg_t= \E\left[g_t\vert \{x_t^i\}_{i\in[M]}\right] $. 
We will also employ the following notations:
$
V_t: = \sum_{\tau=0}^t \alpha_\tau^2 \|\bg_\tau-\nabla f(x_\tau)\|^2~,
$ 
and  
$D_t : = \|w_t-w^*\|^2~,
$
where $w^*$ is a global minimum of $f(\cdot)$. We will also denote
$D_{0:t}:= \sum_{\tau=0}^t\|w_\tau-w^*\|^2 $.
\paragraph{Bounding \rA:} Due to the update rule of Eq.~\eqref{eq:AnyParIterUnSK}, one can show by standard regret analysis that:
$
\rA: = \sum_{\tau=0}^t \alpha_\tau g_\tau \cdot(w_\tau-w^*) 
\leq
  \frac{\|w_0-w^*\|^2}{2\eta} +\frac{\eta}{2} \sum_{\tau=0}^t \alpha_\tau^2 \|g_\tau\|^2~,
$

\paragraph{Bounding \rB:} We can bound $\rB$ in expectation using $V_t$ and $D_{0:t}$ as follows for any $\rho>0$:
$
\E\left[\rB\right]
\leq
\frac{1}{2\rho}\E V_t +  \frac{\rho}{2}\E D_{0:t}~,
$

\paragraph{Combining \rA~and \rB:}
Combining the above boounds on $\rA$ and $\rB$ into Eq.~\eqref{eq:Main1SK} we obtain the following bound which holds for any $\rho>0$ and $t\in[T]$,
\begin{align}\label{eq:Main2}
\alpha_{0:t}\E\Delta_t
\leq 
 \frac{\|w_0-w^*\|^2}{2\eta} +\frac{\eta}{2} \E\sum_{\tau=0}^T \alpha_\tau^2 \|g_\tau\|^2+ 
 \frac{1}{2\rho}\E V_T +  \frac{\rho}{2}\E D_{0:T}
\end{align}
Now, to simplify the proof sketch we shall assume that $D_t\leq D_0~\forall t$, implying that $D_{0:T}\leq T D_0$.
Plugging this into the above equation and taking  $\rho = \frac{1}{4\eta T}$ gives,
\begin{align}\label{eq:Main2SK}
\alpha_{0:t}\E\Delta_t
\leq 
 \frac{\|w_0-w^*\|^2}{\eta} + \eta \cdot
  \underset{\rst}{\underbrace{\E\sum_{\tau=0}^T \alpha_\tau^2 \|g_\tau\|^2}}
  + 
 4\eta T \E V_T~.
\end{align}
Next we will bound $\rst$ and $\E V_T$, and plug them back into Eq.~\eqref{eq:Main2SK}.
\paragraph{Bounding $\rst$:}
To bound $\rst$~it is natural to decompose $g_\tau = (g_\tau-\bg_\tau)+  (\bg_\tau-\nabla f(x_\tau) ) +\nabla f(x_\tau)$. Using this decomposition we show that,
$
\rst ~~\lessapprox~~3 \frac{\sigma^2}{M}\sum_{t=0}^T \alpha_t^2 + 3 \E V_T +12L \E\sum_{t=0}^T \alpha_{0:t}\Delta_t ~.
$
\paragraph{Bounding $\E V_T$} The definition of $V_t$ shows that it is encompasses the bias that is introduced due to the local updates, which in turn relates to   the distances $\|x_t^i-x_t^j\|~,~\forall i,j\in[M]$. Thus, $\E V_T$  is therefore directly related to the dissimilarity between the machines. 
Our analysis shows the following:
$
 \E V_T \leq 
400L^2 \eta^2 K^3\sum_{\tau=0}^{T} \alpha_{0:\tau} \cdot (G_*^2 + 4L\Delta_\tau)+
90L^2\eta^2K^6R^3\sigma^2~.
$
Plugging the above into Eq.~\eqref{eq:Main2SK}, and using our choice for $\eta$, gives an almost explicit bound,
\begin{align*}
 \alpha_{0:t}\E\Delta_t 
 \lessapprox 
 \frac{\|w_0-w^*\|^2}{\eta} +  \eta\frac{\sigma^2}{M}\sum_{t=0}^T \alpha_t^2+  L^2\eta^3 T K^6R^3\sigma^2
 +
  L^2 \eta^3 T K^3\sum_{\tau=0}^{T} \alpha_{0:\tau} G_*^2 
+
 \frac{1}{2(T+1)} \E\sum_{t=0}^T \alpha_{0:t}\Delta_t~.
\end{align*}
The theorem follows by plugging above the choices of $\eta,\alpha_t$, and using a technical lemma.
\end{proof}

\section{Experiments}
To assess the effectiveness of our proposed approach, we conducted experiments on the MNIST \citep{lecun2010mnist} dataset—a well-established benchmark in image classification comprising 70,000 grayscale images of handwritten digits (0–9), with 60,000 images designated for training and 10,000 for testing. The dataset was accessed via \texttt{torchvision} (version 0.16.2). We implemented a logistic regression model \citep{bishop2006pattern} using the PyTorch framework and executed all computations on an NVIDIA L40S GPU. To ensure robustness, results were averaged over three different random seeds. The complete codebase for these experiments is publicly available on our GitHub repository.\footnote{\url{https://github.com/dahan198/slowcal-sgd}}

We evaluated our approach using parameters derived from our theoretical framework (\(\alpha_t = t\)) in comparison to Local-SGD and Minibatch-SGD under various configurations. Specifically, experiments were conducted with 16, 32, and 64 workers to examine the scalability and robustness of the proposed method. We also varied the number of local updates \(K\) (or minibatch sizes for Minibatch-SGD) among 4, 8, 16, 32, and 64 to investigate how different local iteration counts impact performance. Data subsets for each worker were generated using a Dirichlet distribution \citep{hsu2019measuring} with \(\alpha = 0.1\) to simulate real-world non-IID data scenarios characterized by high heterogeneity. For fairness, the learning rate was selected through grid search, with a value of 0.01 for \sAlg \ and Local-SGD, and 0.1 for Minibatch-SGD. More details about the data distribution across workers and complete experimental results are provided in Appendix \ref{app:exp}.

\begin{figure}[H]
    \centering
    \begin{subfigure}[t]{0.49\linewidth}
        \centering
        \includegraphics[width=\linewidth]{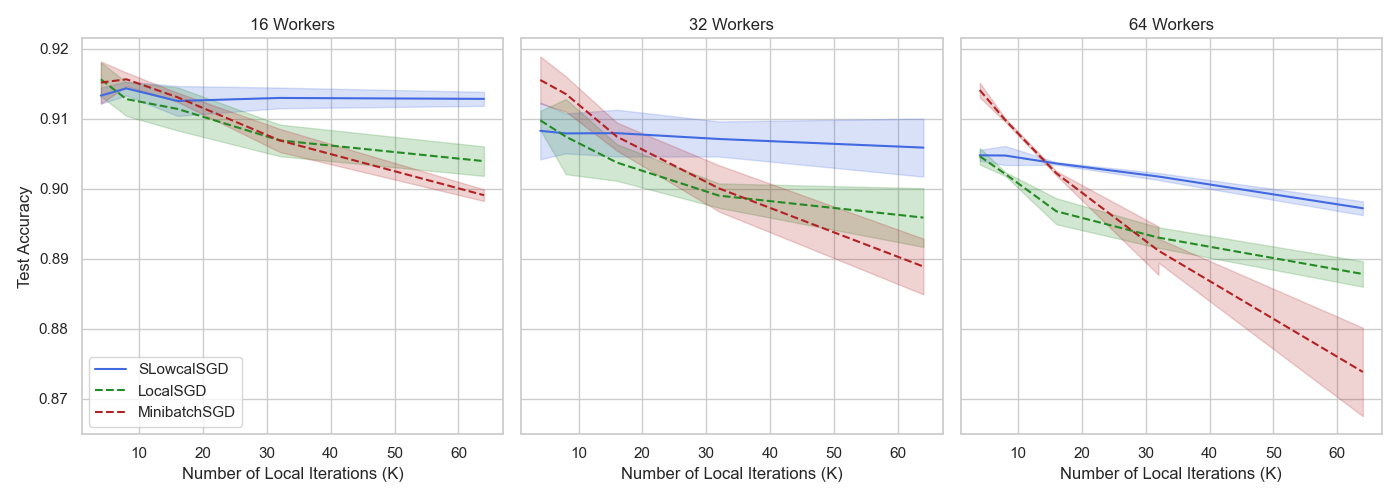}
        \caption{\footnotesize Test Accuracy ($\uparrow$ is better).}
        \label{fig:1a}
    \end{subfigure}
    \hfill
    \begin{subfigure}[t]{0.49\linewidth}
        \centering
        \includegraphics[width=\linewidth]{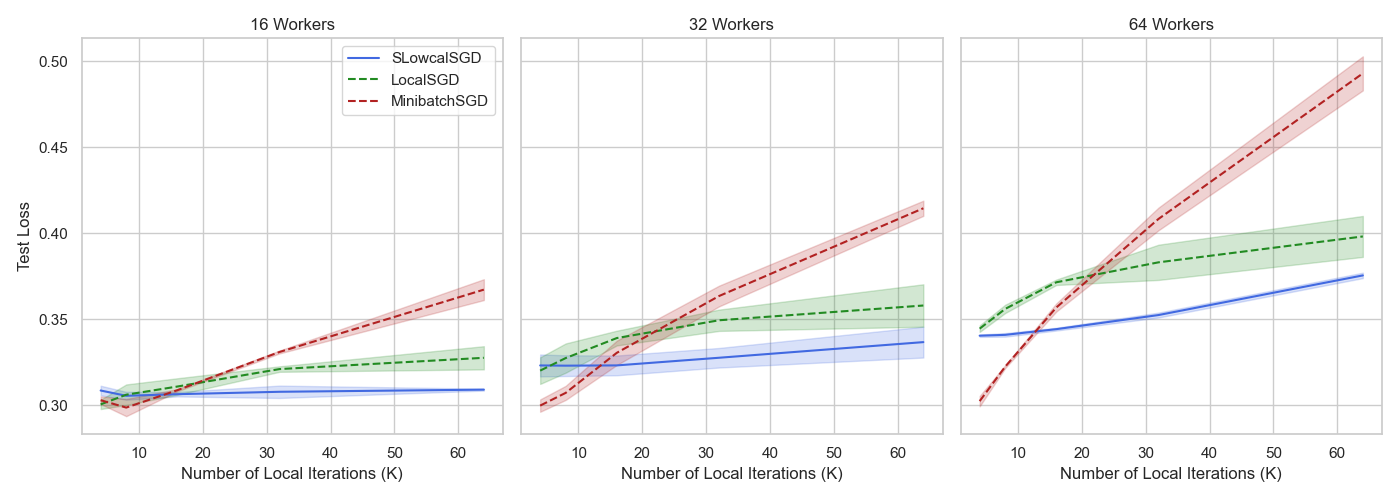}
        \caption{\footnotesize Test Loss ($\downarrow$ is better).}
        \label{fig:1b}
    \end{subfigure}
    \caption{\footnotesize Performance  vs. Local Iterations ($K$) for different numbers of workers ($M$).}
    \label{fig:accuracy_comparison}
\end{figure}

Our results on the MNIST dataset, presented in Figure \ref{fig:accuracy_comparison} and detailed in Appendix \ref{app:complete-res}, demonstrate the effectiveness of our approach, showing consistent performance improvements compared to Local-SGD and Minibatch-SGD as the number of local steps increases. Notably, this improvement becomes even more significant compared to the other methods as the number of workers increases, underscoring the scalability of our method and aligning with the theoretical guarantees outlined in our framework. These results highlight the robustness of our approach in handling highly heterogeneous, distributed environments.

Upon closer inspection, when a small number of local steps are performed, the differences between the approaches are negligible, with a slight advantage for Minibatch-SGD. However, as the number of local steps increases, the minibatch size grows, and the need for significant variance reduction diminishes. In this regime, making more frequent optimization updates becomes more impactful, as demonstrated by the superior performance of the local approaches compared to Minibatch-SGD. Importantly, with \sAlg, which keeps local updates closely aligned among workers throughout the training process, we can achieve significantly better and more stable performance compared to both Minibatch-SGD and Local-SGD as the number of local steps \(K\) and the number of workers \(M\) increase.

\section{Conclusion}
\vspace{-6pt}
We have presented the first local approach for the heterogeneous distributed Stochastic Convex Optimization (SCO)  setting that provably benefits over the two most prominent baselines, namely Minibatch-SGD, and Local-SGD. There are several interesting avenues for future exploration: \\{\bf (a)} developing an adaptive variant that does not require the knowledge of the problem parameters like $\sigma$ and $L$;~ {\bf (b)} Allowing a per dimension step-size that could benefit in (the prevalent) scenarios where the scale of the gradients considerably changes between different dimensions;  in the spirit of the well known AdaGrad method~\citep{duchi2011adaptive}. Finally, {\bf(c)}  it will be interesting to understand whether we can find an algorithm that provably dominates over the Accelerated Minibach-SGD baseline, which is an open question also in the homogeneous SCO setting. 

\section*{Acknowledgement}
This research was partially supported by Israel PBC-VATAT, the Technion Artificial Intelligent Hub (Tech.AI), and the Israel Science Foundation (grant No. 3109/24).

\bibliographystyle{plainnat}
\bibliography{bib}

\newpage
\appendix
\section{Explanations Regarding the Linear Speedup and Table~\ref{table1}}
\label{app:Table}
Here we elaborate on the computations done in Table~\ref{table1}.
First we will explain why the dominance of the term $\frac{1}{\sqrt{MKR}}$ implies a linear speedup by a factor of $M$.

\paragraph{Explanation.} Recall that using SGD with a single machine $M=1$, yields a convergnece rate of $\frac{1}{\sqrt{KR}}$ (as a dominant term). Thus, in order to obtain an excess loss smaller than some $\eps>0$, SGD  
requires $RK \geq  \Omega\left(\frac{1}{\eps^2}\right)$. Where $RK$ is the wall-clock time required to compute the solution.

Now, when we use parallel optimization with $R$ communication rounds, $K$ local computations, and $M$ machines, the wall-clock time to compute a solution  is still $RK$. Now, if the dominant term in the convergence rate of this algorithm is $\frac{1}{\sqrt{MKR}}$ then the wall clock time to obtain an $\eps$-optimal solution should be $RK \geq \Omega\left(\frac{1}{M\eps^2}\right)$. And the latter is smaller by a factor of $M$ compared to a single machine.

\paragraph{Computation of $R_{\min}$ in Table~\ref{table1}.}
The term  $\frac{1}{\sqrt{MKR}}$ appears in the bounds of all of the parallel optimization methods that we describe. Nevertheless, it is dominant up as long as the number of communication rounds $R$ is larger than some treshold value $R_{\min}$, that depends on the specific convergence rate. Clearly, smaller values of  $R_{\min}$ imply less communication. Thus, in the $\bf{R_{\min}}$ column of the table we compute $R_{\min}$ for each method based on the term in the bound that compares least favourably against $\frac{1}{\sqrt{MKR}}$. These terms are bolded in the  \textbf{Rate} column of the table. 

Concretely, denoting this less favourable term by $\B^{\rm parallel}: = \B^{\rm parallel}(M,K,R, G_*)$~~\footnote{$\B^{\rm parallel}$ may also depend on $\sigma,L,\|w_0-w^*\|$ but for simplicity of exposition we hide these dependencies in Table~\ref{table1}.}, then  $\bf{R_{\min}}$ is the lowest $R$ which satisfies,
$$
\B^{\rm parallel} \leq \frac{1}{\sqrt{MKR}}~.
$$

\section{On Heterogeneity Assumption}
\label{app:Diss}
Let us assume that the following holds at the optimum $w^*$,
$$
\frac{1}{M}\sum_{i\in[M]}\|\nabla f_i(w^*) \|^2 \leq G_*^2/2
$$
Then we can show the following relation for any $w\in\reals^d$,
\begin{align*}
\frac{1}{M}\sum_{i\in[M]}\|\nabla f_i(w) \|^2 
&=\frac{1}{M}\sum_{i\in[M]} \|\nabla f_i(w) - \nabla f_i(w^*) + \nabla f_i(w^*) \|^2\\
&\leq
\frac{2}{M}\sum_{i\in[M]} \|\nabla f_i(w) - \nabla f_i(w^*)  \|^2
+\frac{2}{M} \sum_{i\in[M]} \| \nabla f_i(w^*) \|^2 \\
&\leq
4L (f(w)-f(w^*)) +G_*^2~.
\end{align*}
where we used $\|a+b\|^2 \leq 2\|a\|^2+2\|b\|^2$ which holds for any $a,b\in\reals^d$,
and the last line follows by the lemma below that we borrow from \citep{johnson2013accelerating,cutkoskylecture}.
\begin{lemma}\label{lem:SVRG}
Let $\mathcal{L}(x) =\frac{1}{M}\sum_{i\in[M]} \ell_i(x)$ be a convex function with global minimum $w^*$, and assume that every $f_i:\reals^d\mapsto \reals$ is $L$-smooth. Then the following holds,
$$
\frac{1}{M}\sum_{i\in[M]} \|\nabla \ell_i(w) - \nabla \ell_i(w^*)  \|^2 \leq 2L (\mathcal{L}(w)-\mathcal{L}(w^*)) ~.
$$
\end{lemma}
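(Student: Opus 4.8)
The plan is to reduce everything to the standard ``gradient-norm'' bound for a single convex $L$-smooth function: if $h:\reals^d\mapsto\reals$ is convex and $L$-smooth with a global minimizer $x^*$, then $h(x)-h(x^*)\ge \frac{1}{2L}\|\nabla h(x)\|^2$. This follows in one line from the descent lemma, by evaluating $h$ at the gradient-step point $x-\frac{1}{L}\nabla h(x)$ and comparing with $h(x^*)$:
$$ h(x^*)\le h\Bigl(x-\tfrac1L\nabla h(x)\Bigr)\le h(x)-\tfrac{1}{2L}\|\nabla h(x)\|^2~. $$

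The difficulty is that $w^*$ is the minimizer of the \emph{average} $\mathcal{L}$, and need not minimize any individual $\ell_i$; hence one cannot apply the bound above to $\ell_i$ with $w^*$ playing the role of $x^*$. To circumvent this, I would introduce for each $i\in[M]$ the shifted function
$$ g_i(x):=\ell_i(x)-\ell_i(w^*)-\langle \nabla \ell_i(w^*),\,x-w^*\rangle~. $$
Subtracting the linear term keeps $g_i$ convex and $L$-smooth, and arranges $\nabla g_i(x)=\nabla \ell_i(x)-\nabla \ell_i(w^*)$, so that $\nabla g_i(w^*)=0$. By convexity, a vanishing gradient makes $w^*$ a global minimizer of $g_i$, with $g_i(w^*)=0$.

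Applying the single-function bound to each $g_i$ then gives $g_i(w)\ge \frac{1}{2L}\|\nabla \ell_i(w)-\nabla \ell_i(w^*)\|^2$. Averaging over $i$, the crucial cancellation is that the linear correction terms sum to $\langle \nabla \mathcal{L}(w^*),\,w-w^*\rangle$, which vanishes because $w^*$ minimizes the differentiable convex function $\mathcal{L}$ and hence $\nabla \mathcal{L}(w^*)=0$; thus $\frac{1}{M}\sum_{i\in[M]} g_i(w)=\mathcal{L}(w)-\mathcal{L}(w^*)$. Combining the two sides yields exactly
$$ \frac{1}{M}\sum_{i\in[M]}\|\nabla \ell_i(w)-\nabla \ell_i(w^*)\|^2 \le 2L\bigl(\mathcal{L}(w)-\mathcal{L}(w^*)\bigr)~, $$
as claimed.

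The main obstacle is precisely this mismatch between the per-machine minimizers and the global minimizer of the average. The shift-by-the-linear-term construction, together with the vanishing of $\nabla\mathcal{L}(w^*)$ upon averaging, is the only nonroutine step; everything else is the textbook smoothness inequality and bookkeeping.
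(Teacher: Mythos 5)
Your proof is correct, but it takes a different route from the paper in a literal sense: the paper does not prove this lemma at all, it simply cites Lemma~27.1 of \citep{cutkoskylecture} (taking $v=w^*$ there), which is the classical SVRG variance bound of Johnson and Zhang. What you wrote out is, in essence, the proof of that cited result, made self-contained. Your shifted functions $g_i(x)=\ell_i(x)-\ell_i(w^*)-\langle\nabla\ell_i(w^*),x-w^*\rangle$ are exactly the Bregman divergences of the $\ell_i$ anchored at $w^*$; your single-function inequality $\|\nabla h(x)\|^2\leq 2L\bigl(h(x)-h(x^*)\bigr)$ is precisely the paper's Lemma~\ref{lem:SmoothSelf} (which, as you note, needs only smoothness and global optimality of $x^*$, not convexity); and the cancellation $\frac{1}{M}\sum_{i\in[M]}g_i(w)=\mathcal{L}(w)-\mathcal{L}(w^*)$ via $\nabla\mathcal{L}(w^*)=0$ is the averaging step. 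So the mathematics coincides with the source the paper leans on; what your version buys is that the lemma no longer rests on an external reference.

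One point in your argument deserves emphasis, because it exposes a hypothesis the lemma's statement leaves implicit: you need each $\ell_i$ to be convex (so that $g_i$ is convex and $\nabla g_i(w^*)=0$ genuinely makes $w^*$ a global minimizer of $g_i$), whereas the statement only assumes that the \emph{average} $\mathcal{L}$ is convex. Under that literal reading the claim is false: take $M=2$, $\ell_1(x)=x^2$, $\ell_2(x)=-x^2/2$, so that both $\ell_i$ are $L$-smooth with $L=2$ and $\mathcal{L}(x)=x^2/4$ is convex with $w^*=0$; then $\frac{1}{2}\sum_{i}\|\nabla\ell_i(w)-\nabla\ell_i(0)\|^2=\tfrac{5}{2}w^2$, while $2L\bigl(\mathcal{L}(w)-\mathcal{L}(0)\bigr)=w^2$. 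Hence per-component convexity in your construction is not an oversight but a necessity; it is the same (tacit) assumption under which the cited Lemma~27.1 holds, and it should be read into the statement here.
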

\begin{proof}[Proof of Lemma~\ref{lem:SVRG}]
The lemma follows immediately from lemma 27.1 in \citep{cutkoskylecture}, by taking $v=w^*$ therein.
\end{proof}

\section{Interpreting Anytime-SGD as Momentum}
Here we show how to interpret the Anytime-SGD algorithm that we present in Equations~\eqref{eq:Any111},\eqref{eq:XwW}, as a momentum method. For completeness we rewrite the update equations below,
\begin{align}\label{eq:Any111C}
    w_{t+1} = w_t-\eta \alpha_t g_t~,\forall t\in[T]~,\text{where~} g_t = \nabla f(x_t)~,
\end{align}
and then,
\begin{align} \label{eq:XwWC}
x_{t+1} = \frac{\alpha_{0:t}}{\alpha_{0:t+1}}x_t + \frac{\alpha_{t+1}}{\alpha_{0:t+1}}w_{t+1}~.
\end{align}
where $g_t$ is an unbiased gradient estimate at $x_t$, and $\{\alpha_t\}_t$ is a sequence of non-negative scalars. And at initialization $x_0=w_0$.

First note that Eq.~\eqref{eq:XwWC} directly implies that,
$$
x_{t+1} = \frac{1}{\alpha_{0:t+1}} \sum_{\tau=0}^{t+1} \alpha_\tau w_\tau~.
$$
Next, note that we can directly write,
$$
w_\tau = w_0 - \eta \sum_{n=0}^{\tau-1} \alpha_n g_n
$$
Plugging the above into the formula for $x_{t+1}$ yields,
\als 
x_{t+1} 
& =
w_0 - \eta  \frac{1}{\alpha_{0:t+1}} \sum_{\tau=0}^{t+1} \sum_{n=0}^{\tau-1}\alpha_\tau  \alpha_n g_n \non 
& =
w_0 - \eta  \frac{1}{\alpha_{0:t+1}} \sum_{n=0}^{t} \sum_{\tau=n+1}^{t+1}\alpha_\tau  \alpha_n g_n \non 
& =
w_0 - \eta  \frac{1}{\alpha_{0:t+1}} \sum_{n=0}^{t} \alpha_{n+1:t+1}  \alpha_n g_n ~. 
\eals
Thus,
\al \label{eq:MomentInterp1}
\frac{1}{\eta}\left(x_{t+1}-x_{t}\right) 
&= 
\frac{1}{\alpha_{0:t}} \sum_{n=0}^{t-1} \alpha_{n+1:t}  \alpha_n g_n
-
\frac{1}{\alpha_{0:t+1}} \sum_{n=0}^{t} \alpha_{n+1:t+1}  \alpha_n g_n \non
&= 
\frac{1}{\alpha_{0:t+1}} \sum_{n=0}^{t-1} \frac{\alpha_{0:t+1}}{\alpha_{0:t}}\alpha_{n+1:t}  \alpha_n g_n
-
\frac{1}{\alpha_{0:t+1}} \sum_{n=0}^{t-1} \alpha_{n+1:t+1}  \alpha_n g_n 
-
\frac{1}{\alpha_{0:t+1}} \alpha_{t+1}\alpha_{t} g_{t} \non
&= 
-\frac{1}{\alpha_{0:t+1}} \sum_{n=0}^{t-1} 
\left(\alpha_{n+1:t+1} -\frac{\alpha_{0:t+1}}{\alpha_{0:t}}\alpha_{n+1:t} \right) \alpha_n g_n
-
\frac{1}{\alpha_{0:t+1}} \alpha_{t+1}\alpha_{t} g_{t} \non 
&=
-\frac{1}{\alpha_{0:t+1}} \sum_{n=0}^{t-1} 
\alpha_{t+1}\frac{\alpha_{0:n}}{\alpha_{0:t}} \alpha_n g_n
-
\frac{1}{\alpha_{0:t+1}} \alpha_{t+1}\alpha_{t} g_{t} \non 
&= 
-\frac{1}{\alpha_{0:t+1}} \sum_{n=0}^{t} 
\alpha_{t+1} \alpha_n\frac{\alpha_{0:n}}{\alpha_{0:t}} g_n~, 
\eal
where we used the equality below,
$$
\alpha_{n+1:t+1} -\frac{\alpha_{0:t+1}}{\alpha_{0:t}}\alpha_{n+1:t} 
=
\alpha_{t+1} - \alpha_{n+1:t} \frac{\alpha_{t+1}}{\alpha_{0:t}} 
=
\alpha_{t+1}(1-  \frac{ \alpha_{n+1:t}}{\alpha_{0:t}} ) = \alpha_{t+1}\frac{\alpha_{0:n}}{\alpha_{0:t}}~.
$$
Thus we can write,
$$
x_{t+1} \approx  x_t - \eta \frac{1}{\alpha_{0:t+1}} \sum_{n=0}^{t} 
\alpha_{t+1} \alpha_n \frac{\alpha_{0:n}}{\alpha_{0:t}} g_n~,
$$
\textbf{Uniform Weights.} Thus, taking uniform weights $\alpha_t=1$ yields,
$$
x_{t+1} \approx  x_t - \eta  \sum_{n=0}^{t} 
\frac{n}{t^2} g_n~.
$$
\textbf{Linear Weights.} Similarly, taking linear weights $\alpha_t=t+1$ yields,
$$
x_{t+1} \approx  x_t - \eta  \sum_{n=0}^{t} 
\frac{n^3}{t^3} g_n~.
$$

\section{Proof of Theorem~\ref{theo:Anytime}}
\begin{proof}[Proof of Theorem~\ref{theo:Anytime}]
We rehearse the proof of Theorem $1$ from \cite{cutkosky2019anytime}.

First, since $w^*$ is a global minimum and $\alpha_{0:t}$ are non-negative than clearly,
$$
\alpha_{0:t}\left(f(x_t) - f(w^*) \right)\geq 0~.
$$
Now, notice that the following holds,
$$
\alpha_t(x_t-w_t) = \alpha_{0:t-1}(x_{t-1}-x_t)
$$
Using the gradient inequality for $f$ gives,
\begin{align*}
\sum_{\tau=0}^t \alpha_\tau  (f(x_\tau) - f(w^*)) 
&\leq
\sum_{\tau=0}^t \alpha_\tau  \nabla f(x_\tau)\cdot( x_\tau - w^*)\\
 &=
\sum_{\tau=0}^t \alpha_\tau  \nabla f(x_\tau)\cdot( w_\tau - w^*)+
\sum_{\tau=0}^t \alpha_\tau  \nabla f(x_\tau)\cdot( x_\tau - w_\tau) \\
 &=
\sum_{\tau=0}^t \alpha_\tau  \nabla f(x_\tau)\cdot( w_\tau - w^*)+
\sum_{\tau=0}^t \alpha_{0:\tau-1}  \nabla f(x_\tau)\cdot( x_{\tau-1} - x_\tau) \\
&\leq
\sum_{\tau=0}^t \alpha_\tau  \nabla f(x_\tau)\cdot( w_\tau - w^*)+
\sum_{\tau=0}^t \alpha_{0:\tau-1}  (f(x_{\tau-1})- f(x_\tau))~,
\end{align*}
where we have used the gradient inequality again which implies $\nabla f(x_\tau)\cdot( x_{\tau-1} - x_\tau) \leq f(x_{\tau-1})- f(x_\tau)$.

Now Re-ordering we obtain,
\begin{align*}
\sum_{\tau=0}^t \left( \alpha_{0:\tau}  f(x_\tau) -  \alpha_{0:\tau-1}  f(x_{\tau-1})\right)  -\alpha_{0:t}f(w^*)
&\leq
\sum_{\tau=0}^t \alpha_\tau  \nabla f(x_\tau)\cdot( w_\tau - w^*)~.
\end{align*}
Telescoping the sum in the LHS we conclude the proof,
\begin{align*}
 \alpha_{0:t} \left(  f(x_\tau) -f(w^*) \right)
&\leq 
\sum_{\tau=0}^t \alpha_\tau  \nabla f(x_\tau)\cdot( w_\tau - w^*)~.
\end{align*}
\end{proof}

\section{More Intuition and Discussion Regarding the Benefit of \sAlg}
\paragraph{More Elaborate Intuitive Explanation.} The intuition is the following:
We have two extreme baselines: (1) Minibatch-SGD where queries do not change at all during updates-implying that there is no bias between different machines. However, Minibatch-SGD is “lazy” since  among $KR$ queries it only performs $R$ gradient updates. Conversely (2)  Local-SGD is not “lazy” since each machine performs $KR$ gradient updates. Nevertheless, the queries of different machines change substantially during each round, which translates to bias between machines, which in turn degrades the convergence.

Ideally, we would like to have a “non-lazy” method where each machine performs KR gradient updates (like Local-SGD), but where the queries of each machine do not change at all during rounds (like Minibatch-SGD) and therefore no bias is introduced between machines. Of course, this is too good to exist, but our method is a step in this direction: it is “non-lazy” and the query points of different machines change slowly, and therefore introduce less bias between machines. This translates to a better convergence rate.

\paragraph{Additional Technical Intuition for $\alpha_t \propto t$.} Here we extend the technical explanation that we provide in Sec.~\ref{sec:IntuitionTextBody} to the case where $\alpha_t \propto t$, and show again that \sAlg~yields smaller bias between different machines compared to Local-SGD. 

As in the intuition for the case of uniform weights, to simplify the more technical discussion, we will assume the homogeneous case, i.e., that for any $i\in[M]$ we have $\D_i = \D$ and $f_i(\cdot)=f(\cdot)$.

Note that upon employing linear weights, the normalization factor $\alpha_{0:T}$ that plays a major role in the convergence guarantees of Anytime-SGD (see Thm.~\ref{theo:Anytime}) also grows as $\alpha_{0:T} \propto T^2$.  Thus, in order to make an proper comparison, we should compare the bias of weighted Anytime-SGD, to the appropriate \emph{weighted} version of SGD; where the normalization factor $\alpha_{0:T}$ also plays a similar role in the guarantees (see e.g.~\cite{wang2021no}). This weighted SGD is as follows~\cite{wang2021no}, 
$\forall t\geq 0$ 
\al \label{eq:SGD_W}
w_{t+1} = w_t - \eta \alpha_t g_t~;\quad \text{where $g_t$ is unbiased of $\nabla f(w_t)$}~.
\eal
and after $t$ iterations it outputs $\wbar_T = \frac{1}{\alpha_{0:T}}\sum_{t=0}^T\alpha_t w_t$. And for $\alpha_t = t+1$ this version enjoys the same guarantees as standard SGD. 

Next, we compare the Local-SGD version of the above weighted SGD (Eq.~\eqref{eq:SGD_W}) to our \sAlg when both employ $\alpha_t = t+1$.
So a bit more formally, let us discuss the bias between query points in a given round $r\in[R]$, and let us denote $t_0 = rK$.  The following  holds for weighted \textbf{Local SGD},
\begin{align}\label{eq:Intui0App}
w_{t}^i  = w_{t_0} -\eta \sum_{\tau=t_0}^{t-1} \alpha_\tau g_{\tau}^i~,~~\forall i\in[M], t\in[t_0,t_0+K]~.
\end{align}
where $g_\tau^i$ is the noisy gradients that Machine $i$ computes in $w_\tau^i$, and we can write $g_\tau^i: = \nabla f(w_\tau^i) +\xi_\tau^i$, where $\xi_\tau^i$ is the noisy component of the gradient.
Thus, for two  machines $i\neq j$ we can write,
\begin{align*}
\E\|  w_{t}^i -w_{t}^j\|^2 = \eta^2 \E\left\|  \sum_{\tau=t_0}^{t-1}\alpha_\tau (g_{\tau}^i- g_{\tau}^j)\right\|^2
 \approx  \eta^2 \E\left\|  \sum_{\tau=t_0}^{t-1} \alpha_\tau( \nabla f(w_\tau^i)-  \nabla f(w_\tau^j))\right\|^2+
\eta^2 \E\left\|  \sum_{\tau=t_0}^{t-1}\alpha_\tau (\xi_\tau^i -\xi_\tau^j)  \right\|^2
\end{align*}
Now, we can be generous with respect to weighted SGD and only take the second noisy term into account and neglect the first term
\footnote{It was shown in~\cite{woodworth2020local}, that the noisy term is dominant for standard SGD}. Thus we obtain,
\begin{align}\label{eq:Intui1App}
\frac{1}{\eta^2}\E\| w_{t}^i -w_{t}^j\|^2 &\lessapprox
 \E\left\|  \sum_{\tau=t_0}^{t-1}\alpha_\tau (\xi_\tau^i -\xi_\tau^j)  \right\|^2
 \approx
 \alpha_{t_0+K}^2\E\left\|  \sum_{\tau=t_0}^{t-1} (\xi_\tau^i -\xi_\tau^j)  \right\|^2 
 \approx (rK)^2\cdot(t-t_0) \leq  r^2K^3~.
\end{align}
where we used $\alpha_t \leq \alpha_{t_0+K}~,\forall t\leq t_0+K$, as well as $\alpha_{t_0+K}^2 = (r(K+1)+1)^2\approx (rK)^2$. We also used $t-t_0\lessapprox K$.

Similarly,  for \textbf{\sAlg} we would like to bound $\E\| x_{t}^i -x_{t}^j\|^2$ for two machines $i\neq j$; while assuming linear weights i.e.,~$\alpha_t = t+1~,~\forall t\in[T]$.
Now the update rule for the iterates $w_t^i$, is of the same form as   in Eq.~\eqref{eq:Intui0App}, only now $g_\tau^i: = \nabla f(x_\tau^i) +\xi_\tau^i$, where $\xi_\tau^i$ is the noisy component of the gradient. Consequently, we can show the following,
\begin{align*}
\sum_{\tau=t_0}^t \alpha_\tau (w_{\tau}^i -w_{\tau}^j)  
\approx 
-\eta 
\sum_{\tau=t_0}^{t} \alpha_{\tau}\sum_{n=t_0}^{\tau-1} \alpha_n (g_n^i -g_n^j) 
\approx 
-\eta 
\sum_{n=t_0}^{t-1} \alpha_{n+1:t} \alpha_n (g_n^i -g_n^j) 
\approx 
-\eta r^2 K^3 \sum_{\tau=t_0}^{t-1} (g_\tau^i -g_\tau^j)~,
\end{align*}
where we took a crude approximation of $\alpha_{n+1:t} \alpha_n\approx \alpha_{t_0:t+K}\alpha_{t_0}\lessapprox r K^2\cdot rK = r^2K^3$. In the last "$\approx$" we also change the notation of summation variable from $n$ to $\tau$.

Now, by definition,
$x_{t}^i  \approx \frac{\alpha_{0:t_0}}{\alpha_{0:t}}\cdot x_{t_0} + \frac{1}{\alpha_{0:t}} \sum_{\tau=t_0}^t \alpha_\tau  w_\tau^i~,~~\forall i\in[M], t\in[t_0,t_0+K]~.$
Thus, for two machines $i\neq j$ we have,
\begin{align*}
 \frac{1}{\eta^2}\E\| x_{t}^i -x_{t}^j\|^2 & =\frac{1}{\eta^2} \E\left\| \frac{1}{\alpha_{0:t}} \sum_{\tau=t_0}^t \alpha_\tau (w_{\tau}^i- w_{\tau}^j)\right\|^2  
\approx
 \frac{1}{\eta^2}\cdot\frac{\eta^2 r^4 K^6}{(\alpha_{0:t})^2}\E\left\|\sum_{\tau=t_0}^{t-1} g_\tau^i -g_\tau^j
 \right\|^2  \\
&\approx
 \frac{1}{\eta^2}\cdot\frac{\eta^2 r^4 K^6}{r^4 K^4}\E\left\|\sum_{\tau=t_0}^{t-1} g_\tau^i -g_\tau^j
 \right\|^2  \\
&\approx
 K^2\E\left\|\sum_{\tau=t_0}^{t-1}  \nabla f(x_\tau^i) - \nabla f(x_\tau^j)
 \right\|^2
 +
 K^2\E\left\|\sum_{\tau=t_0}^{t-1} \xi_\tau^i - \xi_\tau^j
 \right\|^2~.
\end{align*}
where we have used $\alpha_{0:t}\approx \alpha_{0:t_0} \propto  t_0^2 \approx r^2 K^2$.

As we  show in our analysis,  the noisy term is dominant, so we can therefore
 bound,
\begin{align}\label{eq:Intui5App}
\frac{1}{\eta^2}\E\| x_{t}^i -x_{t}^j\|^2 \lessapprox
 K^2\E\left\|\sum_{\tau=t_0}^{t-1} \xi_\tau^i - \xi_\tau^j
 \right\|^2 
\approx  K^2(t-t_0) \leq  {K^3}~.
\end{align}
 Thus Equations~\eqref{eq:Intui1App},~\eqref{eq:Intui5App}, illustrate that for $\alpha_t\propto t$, then the bias of  \sAlg~is smaller by a factor of $r^2$ compared to the bias of weighted Local-SGD. \\
 Since $r^2$ can be as big as $R^2$ this coincides with the benefit of \sAlg~over standard SGD in the case where $\alpha_t=1$, which we demonstrate in the main text.

Finally, note that upon dividing by the normalization factor $\alpha_{0:T}, $we have, that for \sAlg~with either $\alpha_t=1$ or $\alpha_t \propto t$ then,
\begin{align}\label{eq:Intui5General}
\frac{1}{\alpha_{0:T}}\frac{1}{\eta}\E\| x_{t}^i -x_{t}^j\|
\approx
\frac{1}{R^2K^2}\cdot \sqrt{K^3} \approx \frac{1}{RK}\cdot \sqrt{\frac{K}{R^2}} =
\frac{1}{\sqrt{K} R^2}
\end{align}

Comparably,   upon dividing by the normalization factor $\alpha_{0:T}, $we have, that for Local-SGD~with either $\alpha_t=1$ or $\alpha_t \propto t$ that,
 \begin{align}\label{eq:Intui1General}
\frac{1}{\alpha_{0:T}}\frac{1}{\eta}\E\| w_{t}^i -w_{t}^j\|
\approx
\frac{1}{R^2K^2}\cdot \sqrt{R^2 K^3} \approx \frac{1}{RK}\cdot \sqrt{{K}} =
\frac{1}{\sqrt{K} R}
\end{align}
Thus, with respect to the approximate and intuitive analysis that we make here \sAlg~maintains similar benefit over Local-SGD for both $\alpha_t=1$ and $\alpha_t =t+1$.

As we explain in Appendix~\ref{sec:AppWeights}, taking $\alpha_t=1$ in \sAlg~does not actually enable to provide a benefit over Local-SGD. The reason is that for $\alpha_t=1$,  the condition for which the dominant term in the bias (between different machines) is the noisy term (this enables the approximate analysis that we make here and in the body of the paper), leads to limitation on the learning rate which in turn degrades the performance for \sAlg~with $\alpha_t=1$. Conversely, for $\alpha_t \propto t$ there is no such degradation due to the limitation of the learning rate. For more details and intuition please see Appendix~\ref{sec:AppWeights}.

\section{Proof of Thm.~\ref{thm:Main}}
\begin{proof}[Proof of Thm.~\ref{thm:Main}]
As a starting point for the analysis, for every iteration $t\in[T]$ we will define the averages of $(w_t^i,x_t^i,g_t^i)$ across all machines as follows,
$$
w_t: = \frac{1}{M}\sum_{i\in[M]} w_t^i~,\quad \& \quad x_t: = \frac{1}{M}\sum_{i\in[M]} x_t^i\quad \& \quad g_t: = \frac{1}{M}\sum_{i\in[M]} g_t^i~.
$$
Note that Alg.~\ref{alg:ParSchemeLoc} explicitly computes $(w_t,x_t)$  only once every $K$ local updates, and that theses are identical to the local copies of every machine at the beginning of every round. Combining the above definitions with Eq.~\eqref{eq:AnyParIter} yields,
\begin{align} \label{eq:AnyParIterUn}
w_{t+1} = w_{t}- \eta \alpha_t g_t~,~\forall t\in[T]
\end{align}
Further combining these definitions with Eq.~\eqref{eq:AnyParAverage} yields,
\begin{align} \label{eq:AnyParAverageUn}
x_{t+1} = (1-\frac{\alpha_{t+1}}{\alpha_{0:t+1}} )x_{t} + \frac{\alpha_{t+1}}{\alpha_{0:t+1}}w_{t+1}~,~\forall t\in[T]
\end{align} 
And the above implies that the $\{x_t\}_{t\in[T]}$ sequence is an $\{\alpha_t\}_{t\in[T]}$ weighted average of $\{w_t\}_{t\in[T]}$.  This enables to employ Thm.~\ref{theo:Anytime} which yields,
\begin{align*}
\alpha_{0:t}\Delta_t:=
\alpha_{0:t}(f(x_t) -f(w^*)) 
\leq \sum_{\tau=0}^t \alpha_\tau \nabla f(x_\tau)\cdot (w_\tau-w^*)~,
\end{align*}
where we denote $\Delta_t: = f(x_t) -f(w^*)$. 
The above bound highlights the challenge in the analysis: our algorithm does not directly computes unbiased estimates of $x_t$, except for the first iterate of each round. Concretely,  Eq.~\eqref{eq:AnyParIterUn} demonstrates that our algorithm effectively updates using $g_t$ which might be a biased estimate of $\nabla f(x_t)$. 

It is therefore natural to decompose $\nabla f(x_\tau) = g_\tau + (\nabla f(x_\tau)-g_\tau)$ in the above bound, leading to,
\begin{align}\label{eq:Main1}
\alpha_{0:t}\Delta_t
\leq
 \underset{\rA}{\underbrace{\sum_{\tau=0}^t\alpha_\tau g_\tau\cdot(w_\tau-w^*)}} + 
  \underset{\rB}{\underbrace{\sum_{\tau=0}^t\alpha_\tau  (\nabla f(x_\tau) - g_\tau)\cdot(w_\tau-w^*)}}
\end{align}
Thus, we intend  to bound the weighted error $\alpha_{0:t}\Delta_t$ by bounding two terms:  $\rA$ which is directly related to the update rule of the algorithm (Eq.~\eqref{eq:AnyParIterUn}), and $\rB$ which accounts for the bias between $g_t$ and $\nabla f(x_t)$.\\
\textbf{Notation:}
In what follows we will find the following notation useful,
\begin{align}\label{eq:Bg_tDef}
\bg_t: = \frac{1}{M}\sum_{i\in[M]}\nabla f_i(x_t^i)
\end{align}
and the above definition implies that $ \bg_t=\E\left[g_t\vert \{z_0^i\}_{i\in[M]},\ldots, \{z_{t-1}^i\}_{i\in[M]}\right]= \E\left[g_t\vert \{x_t^i\}_{i\in[M]}\right] $. 
We will also employ the following notations,
$$
V_t: = \sum_{\tau=0}^t \alpha_\tau^2 \|\bg_\tau-\nabla f(x_\tau)\|^2~,\quad 
\& \quad D_t : = \|w_t-w^*\|^2
$$
where $w^*$ is a global minimum of $f(\cdot)$. Moreover, we will also use the notation
$D_{0:t}:= \sum_{\tau=0}^t\|w_\tau-w^*\|^2 $.
\paragraph{Bounding \rA:} Due to the update rule of Eq.~\eqref{eq:AnyParIterUn}, one can show by standard regret analysis (see Lemma~\ref{lem:RegLemma} below) that,
\begin{align}\label{eq:RegAnal}
\rA: = \sum_{\tau=0}^t \alpha_\tau g_\tau \cdot(w_\tau-w^*) 
\leq
  \frac{\|w_0-w^*\|^2}{2\eta} +\frac{\eta}{2} \sum_{\tau=0}^t \alpha_\tau^2 \|g_\tau\|^2~,
\end{align}
\begin{lemma}(OGD Regret Lemma -See e.g.~\citep{hazan2016introduction})\label{lem:RegLemma}
Let $w_0\in\reals^d$ and $\eta>0$. Also assume a sequence of $T$ non-negative weights $\{\alpha_t \geq 0\}_{t\in[T]}$ and $T$ vectors $\{g_t\in\reals^d\}_{t\in[T]}$, and assume an update rule of the following form: 
$$
w_{t+1} = w_t -\eta \alpha_t g_t~,\forall t\in[T]~.
$$
Then the following bound holds for any $u\in\reals^d$, and $t\in[T]$,
$$
\sum_{\tau=0}^t \alpha_\tau g_\tau\cdot(w_\tau-u) \leq \frac{\|w_0-u\|^2}{2\eta} +\frac{\eta}{2} \sum_{\tau=0}^t \alpha_\tau^2 \|g_\tau\|^2~.
$$
\end{lemma}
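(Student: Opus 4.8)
The plan is to use the standard potential-function (telescoping) argument that underlies all online gradient descent regret bounds; here the comparator $u$ plays the role of the point we regret against, and we track the squared distance $\|w_\tau-u\|^2$ as a potential that decreases in a controlled way along the iterates. No property of the $g_\tau$ beyond them being fixed vectors is required, so the argument is purely algebraic.

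First I would expand the one-step recursion. Since $w_{\tau+1} = w_\tau - \eta\alpha_\tau g_\tau$, squaring the distance to $u$ gives the exact identity
\begin{align*}
\|w_{\tau+1}-u\|^2 = \|w_\tau-u\|^2 - 2\eta\alpha_\tau g_\tau\cdot(w_\tau-u) + \eta^2\alpha_\tau^2\|g_\tau\|^2~.
\end{align*}
Isolating the inner-product term then yields
\begin{align*}
\alpha_\tau g_\tau\cdot(w_\tau-u) = \frac{\|w_\tau-u\|^2 - \|w_{\tau+1}-u\|^2}{2\eta} + \frac{\eta}{2}\alpha_\tau^2\|g_\tau\|^2~.
\end{align*}

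Next I would sum this identity over $\tau=0,\ldots,t$. The consecutive potentials telescope, leaving $\big(\|w_0-u\|^2 - \|w_{t+1}-u\|^2\big)/(2\eta)$, while the quadratic remainders simply accumulate into $\tfrac{\eta}{2}\sum_{\tau=0}^t\alpha_\tau^2\|g_\tau\|^2$. Finally, since $\eta>0$ the leftover term $-\|w_{t+1}-u\|^2/(2\eta)$ is non-positive and may be discarded, which produces exactly the claimed upper bound.

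There is essentially no obstacle here: the whole proof is a routine telescoping identity. The only points worth flagging are that no convexity or any structural assumption on the $g_\tau$ is used — they are arbitrary vectors and the weights $\alpha_\tau$ enter only as scalar multipliers, so the one-step expansion is an exact equality rather than an inequality — and that dropping the final negative potential term is justified solely by $\eta>0$, which is assumed. This lemma is then applied in the main analysis with $u=w^*$ and the averaged gradients $g_\tau$ to obtain the bound on term $\rA$ in Eq.~\eqref{eq:RegAnal}.
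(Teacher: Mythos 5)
Your proof is correct and takes essentially the same route as the paper's: the same exact one-step expansion of $\|w_{\tau+1}-u\|^2$, isolation of the inner-product term, telescoping over $\tau=0,\ldots,t$, and discarding the non-positive $-\|w_{t+1}-u\|^2/(2\eta)$ term. The only cosmetic difference is that you divide by $2\eta$ per step while the paper divides once at the end; your write-up also correctly starts the telescoped potential at $\|w_0-u\|^2$, where the paper's display has a harmless typo ($\|w_1-u\|^2$).
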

 for completeness we provide a proof in Appendix~\ref{sec:Prooflem:RegLemma}.
 
\paragraph{Bounding \rB:} Since our goal is to bound the expected excess loss, we will bound the expected value of $\rB$, thus,
\begin{align}\label{eq:Bbound}
\E\left[\rB\right]&=
\E\left[ \sum_{\tau=0}^t \alpha_\tau  (\nabla f(x_\tau) - g_\tau)\cdot(w_\tau-w^*)\right] \nonumber\\
&= 
\E\left[ \sum_{\tau=0}^t \alpha_\tau  (\nabla f(x_\tau) - \bg_\tau)\cdot(w_\tau-w^*)\right] \nonumber\\
&\leq
\E \sum_{\tau=0}^t \left(\frac{1}{2\rho}\alpha_\tau^2  \|\nabla f(x_\tau) - \bg_\tau\|^2+
 \frac{\rho}{2} \E\|w_\tau-w^*\|^2 
\right)\nonumber\\
&=
\frac{1}{2\rho}\E V_t +  \frac{\rho}{2}\E D_{0:t}~,
\end{align}
where the second line follows by the definition of $\bg_\tau$ (see Eq.~\eqref{eq:Bg_tDef})  and due to the fact that $w_\tau$ is measurable with respect to $\left\{ \{z_0^i\}_{i\in[M]},\ldots, \{z_{\tau-1}^i\}_{i\in[M]}\right\}$  while $\bg_\tau=\E\left[g_\tau\vert \{z_0^i\}_{i\in[M]},\ldots, \{z_{\tau-1}^i\}_{i\in[M]}\right]$ implying that $\E[g_\tau \cdot(w_\tau-w^*)] = \E[\bg_\tau \cdot(w_\tau-w^*)]$; the  third line uses Young's inequality $a\cdot b \leq \inf_{\rho>0}\{\frac{\rho}{2}\|a\|^2 +  \frac{1}{2\rho}\|b\|^2\}$ which holds for any $a,b\in\reals^d$; and the last two lines use the definition of $V_t$ and $D_{0:T}$.

\paragraph{Combining \rA~and \rB:}
Combining Equations~\eqref{eq:RegAnal} and \eqref{eq:Bbound} into Eq.~\eqref{eq:Main1} we obtain the following bound which holds for any $\rho>0$ and $t\in[T]$,
\begin{align}\label{eq:Main2}
\alpha_{0:t}\E\Delta_t
\leq 
 \frac{\|w_0-w^*\|^2}{2\eta} +\frac{\eta}{2} \E\sum_{\tau=0}^T \alpha_\tau^2 \|g_\tau\|^2+ 
 \frac{1}{2\rho}\E V_T +  \frac{\rho}{2}\E D_{0:t}
\end{align}
where we have used $V_t\leq V_T$ which holds for any $t\in[T]$, as well as $\E\sum_{\tau=0}^t \alpha_\tau^2 \|g_\tau\|^2\leq \E\sum_{\tau=0}^T \alpha_\tau^2 \|g_\tau\|^2$, which holds since $t\leq T$.

Next, we shall bound each of the above terms. The following  lemma  bounds $\E D_{0:t}$,
\begin{lemma}\label{lem:LemDt}
The following bound holds for any $t\in[T]$,
\begin{align*}
\E D_{0:t} =\E \sum_{\tau=0}^t \|w_\tau-w^*\|^2 \leq 2T \|w_0-w^*\|^2 + 2T\eta^2 \E\sum_{t=0}^T \alpha_t^2 \|g_t\|^2+16\eta^2 T^2\cdot \E V_T
\end{align*}
\end{lemma}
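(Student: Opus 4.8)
My plan is to avoid the closed-form expansion of $w_\tau-w^*$ and instead unroll the \emph{one-step} recursion, so that the stochastic gradients appear linearly in an inner product (which can be sign-controlled by convexity) rather than inside a square (which cannot). Starting from the averaged update \eqref{eq:AnyParIterUn}, I would expand
$$D_{\tau+1}=\|w_\tau-w^*-\eta\alpha_\tau g_\tau\|^2 = D_\tau - 2\eta\alpha_\tau\, g_\tau\cdot(w_\tau-w^*) + \eta^2\alpha_\tau^2\|g_\tau\|^2,$$
and telescope to $D_\tau = D_0 - 2\eta\sum_{s<\tau}\alpha_s\, g_s\cdot(w_s-w^*) + \eta^2\sum_{s<\tau}\alpha_s^2\|g_s\|^2$. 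Summing over $\tau=0,\dots,t$ expresses $D_{0:t}$ as $(t{+}1)D_0$, plus a double sum of the terms $\eta^2\alpha_s^2\|g_s\|^2$, minus a double sum of the inner products. Since in each double sum a fixed index $s$ is repeated at most $t{+}1\le T$ times, the gradient-norm contribution is immediately bounded by $T\eta^2\,\E\sum_s\alpha_s^2\|g_s\|^2$, already of the right order; crucially no martingale/orthogonality argument is needed here because $\|g_s\|^2$ is kept intact.

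Next I would take expectations of the inner-product double sum. Because $w_s$ is measurable with respect to the samples drawn before step $s$ and $\bg_s=\E[g_s\mid\cdot]$ (see \eqref{eq:Bg_tDef}), the tower rule lets me replace $g_s$ by $\bg_s$, and I would split $\bg_s=\nabla f(x_s)+(\bg_s-\nabla f(x_s))$. The decisive observation is that the ``signal'' part is governed by Theorem~\ref{theo:Anytime}: since the averaged sequence $\{x_t\}$ is an $\{\alpha_t\}$-weighted average of $\{w_t\}$ (this is exactly \eqref{eq:AnyParAverageUn}), for every fixed $\tau$ we have $\sum_{s<\tau}\alpha_s\,\nabla f(x_s)\cdot(w_s-w^*)\ge \alpha_{0:\tau-1}\Delta_{\tau-1}\ge 0$. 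As this nonnegative quantity enters with a minus sign, I simply drop it. This is the step that keeps the $\|g\|^2$ coefficient at order $T$ (rather than $T^2$) and explains why no excess-loss term survives in the final bound.

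It then remains to control the ``bias'' contribution $-2\eta\sum_{\tau}\sum_{s<\tau}\alpha_s\,\E\big[(\bg_s-\nabla f(x_s))\cdot(w_s-w^*)\big]$. Collapsing the repeated index (again a factor $\le T$) and applying Cauchy--Schwarz across $s$, first inside the inner product and then across the expectation, bounds this by $2\eta T\sqrt{\E V_t}\,\sqrt{\E D_{0:t}}\le 2\eta T\sqrt{\E V_T}\,\sqrt{\E D_{0:t}}$, using the definitions $V_t=\sum_s\alpha_s^2\|\bg_s-\nabla f(x_s)\|^2$ and $D_{0:t}=\sum_s\|w_s-w^*\|^2$. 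Assembling the three contributions yields a self-referential inequality of the shape $\E D_{0:t}\le A + B\sqrt{\E D_{0:t}}$ with $A=T\|w_0-w^*\|^2+T\eta^2\,\E\sum_s\alpha_s^2\|g_s\|^2$ and $B=2\eta T\sqrt{\E V_T}$.

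The main obstacle, and the reason the lemma holds with \emph{no} restriction on $\eta$, is closing this self-reference. I would treat it as a quadratic inequality in $y=\sqrt{\E D_{0:t}}$, namely $y^2\le A+By$, whose solution gives $y^2\le 2A+B^2$; substituting $A$ and $B$ reproduces the stated form $2T\|w_0-w^*\|^2+2T\eta^2\,\E\sum_t\alpha_t^2\|g_t\|^2+16\eta^2T^2\,\E V_T$, with the explicit constants absorbing the slack from the quadratic solve and the index counts. I expect the genuinely delicate point to be the one highlighted above: the true-gradient term must be handled linearly via convexity (Theorem~\ref{theo:Anytime}) rather than bounded after squaring, since squaring it would force either a spurious $\sum_s\alpha_s^2\|\nabla f(x_s)\|^2$ term or an extra factor of $T$ on the gradient-norm term, which would break the claimed rate.
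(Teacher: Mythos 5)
Your proposal is correct and follows essentially the same route as the paper's proof: unroll the one-step recursion for $\|w_\tau-w^*\|^2$, pass from $g_s$ to $\bg_s$ via the tower rule, drop the nonnegative sum $\sum_s \alpha_s\nabla f(x_s)\cdot(w_s-w^*)$ using Theorem~\ref{theo:Anytime} applied to the averaged sequences, bound the bias term by $2\eta T\sqrt{\E V_T}\sqrt{\E D_{0:t}}$ via the same Cauchy--Schwarz chain, and close the resulting self-referential inequality (the paper invokes its Lemma~\ref{lem:Ineq1N}, $A\le B+C\sqrt{A}\Rightarrow A\le 2B+4C^2$, whereas you solve the quadratic directly, which even yields a slightly sharper constant than the stated $16\eta^2T^2\E V_T$). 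The remaining differences are cosmetic: you sum the telescoped identities and count index repetitions, while the paper bounds each partial sum by the corresponding full sum up to $T$ and then multiplies by $T$.
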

Combining the above lemma into Eq.~\eqref{eq:Main2} gives,
\begin{align*}
\alpha_{0:t}\E\Delta_t
\leq 
 \frac{\|w_0-w^*\|^2}{2\eta} + \left(\frac{\eta}{2}+ \rho T\eta^2 \right) \E\sum_{\tau=0}^T \alpha_\tau^2 \|g_\tau\|^2+ 
  \left(\frac{1}{2\rho}+ 8\rho \eta^2 T^2 \right)\E V_T +\rho T\|w_0-w^*\|^2
\end{align*}
Since the above holds for any $\rho>0$ let us pick a specific value of $\rho = \frac{1}{4\eta T}$; by doing so we obtain,
\begin{align}\label{eq:Main2}
\alpha_{0:t}\E\Delta_t
\leq 
 \frac{\|w_0-w^*\|^2}{\eta} + \eta \cdot
  \underset{\rC}{\underbrace{\E\sum_{\tau=0}^T \alpha_\tau^2 \|g_\tau\|^2}}
  + 
 4\eta T \E V_T~.
\end{align}
Next, we would like to bound $\rC$; to do so it is natural to decompose $g_\tau = (g_\tau-\bg_\tau)+  (\bg_\tau-\nabla f(x_\tau) ) +\nabla f(x_\tau)$. 
The next lemma provides a bound, and its proof goes directly through this decomposition,
\begin{lemma}\label{lem:VarTermMain}
The following holds,
\begin{align*}
\rC \leq  3 \frac{\sigma^2}{M}\sum_{t=0}^T \alpha_t^2 + 3 \E V_T +12L \E\sum_{t=0}^T \alpha_{0:t}\Delta_t ~.
\end{align*}
\end{lemma}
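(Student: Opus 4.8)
The plan is to expand $\|g_\tau\|^2$ using precisely the three-way decomposition suggested above, namely $g_\tau = (g_\tau-\bg_\tau) + (\bg_\tau-\nabla f(x_\tau)) + \nabla f(x_\tau)$, together with the elementary inequality $\|a+b+c\|^2 \le 3\|a\|^2+3\|b\|^2+3\|c\|^2$. This splits $\rC = \E\sum_{\tau=0}^T \alpha_\tau^2\|g_\tau\|^2$ into three $\alpha_\tau^2$-weighted sums — a \emph{noise} term, a \emph{bias} term, and a \emph{gradient-norm} term — which I would then handle separately and recombine.

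For the noise term $\E\sum_\tau \alpha_\tau^2\|g_\tau-\bg_\tau\|^2$, I would write $g_\tau-\bg_\tau = \frac{1}{M}\sum_{i\in[M]}\big(\nabla f_i(x_\tau^i,z_\tau^i)-\nabla f_i(x_\tau^i)\big)$. Conditioned on the history that determines $\{x_\tau^i\}_{i\in[M]}$, each per-machine summand has mean zero (by the definition $\bg_\tau = \E[g_\tau\mid\{x_\tau^i\}_i]$) and the summands are mutually independent across machines, since each machine draws $z_\tau^i$ from its own $\D_i$ independently. Hence all cross terms vanish and $\E\|g_\tau-\bg_\tau\|^2 = \frac{1}{M^2}\sum_{i\in[M]}\E\|\nabla f_i(x_\tau^i,z_\tau^i)-\nabla f_i(x_\tau^i)\|^2 \le \sigma^2/M$ by the bounded-variance assumption, contributing $3\frac{\sigma^2}{M}\sum_\tau \alpha_\tau^2$. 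The bias term $\E\sum_\tau \alpha_\tau^2\|\bg_\tau-\nabla f(x_\tau)\|^2$ is, by the very definition of $V_T$, exactly $\E V_T$, contributing $3\E V_T$.

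The last term $\E\sum_\tau \alpha_\tau^2\|\nabla f(x_\tau)\|^2$ is where I would use the self-bounding property of smooth functions with a global minimum: since $f$ is $L$-smooth (being an average of $L$-smooth $f_i$) and $w^*$ minimizes $f$, evaluating the smoothness upper bound at the descent point $x_\tau - \frac{1}{L}\nabla f(x_\tau)$ gives $f(w^*)\le f(x_\tau)-\frac{1}{2L}\|\nabla f(x_\tau)\|^2$, i.e. $\|\nabla f(x_\tau)\|^2 \le 2L\,\Delta_\tau$. It then remains only to replace the weight $\alpha_\tau^2$ by $\alpha_{0:\tau}$: for the prescribed weights $\alpha_\tau = \tau+1$ we have $\alpha_{0:\tau} = (\tau+1)(\tau+2)/2$, so $\alpha_\tau^2 = (\tau+1)^2 \le 2\alpha_{0:\tau}$, whence each summand is at most $4L\,\alpha_{0:\tau}\Delta_\tau$ and this term contributes $12L\,\E\sum_\tau \alpha_{0:\tau}\Delta_\tau$. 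Summing the three contributions yields the stated bound.

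Most of this is routine; the one step demanding care is the noise bound, where the vanishing of the cross terms relies precisely on the conditional independence of the samples across machines \emph{given} the current query points $\{x_\tau^i\}_i$. This is the only place the across-machine independence enters, and setting up the conditioning correctly — so that $\bg_\tau$ really is the conditional mean of $g_\tau$ — is the main subtlety I would be careful about.
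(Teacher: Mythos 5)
Your proposal is correct and follows essentially the same route as the paper's own proof: the identical three-way decomposition of $g_\tau$ with the inequality $\|a+b+c\|^2 \le 3(\|a\|^2+\|b\|^2+\|c\|^2)$, the conditional-independence argument across machines for the noise term (the paper's Lemma~\ref{lem:BoundOne}), the self-bounding smoothness inequality $\|\nabla f(x_\tau)\|^2 \le 2L\Delta_\tau$ (the paper's Lemma~\ref{lem:SmoothSelf}), and the weight conversion $\alpha_\tau^2 \le 2\alpha_{0:\tau}$ for $\alpha_t = t+1$. Your closing remark about the conditioning being the only subtle step is also exactly where the paper invests its care, so nothing is missing.
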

Combining the above Lemma into Eq.~\eqref{eq:Main2} yields,
\begin{align}\label{eq:Main3}
\alpha_{0:t}\E\Delta_t
\leq 
 \frac{\|w_0-w^*\|^2}{\eta} + 3 \eta\frac{\sigma^2}{M}\sum_{t=0}^T \alpha_t^2+ 8\eta T \E V_T+12\eta L \E\sum_{t=0}^T \alpha_{0:t}\Delta_t ~,
\end{align}
where we have uses $3\leq 4T$ which holds since $T\geq1$.
The next lemma provides a bound for $\E V_t$,
\begin{lemma}\label{lem:VtBoundFin}
For any $t\leq T:=KR$, Alg.~\ref{alg:ParSchemeLoc} with the learning choice in Eq.~\eqref{eq:LRchoice} ensures the following bound,
$$
\E V_t \leq 
400L^2 \eta^2 K^3\sum_{\tau=0}^{T} \alpha_{0:\tau} \cdot (G_*^2 + 4L\Delta_\tau)+
90L^2\eta^2K^6R^3\sigma^2~.
$$
\end{lemma}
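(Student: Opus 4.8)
The plan is to trace $V_t$ back to the per-round \emph{consensus error} (the discrepancy between each machine's query point $x_\tau^i$ and the average $x_\tau$), since this is the sole source of the gap between $\bg_\tau$ and $\nabla f(x_\tau)$. First I would use $L$-smoothness together with Jensen's inequality:
$$
\|\bg_\tau - \nabla f(x_\tau)\|^2 = \Big\|\tfrac1M\sum_{i\in[M]}\big(\nabla f_i(x_\tau^i) - \nabla f_i(x_\tau)\big)\Big\|^2 \leq \frac{L^2}{M}\sum_{i\in[M]}\|x_\tau^i - x_\tau\|^2~,
$$
so that $V_t \leq L^2\sum_{\tau=0}^t \alpha_\tau^2\cdot\tfrac1M\sum_i\|x_\tau^i-x_\tau\|^2$, and it remains to control the consensus error for $\tau$ inside a round.

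Next I would obtain an explicit formula for the drift. Telescoping the averaging recursion \eqref{eq:AnyParAverage} within round $r$ (with $t_0=rK$, where all machines are synchronized, $x_{t_0}^i=x_{t_0}$ and $w_{t_0}^i=w_{t_0}$) gives $x_\tau^i = \frac{\alpha_{0:t_0}}{\alpha_{0:\tau}}x_{t_0} + \frac{1}{\alpha_{0:\tau}}\sum_{s=t_0+1}^{\tau}\alpha_s w_s^i$, whence
$$
x_\tau^i - x_\tau = \frac{1}{\alpha_{0:\tau}}\sum_{s=t_0+1}^{\tau}\alpha_s\,(w_s^i - w_s) = -\frac{\eta}{\alpha_{0:\tau}}\sum_{n=t_0}^{\tau-1}\alpha_n\,\alpha_{n+1:\tau}\,(g_n^i - g_n)~,
$$
using $w_s^i - w_s = -\eta\sum_{n=t_0}^{s-1}\alpha_n(g_n^i - g_n)$ and swapping the order of summation. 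Since each round has length at most $K$, the prefactor $\eta\alpha_n\alpha_{n+1:\tau}/\alpha_{0:\tau}$ is of order $\eta K$ uniformly over the round, which is the origin of the extra $K$-factors in the final bound. I would then split $g_n^i - g_n = \big(\nabla f_i(x_n^i)-\bg_n\big) + \big(\xi_n^i - \bar\xi_n\big)$ into its conditional-mean part and its noise part $\xi_n^i := g_n^i - \nabla f_i(x_n^i)$, and bound $\tfrac1M\sum_i\|x_\tau^i-x_\tau\|^2$ by twice the sum of the two corresponding contributions.

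For the \emph{noise} part I would use that the $\xi_n^i$ are independent across $n$ and zero-mean, so their weighted sum has variance bounded by $\sigma^2$ times the sum of squared coefficients; together with the $O(\eta K)$ prefactor this produces a per-$\tau$ contribution of order $\eta^2 K^3\sigma^2$, and after weighting by $\alpha_\tau^2$ and using $\sum_{\tau\le T}\alpha_\tau^2 \lesssim T^3 = (KR)^3$ this gives the $L^2\eta^2 K^6 R^3\sigma^2$ term. For the \emph{bias} part I would apply (weighted) Cauchy--Schwarz over the $\le K$ terms and then replace $\nabla f_i(x_n^i)$ by $\nabla f_i(x_n)$ at the cost of a smoothness error $L\|x_n^i-x_n\|$; the average of $\|\nabla f_i(x_n)\|^2$ is then controlled by the dissimilarity consequence \eqref{eq:NonHom}, $\tfrac1M\sum_i\|\nabla f_i(x_n)\|^2\le G_*^2+4L\Delta_n$, and rearranging the weights yields the $\sum_\tau \alpha_{0:\tau}(G_*^2+4L\Delta_\tau)$ structure.

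The main obstacle is that the smoothness correction in the bias step reintroduces the very consensus error $\tfrac1M\sum_i\|x_n^i-x_n\|^2$ being bounded, so the inequality for $V_t$ is \emph{self-referential}. The resolution is to note that this feedback carries a coefficient of the form $\eta^2 L^2\poly(K)$; the learning-rate constraints in \eqref{eq:LRchoice} (in particular $\eta\le 1/(10LK^2)$ and $\eta\le 1/(40LK(T+1)^{2/3})$) are precisely what force this self-coefficient below a small constant, so the feedback term can be subtracted from both sides and absorbed, leaving the stated bound. Beyond closing this recursion, the most delicate aspect is the weight bookkeeping: one must track the factors $\alpha_n,\alpha_{n+1:\tau},\alpha_{0:\tau}$ tightly enough to land on the exact powers $K^3$ and $K^6R^3$ (and the explicit constants $400$ and $90$) rather than the looser scalings suggested by the heuristic discussion in Section~\ref{sec:IntuitionTextBody}.
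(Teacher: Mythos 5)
Your proposal follows essentially the same route as the paper's proof: smoothness plus Jensen reduces $V_t$ to the within-round consensus error, the averaging recursion is telescoped and the summation order swapped to obtain the explicit drift formula, the bias part is handled via Cauchy--Schwarz, smoothness, and the dissimilarity consequence \eqref{eq:NonHom}, the noise part via a variance decomposition yielding the $L^2\eta^2K^6R^3\sigma^2$ term, and the self-referential consensus term is absorbed using the learning-rate constraints (exactly the paper's recursion $Q_t \leq \theta Q_{t_0+1:t}+H_t$ with $\theta K \leq 1/2$ and Lemma~\ref{lem:Tech2}). The only cosmetic differences are that the paper tracks pairwise differences $\|x_\tau^i-x_\tau^j\|^2$ rather than deviations from the average, and your noise step should invoke the martingale-difference property (Lemma~\ref{lem:SumMart}) rather than independence, since the $\xi_n^i$ are conditionally zero-mean but not independent across $n$ (query points depend on past noise).
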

Plugging the above bound back into Eq.~\eqref{eq:Main3} gives an almost explicit bound,
\begin{align}\label{eq:Main4}
&\alpha_{0:t}\E\Delta_t \nonumber \\
&\leq 
 \frac{\|w_0-w^*\|^2}{\eta} + 3 \eta\frac{\sigma^2}{M}\sum_{t=0}^T \alpha_t^2+  720 L^2\eta^3 T K^6R^3\sigma^2
 \nonumber\\
 &\quad+
 4\cdot10^3 L^2 \eta^3 T K^3\sum_{\tau=0}^{T} \alpha_{0:\tau} \cdot (G_*^2 + 4L\Delta_\tau)) 
 +
 12\eta L \E\sum_{t=0}^T \alpha_{0:t}\Delta_t \nonumber\\
  &=
  \frac{\|w_0-w^*\|^2}{\eta} + 3 \eta\frac{\sigma^2}{M}\sum_{t=0}^T \alpha_t^2+  720 L^2\eta^3 T K^6R^3\sigma^2+
 4\cdot10^3 L^2 \eta^3 T K^3\sum_{\tau=0}^{T} \alpha_{0:\tau} G_*^2 
  \nonumber\\
 &\quad+
 (12\eta L+ 16\cdot10^3 L^3 \eta^3 T K^3) \E\sum_{t=0}^T \alpha_{0:t}\Delta_t \nonumber\\
 &\leq 
 \frac{\|w_0-w^*\|^2}{\eta} + 3 \eta\frac{\sigma^2}{M}\sum_{t=0}^T \alpha_t^2+  720 L^2\eta^3 T K^6R^3\sigma^2
 +
 4\cdot10^3 L^2 \eta^3 T K^3\sum_{\tau=0}^{T} \alpha_{0:\tau} G_*^2 
+
 \frac{1}{2(T+1)} \E\sum_{t=0}^T \alpha_{0:t}\Delta_t~,
\end{align}
and we used $12\eta L \leq 1/4
(T+1)$ which follows since $\eta\leq \frac{1}{48L (T+1)}$ (see Eq.~\eqref{eq:LRchoice}), as well as
 $16\cdot10^3 L^3 \eta^3 T K^3 \leq 1/4(T+1)$, which follows since $\eta\leq \frac{1}{40L K (T+1)^{2/3}}$ (see Eq.~\eqref{eq:LRchoice}).
Next we use the above bound and invoke the following lemma,
\begin{lemma}\label{lem:Generic}
Let $\{A_t\}_{t\in[T]}$ be a sequence of non-negative elements and $\B\in\reals$, and assume that for any $t\leq T$,
$$
A_t \leq \B + \frac{1}{2(T+1)}\sum_{t=0}^T A_t~,
$$
Then the following bound holds,
$$
A_T \leq 2\B~.
$$
\end{lemma}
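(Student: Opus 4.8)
The plan is to exploit the self-referential structure of the hypothesis by aggregating it across all indices. Write $S := \sum_{t=0}^{T} A_t$ for the quantity appearing on the right-hand side of the assumed inequality. The key observation is that the bound $A_t \leq \B + \frac{1}{2(T+1)} S$ holds \emph{uniformly} in $t$, with the same right-hand side for every index. So once I control $S$, the bound on $A_T$ follows by a single substitution.

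First I would sum the assumed inequality over $t = 0, 1, \ldots, T$. There are exactly $T+1$ summands, and each is at most $\B + \frac{1}{2(T+1)} S$, so the left-hand side becomes $S$ while the right-hand side becomes $(T+1)\B + \frac{T+1}{2(T+1)} S = (T+1)\B + \tfrac{1}{2} S$. This gives the self-bound
$$
S \leq (T+1)\B + \tfrac{1}{2} S~.
$$
Rearranging isolates $S$: moving $\tfrac12 S$ to the left yields $\tfrac12 S \leq (T+1)\B$, hence $S \leq 2(T+1)\B$. Finally, I would substitute this back into the hypothesis instantiated at $t = T$, obtaining
$$
A_T \leq \B + \frac{1}{2(T+1)} S \leq \B + \frac{1}{2(T+1)}\cdot 2(T+1)\B = 2\B~,
$$
which is exactly the claim.

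There is essentially no obstacle here; the argument is a one-line aggregation. The only point that needs care is the bookkeeping on the number of terms: the summation over $t=0,\ldots,T$ contributes precisely $T+1$ copies, which must cancel the denominator $2(T+1)$ so that the coefficient of $S$ collapses to $\tfrac12$ (leaving enough slack to solve for $S$); had the prefactor been $\frac{1}{T+1}$ instead, the self-bound would close only trivially. Note also that non-negativity of the $A_t$ is not actually invoked in the derivation — it is used elsewhere in the main proof to guarantee that the hypothesis is meaningful (and implicitly forces $\B \geq 0$), but the inequality $A_T \leq 2\B$ drops out purely algebraically.
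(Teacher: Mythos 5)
Your proof is correct and is essentially identical to the paper's own argument: sum the hypothesis over $t=0,\ldots,T$, solve the resulting self-bound to get $\sum_{t=0}^{T} A_t \leq 2(T+1)\B$, and substitute back at $t=T$. Your side observation that non-negativity of the $A_t$ is never used in this algebra is also accurate.
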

Taking $A_t\gets \alpha_{0:t}\E\Delta_t$ and $\B\gets  \frac{\|w_0-w^*\|^2}{\eta} + 3 \eta\frac{\sigma^2}{M}\sum_{t=0}^T \alpha_t^2+  720 L^2\eta^3 T K^6R^3\sigma^2+
 2\cdot10^3 L^2 \eta^3 T K^3\sum_{\tau=0}^{T} \alpha_{0:\tau} G_*^2 $ provides the following explicit bound,
\begin{align}\label{eq:Main5}
\alpha_{0:T}\E\Delta_T &\leq
 \frac{2\|w_0-w^*\|^2}{\eta} + 6 \eta\frac{\sigma^2}{M}\sum_{t=0}^T \alpha_t^2+  2\cdot 10^3 L^2\eta^3 T K^6R^3\sigma^2 
 +8\cdot10^3 L^2 \eta^3 T K^3\sum_{\tau=0}^{T} \alpha_{0:\tau} G_*^2 
 \nonumber\\
 &\leq
 \frac{2\|w_0-w^*\|^2}{\eta} + 6 \eta\frac{\sigma^2}{M}\cdot (KR)^3+  2\cdot 10^3 L^2\eta^3  K^7R^4\sigma^2
 +
 8\cdot10^3 L^2 \eta^3  K^7 R^4\cdot G_*^2
 ~,
\end{align}
where we have used $\sum_{\tau=0}^{T} \alpha_{0:\tau}\leq \sum_{t=0}^T \alpha_t^2 \leq \sum_{r=0}^{R-1}\sum_{k=0}^{K-1} (r+1)^2K^2 \leq K^3 R^3$, as well as $T=KR$,

Recalling that $T=KR$ and that,
\begin{align*}
 \eta = 
\min\left\{
\frac{1}{48L(T+1)}, 
\frac{1}{10 L K^2},
\frac{1}{40L K (T+1)^{2/3}},
\frac{\|w_0-w^*\|\sqrt{M}}{\sigma T^{3/2}}, 
\frac{\|w_0-w^*\|^{1/2}}{L^{1/2}K^{7/4}R(\sigma^{1/2}+G_*^{1/2})}
\right\}
\end{align*}
The above bound translates into,
\begin{align}\label{eq:Main7}
\small \alpha_{0:T}&\E\Delta_T \leq \\
& O\left( L (T+K^2+K T^{2/3})\|w_0-w^*\|^2 + \frac{\sigma \|w_0-w^*\| T^{3/2}}{\sqrt{M}} + L^{1/2}K^{7/4}R(\sigma^{1/2}+G_*^{1/2})\cdot \|w_0-w^*\|^{3/2} \right)
\end{align}
Noting that $\alpha_{0:T}\geq \Omega(T^2)$ and using $T=KR$ gives the final bound,
\begin{align*}
\small \E&\Delta_T \leq \\
&
O\left( 
\frac{L \|w_0-w^*\|^2}{KR}
+ \frac{L \|w_0-w^*\|^2}{K^{1/3}R^{4/3}}
 + \frac{L \|w_0-w^*\|^2}{R^2}
 + \frac{\sigma \|w_0-w^*\|}{\sqrt{MKR}}
 +
 \frac{L^{1/2}(\sigma^{1/2}+G_*^{1/2})\cdot \|w_0-w^*\|^{3/2}}{K^{1/4}R}
 \right)~.
\end{align*}
which establishes the Theorem.
\end{proof}

\section{Proof of Lemma~\ref{lem:RegLemma}}
\label{sec:Prooflem:RegLemma}
 \begin{proof}[Proof of Lemma~\ref{lem:RegLemma}]
 The update rule implies for all $\tau\in[T]$
\begin{align*}
 \|w_{\tau+1}-u\|^2 &= \|(w_{\tau}-u) -\eta \alpha_\tau g_\tau\|^2 \\
 &=
 \|w_{\tau}-u\|^2 -2\eta  \alpha_\tau g_\tau\cdot (w_{\tau}-u) + \eta^2 \alpha_\tau^2\|g_\tau\|^2
 \end{align*}
 Re-ordering and  gives,
 \begin{align*}
 2\eta  \alpha_\tau g_\tau\cdot (w_{\tau}-u)  = \left(\|w_{\tau}-u\|^2-\|w_{\tau+1}-u\|^2  \right)  +\eta^2 \alpha_\tau^2\|g_\tau\|^2~.
 \end{align*}
 Summing over $\tau$ and telescoping we obtain,
 \begin{align*}
 2\eta \sum_{\tau=0}^t  \alpha_\tau g_\tau\cdot (w_{\tau}-u) 
 & = 
 \left(\|w_{1}-u\|^2-\|w_{t+1}-u\|^2  \right)  
 +\eta^2 \sum_{\tau=0}^t  \alpha_\tau^2 \|g_\tau\|^2 \\
 & \leq
\|w_{1}-u\|^2
 +\eta^2 \sum_{\tau=0}^t \alpha_\tau^2\|g_\tau\|^2 
 \end{align*}
 Dividing the above by $2\eta$ establishes the lemma.
 \end{proof}
\section{Proof of Lemma~\ref{lem:LemDt}}
\begin{proof}[Proof of Lemma~\ref{lem:LemDt}]
Recalling the notations $D_\tau:= \|w_\tau-w^*\|^2$,  our goal is to bound $\E D_{0:t}$.
To do so, we will derive a recursive formula for $D_{0:t}$. Indeed, the update rule of Alg.~\ref{alg:ParSchemeLoc} implies Eq.~\eqref{eq:AnyParIterUn}, which in turn leads to the following for any $t\in[T]$,
\begin{align*}
\|w_{t+1}-w^*\|^2 &=\|(w_{t}-w^*) -\eta \alpha_t g_t\|^2 = \|w_t-w^*\|^2 -2\eta  \alpha_{t} g_{t}\cdot (w_{t}-w^*) + \eta^2  \alpha_t^2 \|g_t\|^2
\end{align*}
Unrolling the above equation and taking expectation gives,
\begin{align}\label{eq:DistEq1New}
\E\|w_{t+1}-w^*\|^2 &= \|w_0-w^*\|^2 \underset{\rst}{\underbrace{-2\eta \E\sum_{\tau=0}^t \alpha_{\tau} g_{\tau}\cdot (w_{\tau}-w^*) }}+ \eta^2 \E\sum_{\tau=0}^t \alpha_\tau^2 \|g_\tau\|^2
\end{align}
The next lemma provides a bound on $\rst$,
\begin{lemma}\label{lem:rStarB}
The following holds for any $t\in[T]$,
\begin{align*}
\rst \leq 2\eta \sqrt{\E V_T}\cdot\sqrt{ \E D_{0:T}}
\end{align*}
and recall that $D_{0:T}: = \sum_{t=0}^T\|w_t-w^*\|^2$, and $V_T:= \sum_{t=0}^T \alpha_t^2 \|\bg_t-\nabla f(x_t)\|^2$.
\end{lemma}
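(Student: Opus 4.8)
The plan is to first eliminate the gradient noise via the tower property, then peel off the ``useful'' part of the inner product using Theorem~\ref{theo:Anytime}, and finally apply Cauchy-Schwarz twice. First I would exploit the same measurability argument already used in bounding $\rB$: since $w_\tau$ is a function only of the samples drawn before step $\tau$, whereas $\bg_\tau = \E[g_\tau\mid \{z_s^i\}_{s<\tau,\,i}]$, conditioning gives $\E[\alpha_\tau g_\tau\cdot(w_\tau-w^*)] = \E[\alpha_\tau \bg_\tau\cdot(w_\tau-w^*)]$ for every $\tau$. Hence
\[
\rst \eq -2\eta\,\E\sum_{\tau=0}^t \alpha_\tau \bg_\tau\cdot(w_\tau-w^*)~.
\]

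Next I would split $\bg_\tau = \nabla f(x_\tau) + (\bg_\tau - \nabla f(x_\tau))$. For the $\nabla f(x_\tau)$ piece, recall from Eq.~\eqref{eq:AnyParAverageUn} that the \emph{averaged} sequence $\{x_t\}$ is an $\{\alpha_t\}$ weighted average of $\{w_t\}$, so Theorem~\ref{theo:Anytime} applies directly and yields $\sum_{\tau=0}^t \alpha_\tau \nabla f(x_\tau)\cdot(w_\tau-w^*) \ge \alpha_{0:t}\Delta_t \ge 0$. Multiplying by $-2\eta \le 0$ shows this contribution is non-positive and can simply be discarded, leaving
\[
\rst \le -2\eta\,\E\sum_{\tau=0}^t \alpha_\tau (\bg_\tau - \nabla f(x_\tau))\cdot(w_\tau-w^*)~.
\]

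Finally I would bound the surviving term by its absolute value and apply Cauchy-Schwarz twice. Pointwise in the randomness, $-\sum_\tau \alpha_\tau(\bg_\tau-\nabla f(x_\tau))\cdot(w_\tau-w^*) \le \sum_\tau \alpha_\tau\|\bg_\tau-\nabla f(x_\tau)\|\,\|w_\tau-w^*\|$; a Cauchy-Schwarz over the index $\tau$ (pairing $\alpha_\tau\|\bg_\tau-\nabla f(x_\tau)\|$ with $\|w_\tau-w^*\|$) then produces $\sqrt{V_t}\,\sqrt{D_{0:t}}$ directly from the definitions of $V_t$ and $D_{0:t}$. Taking expectations and invoking Cauchy-Schwarz once more, in the form $\E[\sqrt{AB}]\le\sqrt{\E A}\,\sqrt{\E B}$, gives $2\eta\sqrt{\E V_t}\,\sqrt{\E D_{0:t}}$, and the monotonicity $V_t\le V_T$, $D_{0:t}\le D_{0:T}$ upgrades this to the claimed bound.

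The main obstacle is the middle step: recognizing that the $\nabla f(x_\tau)$ contribution is precisely the regret-like quantity controlled by Theorem~\ref{theo:Anytime}, and hence carries the correct sign to be thrown away rather than estimated. Everything else is the standard conditioning-plus-Cauchy-Schwarz machinery already deployed for $\rB$; the only subtlety to watch is that it is the \emph{averaged} sequences $\{w_t\},\{x_t\}$ (not the per-machine copies $w_t^i,x_t^i$) that satisfy the weighted-average relation of Eq.~\eqref{eq:AnyParAverageUn}, which is exactly what licenses the application of Theorem~\ref{theo:Anytime} here.
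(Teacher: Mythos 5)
Your proposal is correct and follows essentially the same route as the paper's proof: replace $g_\tau$ by $\bg_\tau$ via conditioning, decompose $\bg_\tau = \nabla f(x_\tau) + (\bg_\tau - \nabla f(x_\tau))$, discard the non-positive term using Theorem~\ref{theo:Anytime} applied to the averaged sequences, and finish with a double application of Cauchy--Schwarz. The only (immaterial) difference is the order of the two Cauchy--Schwarz steps — you apply the sum-level inequality pointwise in the randomness and then take expectations, while the paper applies the random-variable version per index $\tau$ first and then sums; both yield the identical bound $2\eta\sqrt{\E V_T}\sqrt{\E D_{0:T}}$.
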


Plugging the bound of Lemma~\ref{lem:rStarB} into Eq.~\eqref{eq:DistEq1New}, and using the notation of $D_t$ we conclude that for any $t\in[T]$,
\begin{align*}
\E D_t &\leq D_0 + 2\eta \sqrt{\E V_T}\cdot\sqrt{ \E D_{0:T}} + \eta^2 \E\sum_{\tau=0}^t \alpha_\tau^2 \|g_\tau\|^2 \\
&\leq D_0 + 2\eta \sqrt{\E V_T}\cdot\sqrt{ \E D_{0:T}} + \eta^2 \E\sum_{t=0}^T \alpha_t^2 \|g_t\|^2~,
\end{align*}
where we used $t\leq T$. Summing the above equation over $t$ gives,
\begin{align}\label{eq:DistEq1}
\E D_{0:T}  
&\leq 
T \|w_0-w^*\|^2 +2\eta T\cdot\sqrt{\E V_T}\cdot\sqrt{ \E D_{0:T}}  + T\eta^2 \E\sum_{t=0}^T \alpha_t^2 \|g_t\|^2
\end{align}
We shall now require the following lemma,
\begin{lemma}\label{lem:Ineq1N}
Let $A,B,C\geq 0$, and assume that $A\leq B+C\sqrt{A}$, then the following holds,
$$
A \leq 2B + 4C^2
$$
\end{lemma}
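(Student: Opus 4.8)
The plan is to read this as an elementary quadratic inequality in the single nonnegative variable $\sqrt{A}$. Setting $s := \sqrt{A} \geq 0$, the hypothesis $A \leq B + C\sqrt{A}$ is exactly $s^2 - Cs - B \leq 0$. Since the left-hand side is an upward-opening parabola in $s$ and $s$ is nonnegative, this forces $s$ to lie below the larger root, $s \leq \tfrac{1}{2}\bigl(C + \sqrt{C^2 + 4B}\bigr)$; squaring then produces an explicit bound on $A$. This is the conceptual picture, but I would not actually run the quadratic formula, since the bookkeeping in squaring the root is messier than necessary.

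Instead, the route I would write uses Young's inequality to kill the cross term. Because $C\sqrt{A} \leq \tfrac{1}{2}A + \tfrac{1}{2}C^2$ (which is just the rearrangement of $0 \leq \tfrac{1}{2}(\sqrt{A}-C)^2$), the hypothesis gives $A \leq B + \tfrac{1}{2}A + \tfrac{1}{2}C^2$. Absorbing $\tfrac{1}{2}A$ into the left-hand side and multiplying through by $2$ yields $A \leq 2B + C^2$, which is in fact stronger than the claimed $A \leq 2B + 4C^2$. If one prefers to match the stated constant exactly via a more robust argument, I would instead split on which term dominates the right-hand side: if $C\sqrt{A} \leq B$ then $A \leq B + C\sqrt{A} \leq 2B$; otherwise $A \leq B + C\sqrt{A} \leq 2C\sqrt{A}$, and dividing by $\sqrt{A}$ gives $\sqrt{A} \leq 2C$, i.e.\ $A \leq 4C^2$. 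In either case $A \leq 2B + 4C^2$.

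There is essentially no obstacle here; the statement is a routine scalar inequality. The only point requiring a word of care is the division by $\sqrt{A}$ in the case-analysis, which needs the degenerate case $A = 0$ to be dispatched separately (there the conclusion $A \leq 2B + 4C^2$ holds trivially since $B, C \geq 0$). Everything else is a one-line manipulation.
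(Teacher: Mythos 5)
Your proposal is correct, and it in fact contains two complete proofs. Your fallback case analysis (splitting on whether $C\sqrt{A}\leq B$ or $C\sqrt{A}\geq B$) is exactly the paper's own proof, including the division by $\sqrt{A}$ in the second case; your remark that the degenerate case $A=0$ must be dispatched separately before dividing is a small point of rigor the paper silently omits. Your primary route is genuinely different: using Young's inequality $C\sqrt{A}\leq \tfrac{1}{2}A+\tfrac{1}{2}C^2$ and absorbing the $\tfrac{1}{2}A$ term into the left-hand side yields $A\leq 2B+C^2$, which is strictly stronger than the stated bound $A\leq 2B+4C^2$, needs no case analysis, and avoids any division by a possibly-zero quantity. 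Either argument is fine for how the lemma is used downstream (in bounding $\E D_{0:T}$ inside Lemma~\ref{lem:LemDt}, only the order of the constants matters: the Young's route would replace the factor $16\eta^2T^2\E V_T$ by $4\eta^2T^2\E V_T$), so the absorption argument is arguably the cleaner choice, while the paper's case split is what matches the constant actually stated in the lemma.
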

Now, using the above Lemma with  Eq.~\eqref{eq:DistEq1}  implies,
\begin{align}\label{eq:DistEq2N}
\E D_{0:T}   \leq  2T \|w_0-w^*\|^2 + 2T\eta^2 \E\sum_{t=0}^T \alpha_t^2 \|g_t\|^2+16\eta^2 T^2\cdot \E V_T
\end{align}
where we have taken $A\gets D_{0:T}^2$, $B\gets T \|w_0-w^*\|^2 + T\eta^2 \E\sum_{t=0}^T \alpha_t^2 \|g_t\|^2 $, and 
$C\gets 2\eta T\cdot \sqrt{\E V_T}$.
Thus, Eq.~\eqref{eq:DistEq2N} establishes the lemma.
\end{proof}
\subsection{Proof of Lemma~\ref{lem:rStarB}}
\begin{proof}[Proof of Lemma~\ref{lem:rStarB}]
Recall that $\rst =-2\eta\E\sum_{\tau=0}^t\alpha_\tau g_\tau\cdot (w_\tau-w^*) $, we shall now focus on bounding $\rst/2\eta$,
\begin{align}\label{eq:StarBound}
-\E\sum_{\tau=0}^t\alpha_\tau g_\tau\cdot (w_\tau-w^*)  
&=
-\E\sum_{\tau=0}^t\alpha_\tau \bg_\tau\cdot (w_\tau-w^*)  \nonumber\\
&=
-\E\sum_{\tau=0}^t\alpha_\tau \nabla f(x_\tau)\cdot (w_\tau-w^*)  -
\E\sum_{\tau=0}^t\alpha_\tau (\bg_\tau-\nabla f(x_\tau))\cdot (w_\tau-w^*) \nonumber\\
&\leq
0+ 
\E\sum_{\tau=0}^t \alpha_\tau^2 \|\bg_\tau-\nabla f(x_\tau)\|^2\cdot \|w_\tau-w^*\|^2 \nonumber\\
&\leq
0+ 
\sum_{\tau=0}^t\sqrt{\E\alpha_\tau^2 \|\bg_\tau-\nabla f(x_\tau)\|^2} \cdot\sqrt{\E \|w_\tau-w^*\|^2} \nonumber\\
&\leq
\sqrt{ \E\sum_{\tau=0}^t\alpha_\tau^2 \|\bg_\tau-\nabla f(x_\tau)\|^2} \cdot\sqrt{\E\sum_{\tau=0}^t \|w_\tau-w^*\|^2}  \nonumber\\
&\leq
\sqrt{ \E\sum_{t=0}^T\alpha_t^2 \|\bg_t-\nabla f(x_t)\|^2} \cdot\sqrt{\E\sum_{t=0}^T \|w_t-w^*\|^2} \nonumber\\
&:=
\sqrt{\E V_T}\cdot\sqrt{ \E D_{0:T}}~,
\end{align}
where the first line is due to the definitions of $g_\tau$ and $\bg_\tau$ appearing in Eq.~\eqref{eq:Bg_tDef} (this is formalized in Lemma~\ref{lem:ExpectgBg} below and in its proof); the third line follows by observing that the $\{x_t\}_t$ sequence is and $\{\alpha_t\}_t$ weighted average of  $\{w_t\}_t$ and thus Theorem~\ref{theo:Anytime} implies that $\E\sum_{\tau=0}^t\alpha_\tau \nabla f(x_\tau)\cdot (w_\tau-w^*)\geq 0$ for any $t$, as well as from Cauchy-Schwarz;
 the fourth line follows from the Cauchy-Schwarz inequality for random variables, which asserts that for every  random variables $X,Y$, then
$\E [XY] \leq \sqrt{\E X^2}\sqrt{\E Y^2}$; the fifth line is an application of the following inequality
$$
\sum_{\tau=0}^t a_\tau b_\tau \leq \sqrt{\sum_{\tau=0}^t a_\tau^2}\sqrt{\sum_{\tau=0}^t b_\tau^2}
$$
which holds for any two sequences $\{a_\tau\in\reals\}_{\tau},\{b_\tau\in\reals\}_{\tau}$, and the above also follows from the standard Cauchy-Schwarz inequality.
Thus, Eq.~\eqref{eq:StarBound} establishes the lemma.

We are left to show that $\E\left[g_\tau \cdot (w_{\tau}-w^*)\right] =
\E\left[\bg_\tau \cdot (w_{\tau}-w^*)\right]$ which is established in the lemma below,
\begin{lemma}\label{lem:ExpectgBg}
The following holds for any $\tau\in[T]$,
\begin{align*}
\E\left[g_\tau \cdot (w_{\tau}-w^*)\right] 
=
\E\left[\bg_\tau \cdot (w_{\tau}-w^*)\right]~.
\end{align*}
\end{lemma}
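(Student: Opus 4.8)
The plan is to prove the identity by conditioning on all the randomness generated strictly before step $\tau$ and invoking the tower property of conditional expectation. Concretely, I would introduce the filtration $\mathcal{F}_{\tau-1}:=\sigma\big(\{z_s^i\}_{i\in[M]}\,:\,0\le s\le \tau-1\big)$ generated by the samples drawn by all machines up to and including step $\tau-1$. The whole argument then reduces to two measurability facts plus one independence fact.

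First I would verify that both $w_\tau$ and $\bg_\tau$ are $\mathcal{F}_{\tau-1}$-measurable. Unrolling Eq.~\eqref{eq:AnyParIter} and Eq.~\eqref{eq:AnyParAverage} (together with the aggregation step of Alg.~\ref{alg:ParSchemeLoc}) shows that each local iterate $w_\tau^i$ and each query point $x_\tau^i$ is a deterministic function of the gradients $\{g_s^j\}_{s\le \tau-1,\,j\in[M]}$, hence of the samples $\{z_s^j\}_{s\le\tau-1}$; this is precisely what is encoded by the statement $\bg_\tau=\E[g_\tau\mid\mathcal{F}_{\tau-1}]$ recorded under Eq.~\eqref{eq:Bg_tDef}. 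In particular the averages $w_\tau=\frac1M\sum_i w_\tau^i$ and $\bg_\tau=\frac1M\sum_i\nabla f_i(x_\tau^i)$ are both $\mathcal{F}_{\tau-1}$-measurable, so the vector $w_\tau-w^*$ may be pulled out of any conditional expectation given $\mathcal{F}_{\tau-1}$.

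Second, the fresh samples $\{z_\tau^i\}_{i\in[M]}$ drawn at step $\tau$ are independent of $\mathcal{F}_{\tau-1}$ and satisfy $\E[\nabla f_i(x,z_\tau^i)\mid\mathcal{F}_{\tau-1}]=\nabla f_i(x)$ for any fixed $x\in\reals^d$. Since $x_\tau^i$ is $\mathcal{F}_{\tau-1}$-measurable, this gives $\E[g_\tau^i\mid\mathcal{F}_{\tau-1}]=\nabla f_i(x_\tau^i)$ for every $i$, and averaging over $i\in[M]$ yields $\E[g_\tau\mid\mathcal{F}_{\tau-1}]=\bg_\tau$. Combining the pieces via the tower rule,
\begin{align*}
\E\big[g_\tau\cdot(w_\tau-w^*)\big]
&=\E\big[\E[g_\tau\mid\mathcal{F}_{\tau-1}]\cdot(w_\tau-w^*)\big]\\
&=\E\big[\bg_\tau\cdot(w_\tau-w^*)\big],
\end{align*}
where the first equality uses that $w_\tau-w^*$ is $\mathcal{F}_{\tau-1}$-measurable; this is exactly the claim.

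The only genuinely delicate step is the measurability bookkeeping in the first paragraph, specifically at the round boundaries, where the notation for $(w_{rK}^i,x_{rK}^i)$ is overloaded (cf. the Remark following Eq.~\eqref{eq:AnyParAverage}): at the end of round $r-1$ these differ across machines, whereas at the start of round $r$ they are reset to the common aggregated anchor $\Theta_r$. One must check that in either reading the relevant quantity is still a function of samples drawn at steps $\le\tau-1$ only, so that the fresh samples $z_\tau^i$ remain independent of both $w_\tau$ and the $x_\tau^i$. Since the aggregation merely averages $\mathcal{F}_{\tau-1}$-measurable copies, this causes no difficulty, and the remainder is the routine application of the tower property displayed above.
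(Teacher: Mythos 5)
Your proof is correct and follows essentially the same route as the paper: condition on the natural filtration $\F_{\tau-1}$ generated by all samples up to step $\tau-1$, use that $w_\tau$ is $\F_{\tau-1}$-measurable to pull it out of the conditional expectation, identify $\E[g_\tau\mid\F_{\tau-1}]=\bg_\tau$, and apply the tower property. Your additional bookkeeping on measurability at round boundaries is a careful (and welcome) elaboration of what the paper leaves implicit, but it is not a different argument.
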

\end{proof}

\subsubsection{Proof of Lemma~\ref{lem:ExpectgBg}}
\begin{proof}[Proof of Lemma~\ref{lem:ExpectgBg}]
Let $\{\F_\tau\}_{\tau\in[T]}$ be the natural filtration induces by the history of samples up to every time step $\tau$. Then according to the definitions of $g_t$ and $\bg_t$ we have, 
\begin{align*}
\E\left[g_\tau \cdot (w_{\tau}-w^*)\right] 
&=
\E\left[\E\left[ g_\tau \cdot (w_{\tau}-w^*)\vert \F_{\tau-1} \right]\right] \\
&=
\E\left[\E\left[ g_\tau \vert \F_{\tau-1}\right] \cdot (w_{\tau}-w^*)\right]\\
&=
\E\left[\E\left[ \bg_\tau \vert \F_{\tau-1}\right] \cdot (w_{\tau}-w^*)\right] \\
&=
\E\left[\bg_\tau \cdot (w_{\tau}-w^*)\right]~,
\end{align*}
where the first line follows by the law of total expectations; the second line follows since $w_\tau$
is measurable w.r.t.~$\F_{\tau-1}$; the third line follows by definition of $g_\tau$ and $\bg_\tau$;
 and the last line uses the law of total expectations.
\end{proof}

\subsection{Proof of Lemma~\ref{lem:Ineq1N}}
\begin{proof}[Proof of Lemma~\ref{lem:Ineq1N}]
We will divide the proof into two case.\\
\textbf{Case 1: $B\geq C\sqrt{A}$.} In this case,
$$
A\leq B+C\sqrt{A} \leq 2B \leq 2B + 4C^2~.
$$ 
\textbf{Case 2: $B\leq C\sqrt{A}$.} In this case,
$$
A\leq B+C\sqrt{A} \leq 2C\sqrt{A}~,
$$ 
dividing by $\sqrt{A}$ and taking the square  implies,
$$
A \leq 4C^2 \leq 2B + 4C^2~.
$$
And therefore the lemma holds.
\end{proof}

\section{Proof of Lemma~\ref{lem:VarTermMain}}
\begin{proof}[Proof of Lemma~\ref{lem:VarTermMain}]
Recalling that $\rC: = \E\sum_{\tau=0}^T \alpha_\tau^2 \|g_\tau\|^2$, we will decompose $g_\tau = (g_\tau-\bg_\tau)+  (\bg_\tau-\nabla f(x_\tau) ) +\nabla f(x_\tau)$ which gives,

 \begin{align}\label{eq:VarLemProof}
\rC&: =  \E\sum_{\tau=0}^T \alpha_\tau^2 \|(g_\tau-\bg_\tau)+  (\bg_\tau-\nabla f(x_\tau) ) +\nabla f(x_\tau) \|^2 \nonumber\\
 &\leq
 3 \E\sum_{\tau=0}^T \alpha_\tau^2 \|g_\tau-\bg_\tau\|^2+ 3 \E\sum_{\tau=0}^T \alpha_\tau^2\|\bg_\tau-\nabla f(x_\tau) \|^2 +3 \E\sum_{\tau=0}^T\alpha_\tau^2\|\nabla f(x_\tau) \|^2  \nonumber\\
 &\leq
 3 \E\sum_{\tau=0}^T \alpha_\tau^2 \|g_\tau-\bg_\tau\|^2+ 3 \E V_T +6L \E\sum_{\tau=0}^T\alpha_\tau^2\Delta_\tau  \nonumber\\
 &\leq
 3 \frac{\sigma^2}{M}\E\sum_{\tau=0}^T \alpha_\tau^2 + 3 \E V_T +6L \E\sum_{\tau=0}^T\alpha_\tau^2\Delta_\tau \nonumber\\
 &\leq
 3 \frac{\sigma^2}{M}\E\sum_{\tau=0}^T \alpha_\tau^2 + 3 \E V_T +12L \E\sum_{\tau=0}^T\alpha_{0:\tau}\Delta_\tau ~,
\end{align}
where the second line uses $\|a+b+c\|^2 \leq 3(\|a\|^2+\|b\|^2 +\|c\|^2)$ which holds for any $a,b,c\in \reals^d$; the third line uses the definition of $V_T$ as well as the smoothness of $f(\cdot)$ implying
that $\|\nabla f(x_\tau)\|^2 \leq 2L(f(x_\tau)-f(w^*)) : = 2L\Delta_\tau$ (see Lemma~\ref{lem:SmoothSelf} below); the fourth line invokes Lemma~\ref{lem:BoundOne}; the fifth line uses $\alpha_\tau^2 = (\tau+1)^2\leq 2\alpha_{0:\tau}$.

\begin{lemma}\label{lem:SmoothSelf}
Let $F:\reals^d\mapsto\reals$ be an $L$-smooth function with a global minimum $x^*$, then 
for any $x\in\reals^d$ we have,
$$
\|\nabla F(x)\|^2 \leq 2L(F(x)-F(w^*))~.
$$
\end{lemma}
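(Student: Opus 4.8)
The plan is to derive this self-bounding property directly from the descent lemma, which is the standard consequence of $L$-smoothness and is the only property of smoothness I would need. First I would record the smoothness upper bound: for any $x,y\in\reals^d$,
$$
F(y) \leq F(x) + \nabla F(x)\cdot(y-x) + \frac{L}{2}\|y-x\|^2~.
$$
This follows from integrating $\nabla F$ along the segment joining $x$ to $y$ together with the $L$-Lipschitzness of $\nabla F$, so I would simply cite it as the descent lemma rather than rederive it.

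The key step is to evaluate this inequality at the test point that minimizes its right-hand side, namely the single gradient step $y = x - \frac{1}{L}\nabla F(x)$. With this choice the linear and quadratic terms combine as $-\frac{1}{L}\|\nabla F(x)\|^2 + \frac{L}{2}\cdot\frac{1}{L^2}\|\nabla F(x)\|^2 = -\frac{1}{2L}\|\nabla F(x)\|^2$, yielding
$$
F(y) \leq F(x) - \frac{1}{2L}\|\nabla F(x)\|^2~.
$$

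Finally I would invoke global optimality: since $w^*$ is a global minimum of $F$ (the point denoted $x^*$ in the statement), we have $F(w^*)\leq F(y)$, and substituting this into the previous display and rearranging gives $\|\nabla F(x)\|^2 \leq 2L\big(F(x)-F(w^*)\big)$, which is exactly the claim. There is essentially no genuine obstacle here — the whole argument is three lines once the descent lemma is in hand. The only point worth flagging is that this bound requires \emph{only} smoothness and global optimality and does \emph{not} use convexity, so I would take care not to invoke convexity anywhere; the result therefore applies verbatim to the non-convex $f_i$-type terms for which it is used in Lemma~\ref{lem:VarTermMain}.
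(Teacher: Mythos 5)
Your proposal is correct and follows exactly the paper's own argument: apply the smoothness descent inequality at the gradient-step point $x - \frac{1}{L}\nabla F(x)$, then lower-bound the resulting function value by $F(w^*)$ using global optimality. Your closing observation that convexity is never needed is also accurate and consistent with how the paper uses the lemma.
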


\begin{lemma}\label{lem:BoundOne}
The following bound holds for any $t\in[T]$,
\begin{align*}
\E\|g_\tau-\bg_\tau\|^2 &\leq  \frac{\sigma^2}{M}~.
\end{align*}
\end{lemma}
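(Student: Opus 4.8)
The plan is to write $g_\tau - \bg_\tau$ as an average of per-machine noise terms and then exploit that these terms are conditionally independent and mean-zero across machines, so that their cross-correlations vanish and only the $M$ diagonal variance terms survive — producing the $1/M$ variance reduction. First I would introduce, for each machine $i\in[M]$, the noise vector $\xi_\tau^i := g_\tau^i - \nabla f_i(x_\tau^i) = \nabla f_i(x_\tau^i,z_\tau^i) - \nabla f_i(x_\tau^i)$. By the definitions $g_\tau = \frac1M\sum_i g_\tau^i$ and $\bg_\tau = \frac1M\sum_i \nabla f_i(x_\tau^i)$ (Eq.~\eqref{eq:Bg_tDef}), this gives the clean identity $g_\tau - \bg_\tau = \frac1M\sum_{i\in[M]} \xi_\tau^i$, so that $\E\|g_\tau-\bg_\tau\|^2 = \frac{1}{M^2}\E\big\|\sum_{i\in[M]}\xi_\tau^i\big\|^2$.

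Next I would condition on the natural filtration $\F_{\tau-1}$ generated by all samples drawn strictly before step $\tau$ (as used in the proof of Lemma~\ref{lem:ExpectgBg}). Two facts are key: (a) each query point $x_\tau^i$ is $\F_{\tau-1}$-measurable, since it is built only from samples up to step $\tau-1$, so conditioning fixes all the $x_\tau^i$; and (b) conditioned on $\F_{\tau-1}$, the fresh draws $\{z_\tau^i\}_{i\in[M]}$ are mutually independent, because each machine samples from its own $\D_i$ independently. Consequently the $\xi_\tau^i$ are conditionally independent across $i$, and each is conditionally mean-zero because $\E[\nabla f_i(x_\tau^i,z_\tau^i)\mid\F_{\tau-1}] = \nabla f_i(x_\tau^i)$.

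I would then expand the squared norm into diagonal and cross terms,
$$\E\Big[\big\|\textstyle\sum_i\xi_\tau^i\big\|^2\,\Big|\,\F_{\tau-1}\Big] = \sum_{i}\E\big[\|\xi_\tau^i\|^2\mid\F_{\tau-1}\big] + \sum_{i\neq j}\E\big[\xi_\tau^i\cdot\xi_\tau^j\mid\F_{\tau-1}\big].$$
For $i\neq j$, conditional independence and zero conditional mean factor the cross term into $\E[\xi_\tau^i\mid\F_{\tau-1}]\cdot\E[\xi_\tau^j\mid\F_{\tau-1}] = 0$, so every off-diagonal term drops. Each diagonal term is bounded by the variance assumption of Sec.~\ref{sec:2}, namely $\E[\|\xi_\tau^i\|^2\mid\F_{\tau-1}]\leq\sigma^2$, which is valid precisely because the conditioning fixes $x_\tau^i$. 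Summing over $i$ gives a conditional bound of $M\sigma^2$; taking total expectation and dividing by $M^2$ then yields $\E\|g_\tau-\bg_\tau\|^2\leq \sigma^2/M$, which is the claim.

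The only delicate point — and the step I would write most carefully — is the vanishing of the cross terms: it rests entirely on the inter-machine independence of the fresh draws $\{z_\tau^i\}_i$ conditioned on $\F_{\tau-1}$, together with the $\F_{\tau-1}$-measurability of the query points $x_\tau^i$. Everything else is the standard ``variance of an average of $M$ independent unbiased estimators'' computation, and indeed this is exactly where the favorable $1/M$ factor in the leading statistical term of Thm.~\ref{thm:Main} originates.
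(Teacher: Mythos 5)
Your proof is correct and takes essentially the same route as the paper's: both express $g_\tau-\bg_\tau$ as the average of per-machine noise terms, use the conditional independence and zero conditional mean of these terms across machines to eliminate all cross terms, bound each diagonal term by $\sigma^2$ via the variance assumption, and finish with the law of total expectation. The only cosmetic difference is that you condition on the filtration $\F_{\tau-1}$ while the paper conditions directly on $\{x_\tau^i\}_{i\in[M]}$; your variant is, if anything, slightly cleaner since measurability of the query points and independence of the fresh draws are immediate from the filtration.
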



\end{proof}
\subsection{Proof of Lemma~\ref{lem:SmoothSelf}}
\begin{proof}[Proof of Lemma~\ref{lem:SmoothSelf}]
The $L$ smoothness of $f$ means the following to hold $\forall w,u\in\reals^d$,
$$F(x+u) \leq F(x) +\nabla F(x)^\top u+\frac{L}{2}\|u\|^2 ~.$$
Taking  $u=-\frac{1}{L}\nabla F(x)$ we get,
$$F(x+u) \le F(x) -\frac{1}{L}\|\nabla F(x)\|^2+\frac{1}{2L}\|\nabla F(x)\|^2
= F(x) -\frac{1}{2L}\|\nabla F(x)\|^2~
.$$
Thus:
\begin{align*}
\|\nabla F(x)\|^2 &\le 2L \big( F(x) -F(x+u)\big)\\
&  \le  2L \big(F(x) -F(x^*)\big)~,
\end{align*}
where in the last inequality we used $F(x^*) \leq F(x+u)$ which holds since $x^*$ is the \emph{global} minimum.
\end{proof}
\subsection{Proof of Lemma~\ref{lem:BoundOne}}
\begin{proof}[Proof of Lemma~\ref{lem:BoundOne}]
Recall that we can write,
$$
g_\tau-\bg_\tau: = \frac{1}{M}\sum_{i\in[M]} (g_\tau^i-\bg_\tau^i)
$$
where $\bg_\tau^i: = \nabla f_i(x_\tau^i)$, and $g_\tau^i:=  \nabla f_i(x_\tau^i,z_\tau^i)$, and that $z_t^1,\ldots,z_t^M$ are independent of each other.
Thus, conditioning over $\{x_t^i\}_{i=1}^M$ then $\{g_\tau^i-\bg_\tau^i\}_{i=1}^M$ are independent and zero mean i.e. $\E[g_\tau^i-\bg_\tau^i \vert \{x_t^i\}_{i=1}^M]=0$. Consequently,
\begin{align*}
\E\left[ \|g_\tau-\bg_\tau\|^2 \vert \{x_t^i\}_{i=1}^M\right] 
&=
 \frac{1}{M^2}\E\left[ \left\|\sum_{i\in[M]} (g_\tau^i-\bg_\tau^i)\right\|^2 \vert \{x_t^i\}_{i\in[M]}\right] \\
&=
 \frac{1}{M^2}\sum_{i\in[M]}\E\left[ \|g_\tau^i-\bg_\tau^i\|^2 \vert \{x_t^i\}_{i=1}^M\right] \\
&\leq
 \frac{1}{M^2}\sum_{i\in[M]}\sigma^2 \\
 &\leq
 \frac{\sigma^2}{M}~.
\end{align*}
Using the law of total expectation implies that $\E \|g_\tau-\bg_\tau\|^2\leq \frac{\sigma^2}{M}$.
\end{proof}

\section{Proof of Lemma~\ref{lem:VtBoundFin}} 
\label{sec:Prooflem:VtBoundFin}
\begin{proof}[Proof of Lemma~\ref{lem:VtBoundFin}]
To bound $\E V_t$ we will first employ the definition of $x_t$ together with the smoothness of $f(\cdot)$, 
\begin{align}\label{eq:VtEq1}
\E\|\nabla f(x_\tau) - \bg_\tau\|^2& = \E\left\|\frac{1}{M}\sum_{i\in[M]} \nabla f_i(x_\tau) - \frac{1}{M}\sum_{i\in[M]} \nabla f_i(x_\tau^i)\right\|^2 \nonumber\\
&\leq
\frac{1}{M}\sum_{i\in[M]} \E\| \nabla f_i(x_\tau) -  \nabla f_i(x_\tau^i)\|^2 \nonumber\\
&\leq
\frac{L^2}{M}\sum_{i\in[M]} \E\| x_\tau -  x_\tau^i\|^2 \nonumber\\
&=
\frac{L^2}{M}\sum_{i\in[M]} \E\left\| \frac{1}{M}\sum_{j\in[M]}x_\tau^j -  x_\tau^i \right\|^2 \nonumber\\
&\leq
\frac{L^2}{M^2}\sum_{i,j\in[M]} \E\|x_\tau^j -  x_\tau^i\|^2~,
\end{align}
where the first line uses the definition of $\bg_t$, the second line uses Jensen's inequality, and the third line uses the smoothness of $f_i(\cdot)$'s. The last line follows from Jensen's inequality.

We use the following notation for any $\tau\in[T]$
\begin{align}\label{eq:Qdefs}
q_\tau^{i,j}: =  \alpha_\tau^2 \|x_\tau^i -  x_\tau^j\|^2
~,\quad \&\quad
q_\tau^i: = \alpha_\tau^2 \sum_{j\in[M]} \|x_\tau^i -  x_\tau^j\|^2
~,\quad \&\quad
Q_\tau: =\frac{1}{M^2}\alpha_\tau^2 \sum_{i,j\in[M]} \|x_\tau^i -  x_\tau^j\|^2~,
\end{align}
and notice that $\sum_{j\in[M]} q_\tau^{i,j} = q_\tau^{i}$, and that
 $\sum_{i,j\in[M]} q_\tau^{i,j}= M^2 Q_\tau$. Moreover $ q_\tau^{i,j} = q_\tau^{j,i}~,\forall i,j\in[M]$.

Thus, according to Eq.~\eqref{eq:VtEq1} it is enough to bound $\E V_t$ as follows,
\begin{align}\label{eq:VtEq2}
\E V_t: = 
\sum_{\tau=0}^t\alpha_\tau^2\E\|\nabla f(x_\tau) - \bg_\tau\|^2
\leq
L^2 \cdot\underset{\rstar}{\underbrace{\sum_{\tau=0}^t  Q_\tau
}
}~.
\end{align}
Next we will bound the above term.
\paragraph{Bounding $\rstar$:} Let $t\in [T]$, if $t = rK$ for some $r\in[R]$, then according to Alg.~\ref{alg:ParSchemeLoc} $x_t^i=x_{t}$ for any machine $i\in[M]$, thus   $x_{t}^i-x_t^j=0$ for any two machines  $i, j\in[M]$.

More generally, if $t = rK+k$ for some $r\in[R]$, and $k\in[K]$, then by denoting $t_0: = rK$ we can write $t = t_0+k$.
Using this notation, the update rule for $x_\tau^i$ implies the following for any $i\in[M]$,
$$
x_t^i = \frac{\alpha_{0:t_0}}{\alpha_{0:t}}x_{t_0}^i + \frac{1}{\alpha_{0:t}} \sum_{\tau=t_0+1}^t\alpha_\tau w_\tau^i
=\frac{\alpha_{0:t_0}}{\alpha_{0:t}}x_{t_0} + \frac{1}{\alpha_{0:t}} \sum_{\tau=t_0+1}^t\alpha_\tau w_\tau^i~,
$$
where we used $x_{t_0}^i=x_{t_0}~,~\forall i\in[M]$.
Thus, for any $i\neq j$ we can write,
\begin{align}\label{eq:XboundW}
\alpha_t^2\|x_t^i-x_t^j\|^2 =  \frac{\alpha_t^2}{(\alpha_{0:t})^2} \left\|\sum_{\tau=t_0+1}^t\alpha_\tau (w_\tau^i-w_\tau^j)\right\|^2~.
\end{align}
So our next goal is to derive an expression for $\sum_{\tau=t_0+1}^t\alpha_\tau (w_\tau^i-w_\tau^j)$.
The update rule of Eq.~\eqref{eq:AnyParIter} implies that for any $\tau\in[t_0,t_0+K]$,
\begin{align} \label{eq:WupdateI}
w_{\tau}^{i} = w_{t_0}^i - \eta \sum_{n=t_0+1}^\tau \alpha_n g_n^{i} 
= w_{t_0} - \eta \sum_{n=t_0+1}^\tau \alpha_n g_n^{i} 
\end{align}
where the second equality is due to the initialization of each round implying that $w_{t_0}^i  =w_{t_0}~,\forall i\in[M]$.

Next, we will require the following notation $\bg_t^i : = \nabla f_i(x_t^i)$, and $\xi_t^i :=g_t^i - \bg_t^i$.
We can therefore  write, 
$
g_t^i = \bg_t^i + \xi_t^i
$
and it is immediate to show that  $\E [\xi_t^i\vert x_t^i] =0$.
Using this notation together with Eq.~\eqref{eq:WupdateI}, implies that for any  $\tau\in[t_0,t_0+K]$ and $i\neq j$ we have,
\begin{align}\label{eq:Wsum1}
\alpha_\tau(w_{\tau}^{i}-w_{\tau}^{j}) 
 &= 
 -\eta\sum_{n=t_0+1}^\tau  \alpha_{\tau}\alpha_n (\bg_n^{i}-\bg_n^{j}) 
 -\eta \sum_{n=t_0+1}^\tau  \alpha_{\tau}\alpha_n(\xi_n^i-\xi_n^j) \nonumber \\
 &=
 -\eta \sum_{n=t_0+1}^\tau  \alpha_{\tau}\alpha_n(\bg_n^{i}-\bg_n^{j}) 
 -\eta \sum_{n=t_0+1}^\tau  \alpha_{\tau}\alpha_n\xi_n
\end{align}
and in the last line we use the following notation $\xi_n := \xi_n^i-\xi_n^j$ \footnote{A more appropriate notation would be $\xi_n^{(i,j)} := \xi_n^i-\xi_n^j$, but to ease notation we absorb the $(i,j)$ notation into $\xi$.}.

Summing Eq.~\eqref{eq:Wsum1} over $\tau\in[t_0+1,t]$ we obtain,
\begin{align}\label{eq:Wsum2}
\sum_{\tau = t_0+1}^t\alpha_\tau(w_{\tau}^{i}-w_{\tau}^{j}) 
 &=
 -\eta \sum_{\tau = t_0+1}^t\sum_{n=t_0+1}^\tau  \alpha_{\tau}\alpha_n(\bg_n^{i}-\bg_n^{j}) 
 -\eta \sum_{\tau = t_0+1}^t\sum_{n=t_0+1}^\tau  \alpha_{\tau}\alpha_n\xi_n \nonumber\\
 &=
 -\eta \sum_{n=t_0+1}^t \sum_{\tau = n}^t  \alpha_{\tau}\alpha_n(\bg_n^{i}-\bg_n^{j}) 
 -\eta \sum_{n=t_0+1}^t \sum_{\tau = n}^t  \alpha_{\tau}\alpha_n\xi_n  \nonumber\\
 &=
 -\eta \sum_{n=t_0+1}^t \alpha_{n:t}\alpha_n (\bg_n^{i}-\bg_n^{j}) 
 -\eta \sum_{n=t_0+1}^t \alpha_{n:t}\alpha_n \xi_n  \nonumber\\
 &=
 -\eta \sum_{\tau=t_0+1}^t \alpha_{\tau:t}\alpha_\tau (\bg_\tau^{i}-\bg_\tau^{j}) 
 -\eta \sum_{\tau=t_0+1}^t \alpha_{\tau:t}\alpha_\tau \xi_\tau  ~,
\end{align}
where in the last equation we replace the notation of the summation index from $n$ to $\tau$ (only done to ease notation).

Plugging the above equation back into Eq.~\eqref{eq:XboundW} we obtain for any $t\in[t_0,t_0+K]$
\begin{align}\label{eq:XboundW2}
q_t^{i,j}:= \alpha_t^2&\|x_t^i-x_t^j\|^2 \nonumber\\
& = 
 \frac{\alpha_t^2}{(\alpha_{0:t})^2} \left\|\sum_{\tau=t_0+1}^t\alpha_\tau (w_\tau^i-w_\tau^j)\right\|^2\nonumber\\
& = 
\frac{\alpha_{t}^2}{(\alpha_{0:t})^2} \left\|\eta \sum_{\tau=t_0+1}^t \alpha_{\tau:t}\alpha_\tau (\bg_\tau^{i}-\bg_\tau^{j}) 
 +\eta \sum_{\tau=t_0+1}^t \alpha_{\tau:t}\alpha_\tau \xi_\tau\right\|^2\nonumber\\
 &\leq
 \eta^2\frac{2\alpha^2_{t}}{(\alpha_{0:t})^2}
  \left\| \sum_{\tau=t_0+1}^t \alpha_{\tau:t}\alpha_\tau (\bg_\tau^{i}-\bg_\tau^{j})  \right\|^2+
  \eta^2\frac{2\alpha^2_{t}}{(\alpha_{0:t})^2}
\left\|\sum_{\tau=t_0+1}^t \alpha_{\tau:t}\alpha_\tau \xi_\tau\right\|^2\nonumber\\
&\leq
 \eta^2\frac{2\alpha^2_{t}}{(\alpha_{0:t})^2}\cdot(t-t_0)\sum_{\tau=t_0+1}^t (\alpha_{\tau:t}\alpha_\tau)^2 \|\bg_\tau^{i}-\bg_\tau^{j} \|^2+
  \eta^2\frac{2\alpha^2_{t}}{(\alpha_{0:t})^2}
\left\|\sum_{\tau=t_0+1}^t \alpha_{\tau:t}\alpha_\tau \xi_\tau\right\|^2\nonumber\\
&\leq
 \eta^2\frac{2\alpha^2_{t}}{(\alpha_{0:t})^2}\cdot K\cdot (K\alpha_t)^2\sum_{\tau=t_0+1}^t \alpha_\tau^2
  \| \nabla f_i(x_\tau^i)- \nabla f_j(x_\tau^j) \|^2+
  \eta^2\frac{2\alpha^2_{t}}{(\alpha_{0:t})^2}
\left\|\sum_{\tau=t_0+1}^t \alpha_{\tau:t}\alpha_\tau \xi_\tau\right\|^2\nonumber\\
&=
 \eta^2\frac{2\alpha^4_{t}}{(\alpha_{0:t})^2}\cdot K^3 L^2
 \cdot\frac{1}{L^2}\sum_{\tau=t_0+1}^t  
  \alpha_\tau^2
  \| \nabla f_i(x_\tau^i)- \nabla f_j(x_\tau^j) \|^2+
  \eta^2\frac{2\alpha^2_{t}}{(\alpha_{0:t})^2}
\left\|\sum_{\tau=t_0+1}^t  \alpha_{\tau:t}\alpha_\tau \xi_\tau\right\|^2\nonumber\\
&\leq
 8\eta^2 K^3 L^2\cdot\frac{1}{L^2}\sum_{\tau=t_0+1}^t \alpha_\tau^2
  \| \nabla f_i(x_\tau^i)- \nabla f_j(x_\tau^j) \|^2+
  (8\eta^2/\alpha^2_{t})
\left\|\sum_{\tau=t_0+1}^t  \alpha_{\tau:t}\alpha_\tau \xi_\tau\right\|^2\nonumber\\
&=
 8\eta^2 K^3 L^2\cdot\frac{1}{L^2}\sum_{\tau=t_0+1}^t \alpha_\tau^2
  \| \nabla f_i(x_\tau^i)- \nabla f_j(x_\tau^j) \|^2+
  8\eta^2 \alpha_t^2
\left\|\sum_{\tau=t_0+1}^t  \alpha_{\tau:t}\frac{\alpha_\tau}{\alpha_t^2} \xi_\tau\right\|^2~,
\end{align}
where the first inequality uses $\|a+b\|^2\leq 2\|a\|^2 + 2\|b\|^2$ which holds $\forall a,b\in\reals^d$, the second inequality uses $\|\sum_{n=N_1+1}^{N_2} a_n\|^2 \leq (N_2 - N_1)\sum_{n=N_1+1}^{N_2} \|a_n\|^2$ which holds for any $\{a_n\in\reals^d\}_{n=N_1+1}^{N_2}$; the third inequality uses the definition of $\bg_\tau^i$, it also uses $t-t_0\leq K$ as well as  $(\alpha_{\tau:t})^2 \leq (K\alpha_t)^2$ which holds since $\tau\leq t$ and since both $\alpha_\tau\leq \alpha_t$;  and the last inequality uses the fact that $\alpha_t = t+1$ implying that the following  holds,
$$
\frac{\alpha^4_{t}}{(\alpha_{0:t})^2} \leq 4~,\quad\&~\quad \frac{\alpha^2_{t}}{(\alpha_{0:t})^2}\leq \frac{4}{\alpha_t^2}
$$

\begin{lemma}\label{lem:Tech9}
The following holds for any $i,j\in[M]$,
\begin{align*}
 \| \nabla f_i(x_\tau^i)- \nabla f_j(x_\tau^j) \|^2 
  &\leq
 \frac{3L^2}{M\alpha_\tau^2}(q_\tau^i + q_\tau^j)  
 + 6\left( \|\nabla f_i(x_\tau)\|^2 + \|\nabla f_j(x_\tau)\|^2\right)~,
 \end{align*}
 \end{lemma}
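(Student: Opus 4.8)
The plan is to measure every machine's gradient against the gradient evaluated at the \emph{common} average query point $x_\tau = \frac{1}{M}\sum_{j\in[M]} x_\tau^j$, so that $L$-smoothness can convert the inter-machine drift $\|x_\tau^i - x_\tau^j\|$ into the quantities $q_\tau^i$. The key is to notice that $f_i$ and $f_j$ are \emph{different} functions, so their gradients cannot be directly compared by smoothness; the right move is therefore to route this mismatch into a single cross term evaluated at the shared point $x_\tau$. Concretely I would write the decomposition
\begin{align*}
\nabla f_i(x_\tau^i) - \nabla f_j(x_\tau^j) = \bigl(\nabla f_i(x_\tau^i) - \nabla f_i(x_\tau)\bigr) - \bigl(\nabla f_j(x_\tau^j) - \nabla f_j(x_\tau)\bigr) + \bigl(\nabla f_i(x_\tau) - \nabla f_j(x_\tau)\bigr),
\end{align*}
and then apply the elementary inequality $\|a+b+c\|^2 \le 3\bigl(\|a\|^2+\|b\|^2+\|c\|^2\bigr)$.

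Next I would bound the three resulting terms. The first two are controlled by $L$-smoothness of the $f_i$'s, giving $\|\nabla f_i(x_\tau^i) - \nabla f_i(x_\tau)\|^2 \le L^2\|x_\tau^i - x_\tau\|^2$ and likewise for $j$. The third term, which involves two genuinely distinct functions, I would bound crudely via $\|a-b\|^2 \le 2\|a\|^2 + 2\|b\|^2$, producing $2\|\nabla f_i(x_\tau)\|^2 + 2\|\nabla f_j(x_\tau)\|^2$; the outer factor of $3$ then yields the constant $6$ claimed in the lemma. To finish, I would relate $\|x_\tau^i - x_\tau\|^2$ to $q_\tau^i$ using Jensen's inequality on the average, namely
\begin{align*}
\|x_\tau^i - x_\tau\|^2 = \left\| \frac{1}{M}\sum_{j\in[M]}(x_\tau^i - x_\tau^j)\right\|^2 \le \frac{1}{M}\sum_{j\in[M]}\|x_\tau^i - x_\tau^j\|^2 = \frac{q_\tau^i}{M\alpha_\tau^2},
\end{align*}
where the last equality is the definition $q_\tau^i := \alpha_\tau^2 \sum_{j\in[M]}\|x_\tau^i - x_\tau^j\|^2$ from Eq.~\eqref{eq:Qdefs}. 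Substituting this estimate (and its $j$-analogue) into the two smoothness terms collapses them into $\frac{3L^2}{M\alpha_\tau^2}\bigl(q_\tau^i + q_\tau^j\bigr)$, exactly matching the stated bound.

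There is no deep obstacle here; the lemma is a routine chaining of convexity/smoothness inequalities, and the only real care is bookkeeping. The decomposition must be chosen so that the $f_i$-versus-$f_j$ discrepancy appears \emph{only} in the cross term evaluated at the single point $x_\tau$ (otherwise smoothness would be inapplicable), and one must track the $\alpha_\tau^2$ and $1/M$ factors through the Jensen step so that the constants $3$ and $6$ come out precisely rather than being merely loose upper bounds. I would present these as a short displayed chain of inequalities, citing $L$-smoothness and Jensen at the appropriate lines.
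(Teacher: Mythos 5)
Your proposal is correct and follows essentially the same route as the paper's own proof: the identical three-term decomposition through the common point $x_\tau$, the inequality $\|a+b+c\|^2 \le 3(\|a\|^2+\|b\|^2+\|c\|^2)$, $L$-smoothness on the two same-function terms, $\|a-b\|^2\le 2\|a\|^2+2\|b\|^2$ on the cross term, and Jensen's inequality to obtain $\|x_\tau - x_\tau^i\|^2 \le \frac{q_\tau^i}{M\alpha_\tau^2}$. The only cosmetic difference is that the paper uses a fresh summation index $l$ in the Jensen step, which you should also do to avoid clashing with the fixed index $j$ of the lemma.
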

 Using the above lemma inside Eq.~\eqref{eq:XboundW2} yields,
 \begin{align}\label{eq:XboundW3}
 q_t^{i,j}&:= \alpha_t^2\|x_t^i-x_t^j\|^2 \nonumber\\
& \leq
 24\eta^2 K^3 L^2\cdot \frac{1}{M}(q_{t_0+1:t}^i + q_{t_0+1:t}^j  ) 
 +48\eta^2 K^3 
 \sum_{\tau=t_0+1}^t \alpha_\tau^2\left( \|\nabla f_i(x_\tau)\|^2 + \|\nabla f_j(x_\tau)\|^2\right)
 \nonumber\\
 &\quad
 +
  8\eta^2 \alpha_t^2
\left\|\sum_{\tau=t_0+1}^t  \alpha_{\tau:t}\frac{\alpha_\tau}{\alpha_t^2} \xi_\tau\right\|^2 \nonumber\\
& =
 \frac{\theta}{2} \cdot \frac{1}{M}(q_{t_0+1:t}^i + q_{t_0+1:t}^j  ) 
 +\frac{\theta}{L^2} 
 \sum_{\tau=t_0+1}^t \alpha_\tau^2\left( \|\nabla f_i(x_\tau)\|^2 + \|\nabla f_j(x_\tau)\|^2\right)
 +
  \B_t^{i,j}~,
\end{align}
where we have denoted \footnote{ Formally we should use the $i,j$ upper script for $\xi_\tau$ in the definition of $\B_t^{i,j}$, i.e. to define $\B_t^{i,j}:= 8\eta^2 \alpha_t^2
\left\|\sum_{\tau=t_0+1}^t  \alpha_{\tau:t}\frac{\alpha_\tau}{\alpha_t^2} \xi_\tau^{i,j}\right\|^2$. We absorb this notation into $\xi_\tau$ to simplify the notation.}
$$
\theta: = 48\eta^2 K^3 L^2~,\quad\&\quad \B_t^{i,j}:= 8\eta^2 \alpha_t^2
\left\|\sum_{\tau=t_0+1}^t  \alpha_{\tau:t}\frac{\alpha_\tau}{\alpha_t^2} \xi_\tau\right\|^2~.
$$

Summing Eq.~\eqref{eq:XboundW3} over $i,j\in[M]$ and using the definition of $Q_t$ gives,

\begin{align}\label{eq:XboundW5}
M^2 Q_t& = \sum_{i,j\in[M]}q_t^{i,j} \nonumber\\
& \leq
 \frac{\theta}{2} \cdot \frac{1}{M}\cdot 2M^3 Q_{t_0+1:t}  
 +\frac{\theta}{L^2} \cdot 2M\sum_{\tau=t_0+1}^t \alpha_\tau^2 \sum_{i\in[M]}\|\nabla f_i(x_\tau)\|^2
 +\sum_{i,j\in[M]} \B_t^{i,j} \nonumber\\
 & =
 M^2\theta \cdot  Q_{t_0+1:t}  
 +\frac{2M\theta}{L^2} \sum_{\tau=t_0+1}^t \alpha_\tau^2 \sum_{i\in[M]}\|\nabla f_i(x_\tau)\|^2
 +\sum_{i,j\in[M]} \B_t^{i,j} ~,
\end{align}
where we used, 
$$
\sum_{i,j\in[M]}q_\tau^i = \sum_{j\in[M]}\sum_{i\in[M]}q_\tau^i =\sum_{j\in[M]} M^2 Q_\tau  = M^3 Q_\tau~.
$$
Now dividing Eq.~\eqref{eq:XboundW5} by $M^2$ gives $\forall t\in[t_0,t_0+K]$,
\begin{align}\label{eq:XboundW7}
Q_t 
& \leq
 \theta \cdot  Q_{t_0+1:t}  
 +\frac{2\theta}{L^2} \sum_{\tau=t_0+1}^t \alpha_\tau^2 \cdot \frac{1}{M}\sum_{i\in[M]}\|\nabla f_i(x_\tau)\|^2
 +\frac{1}{M^2}\sum_{i,j\in[M]} \B_t^{i,j} \nonumber\\
 & \leq
 \theta \cdot  Q_{t_0+1:t}  
 +\frac{2\theta}{L^2} \sum_{\tau=t_0+1}^t \alpha_\tau^2 \cdot (G_*^2 + 4L(f(x_\tau)-f(w^*)))
 +\frac{1}{M^2}\sum_{i,j\in[M]} \B_t^{i,j} \nonumber\\
 & \leq
 \theta \cdot  Q_{t_0+1:t}  
 +\frac{2\theta}{L^2} \sum_{\tau=t_0+1}^t \alpha_\tau^2 \cdot (G_*^2 + 4L\Delta_\tau))
 +\frac{1}{M^2}\sum_{i,j\in[M]} \B_t^{i,j}~,
\end{align}
where the second line follows from the dissimilarity assumption Eq.~\eqref{eq:NonHom}, and the last line is due to the definition of $\Delta_\tau$. 

Thus, we can re-write the above equation as follows forall $t\in[t_0,t_0+K]$,
\begin{align}\label{eq:GenForm1}
Q_t \leq \theta Q_{t_0+1:t} + H_t~,
\end{align}
where  
$H_t =    \frac{2\theta}{L^2} \sum_{\tau=t_0+1}^t \alpha_\tau^2 \cdot (G_*^2 + 4L\Delta_\tau))
 +\frac{1}{M^2}\sum_{i,j\in[M]} \B_t^{i,j}$, and recall that $\theta :=  48\eta^2 K^3 L^2$. 
Now, notice that $Q_t,H_t\geq 0$, and that $\theta$ satisfies $\theta K =48\eta^2 K^4 L^2  \leq 1/2$ since we assume that $\eta^2 \leq\frac{1}{100L^2K^4}$ (see Eq.~\eqref{eq:LRchoice}).
This enables to make use of  Lemma~\ref{lem:Tech2} below to conclude,
\begin{align}\label{eq:Bv_tNoise}
 Q_{t_0+1:t_0+K} \leq 2H_{t_0+1:t_0+K}
&=  \frac{4\theta}{L^2} \sum_{\tau=t_0+1}^{t_0+K} \alpha_\tau^2 \cdot (G_*^2 + 4L\Delta_\tau))
 +\frac{2}{M^2}\sum_{i,j\in[M]} \B_{t_0+1:t_0+K}^{i,j} \nonumber \\
 &=
   200\eta^2 K^3\sum_{\tau=t_0+1}^{t_0+K} \alpha_\tau^2 \cdot (G_*^2 + 4L\Delta_\tau))
 +\frac{2}{M^2}\sum_{i,j\in[M]} \B_{t_0+1:t_0+K}^{i,j}~.
\end{align}
\begin{lemma}\label{lem:Tech2}
Let $K,\theta>0$, and assume $\theta K\leq 1/2$. Also assume a sequence of non-negative terms 
$\{Q_t\geq 0\}_{t=t_0+1}^{t_0+K}$ and another sequence $\{H_t\geq 0\}_{t=t_0+1}^{t_0+K}$ that satisfy the following inequality for any $t\in[t_0,t_0+K]$,
$$
Q_t \leq \theta Q_{t_0+1:t} + H_t
$$
Then the following holds,
$$
Q_{t_0+1:t_0+K} \leq 2H_{t_0+1:t_0+K}~.
$$
\end{lemma}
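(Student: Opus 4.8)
The plan is to sum the given recursive inequality over all $t$ in the block and exploit the double-counting structure of the partial sums $Q_{t_0+1:t}$. First I would introduce the shorthand $S := Q_{t_0+1:t_0+K} = \sum_{t=t_0+1}^{t_0+K} Q_t$ for the quantity we wish to bound. Summing the hypothesis $Q_t \le \theta Q_{t_0+1:t} + H_t$ over $t$ from $t_0+1$ to $t_0+K$ gives
\[
S \le \theta \sum_{t=t_0+1}^{t_0+K} Q_{t_0+1:t} + H_{t_0+1:t_0+K}.
\]

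The key step is to bound the inner double sum. Writing $\sum_{t=t_0+1}^{t_0+K} Q_{t_0+1:t} = \sum_{t=t_0+1}^{t_0+K}\sum_{\tau=t_0+1}^{t} Q_\tau$ and swapping the order of summation, each term $Q_\tau$ is counted once for every index $t \ge \tau$ in the range, i.e. at most $K$ times (the range $[t_0+1,t_0+K]$ contains exactly $K$ indices). Because all $Q_\tau \ge 0$ by assumption, this overcount is a genuine upper bound, yielding $\sum_{t=t_0+1}^{t_0+K} Q_{t_0+1:t} \le K\, S$. Substituting back gives $S \le \theta K\, S + H_{t_0+1:t_0+K}$.

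Finally I would rearrange to $(1-\theta K)\,S \le H_{t_0+1:t_0+K}$ and invoke the hypothesis $\theta K \le 1/2$, which guarantees $1-\theta K \ge 1/2 > 0$, so dividing is legitimate and yields $S \le 2 H_{t_0+1:t_0+K}$, exactly the claimed bound. I do not anticipate a genuine obstacle here: the whole argument is a single summation of the recursion followed by elementary rearrangement, with the non-negativity of the $Q_\tau$ being precisely what turns the ``appears at most $K$ times'' overcount into a valid inequality. The only point demanding care is the indexing — ensuring the summation range holds exactly $K$ terms so that the overcount factor is $K$ and not $K+1$, which is what makes the stated condition $\theta K \le 1/2$ the right threshold for the constant $2$ in the conclusion.
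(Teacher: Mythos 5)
Your proof is correct and takes essentially the same route as the paper's: both arguments reduce to showing $\sum_{t=t_0+1}^{t_0+K} Q_{t_0+1:t} \le K\, Q_{t_0+1:t_0+K}$ (you by swapping the order of summation and counting multiplicities, the paper by bounding each partial sum $Q_{t_0+1:t}$ by the full sum $Q_{t_0+1:t_0+K}$ before summing over $t$ -- the same use of non-negativity), after which both arrive at $S \le \theta K S + H_{t_0+1:t_0+K}$ and rearrange using $\theta K \le 1/2$. The difference is purely presentational.
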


Recalling that we like to bound the expectation of the LHS of Eq.~\eqref{eq:Bv_tNoise}, we will next  bound 
$\left\|\sum_{\tau=t_0+1}^t  \alpha_{\tau:t}\frac{\alpha_\tau}{\alpha_t^2} \xi_\tau\right\|^2$, which is done in the following Lemma~\footnote{recall that for simplicity of notation we denote $\xi_\tau$ rather than $\xi_\tau^{i,j}$.},
\begin{lemma} \label{lem:Tech3}
The following bound holds for any $t\in[t_0,t_0+K]$, 
$$
\E\left\|\sum_{\tau=t_0+1}^t  \frac{\alpha_{\tau:t}\alpha_\tau}{\alpha_t^2} \xi_\tau\right\|^2\leq 4K^3\sigma^2~.
$$
\end{lemma}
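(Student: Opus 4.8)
The plan is to exploit the martingale-difference structure of the noise terms $\xi_\tau := \xi_\tau^i - \xi_\tau^j$ in order to eliminate all cross terms in the expanded square, and then to bound the (deterministic) coefficients $\frac{\alpha_{\tau:t}\alpha_\tau}{\alpha_t^2}$ crudely by $K$.

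First I would record two facts. Since $\xi_\tau^i = \nabla f_i(x_\tau^i,z_\tau^i) - \nabla f_i(x_\tau^i)$ with $x_\tau^i$ measurable w.r.t.\ $\F_{\tau-1}$ and $z_\tau^i$ a fresh draw independent of $\F_{\tau-1}$, we have $\E[\xi_\tau \mid \F_{\tau-1}] = 0$, exactly as in the proof of Lemma~\ref{lem:ExpectgBg}; thus $\{\xi_\tau\}$ is a martingale-difference sequence. Second, by $\|a-b\|^2 \le 2\|a\|^2 + 2\|b\|^2$ together with the bounded-variance assumption, $\E\|\xi_\tau\|^2 \le 2\E\|\xi_\tau^i\|^2 + 2\E\|\xi_\tau^j\|^2 \le 4\sigma^2$.

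Next, writing $c_\tau := \frac{\alpha_{\tau:t}\alpha_\tau}{\alpha_t^2}$ (a deterministic scalar), I would expand the square and use that for $\tau < s$ the cross term vanishes: conditioning on $\F_{s-1}$, the factor $\xi_\tau$ is $\F_{s-1}$-measurable while $\E[\xi_s\mid\F_{s-1}]=0$, so $\E\langle c_\tau\xi_\tau,\, c_s\xi_s\rangle = 0$ by the tower property. This yields the orthogonal decomposition
\begin{align*}
\E\left\|\sum_{\tau=t_0+1}^t c_\tau\xi_\tau\right\|^2 = \sum_{\tau=t_0+1}^t c_\tau^2\,\E\|\xi_\tau\|^2 \le 4\sigma^2 \sum_{\tau=t_0+1}^t c_\tau^2~.
\end{align*}

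It then remains to bound $\sum_\tau c_\tau^2$. Using $\alpha_\tau = \tau+1$: since $\tau\le t$ we have $\alpha_\tau \le \alpha_t$, and since $\alpha_{\tau:t} = \sum_{s=\tau}^t \alpha_s$ is a sum of at most $t-t_0 \le K$ terms, each at most $\alpha_t$, we get $\alpha_{\tau:t}\le K\alpha_t$. Hence $c_\tau \le \frac{K\alpha_t\cdot\alpha_t}{\alpha_t^2} = K$, so $c_\tau^2 \le K^2$, and summing over the at most $K$ indices $\tau\in[t_0+1,t]$ gives $\sum_\tau c_\tau^2 \le K\cdot K^2 = K^3$. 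Combining with the display above yields $4K^3\sigma^2$, as claimed. The computation is essentially routine; the only point requiring care is the martingale-orthogonality step---justifying that cross terms vanish even though $c_\tau$ depends on the endpoint $t$---which follows from the tower property exactly as in Lemma~\ref{lem:ExpectgBg}, since the $c_\tau$ are nonrandom.
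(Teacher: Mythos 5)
Your proof is correct and follows essentially the same route as the paper: exploit the martingale-difference property of $\xi_\tau$ to drop cross terms, bound each coefficient $\frac{\alpha_{\tau:t}\alpha_\tau}{\alpha_t^2}$ by $K$ via $\alpha_{\tau:t}\le K\alpha_t$ and $\alpha_\tau\le\alpha_t$, bound $\E\|\xi_\tau\|^2\le 4\sigma^2$, and sum over at most $K$ indices. The only cosmetic difference is that you verify the orthogonality of the weighted terms by direct expansion and the tower property, whereas the paper packages the same argument as a separate induction lemma (Lemma~\ref{lem:SumMart}) and applies it to the scaled sequence; your explicit remark that the coefficients are deterministic (despite depending on the endpoint $t$) is exactly the justification needed.
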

Since the above lemma for any $i,j$ and  $t\in[t_0,t_0+K]$ we can now bound $\E \B_{t_0+1:t_0+K}^{i,j}$ as follows,
\begin{align}\label{eq:Bv_tNoiseFinRound}
\E \B_{t_0+1:t_0+K}^{i,j}
&=
 8\eta^2\alpha^2_{t}\sum_{t=t_0+1}^{t_0+K}
\E\left\|\sum_{\tau=t_0+1}^t  \frac{\alpha_{\tau:t}\alpha_\tau}{\alpha_t^2} \xi_\tau\right\|^2 \nonumber\\
&\leq 
 8\eta^2\alpha^2_{t}\sum_{t=t_0+1}^{t_0+K} 4K^3\sigma^2\nonumber\\
 &= 
 32\eta^2\alpha^2_{t}K^4\sigma^2\nonumber\\
 &= 
 32\eta^2(r+1)^2K^6\sigma^2~,
\end{align}
where the last lines $\alpha_{t} \leq (r+1)K$ for any iteration $t$ that belongs to round $r$. 

Since the above holds for any $i,j$ it follows that,
$$
\frac{2}{M^2}\sum_{i,j\in[M]} \B_{t_0+1:t_0+K}^{i,j} \leq 2\cdot 32\eta^2(r+1)^2K^6\sigma^2 = 64\eta^2(r+1)^2K^6\sigma^2~.
$$
Plugging the above back into Eq.~\eqref{eq:Bv_tNoise} gives,
\begin{align}\label{eq:Bv_tNoise33}
 Q_{t_0+1:t_0+K} 
 &\leq 
    200\eta^2 K^3\sum_{\tau=t_0+1}^{t_0+K} \alpha_\tau^2 \cdot (G_*^2 + 4L\Delta_\tau))
 +64\eta^2(r+1)^2K^6\sigma^2~.
\end{align}

\paragraph{Final Bound on $\E V_t$.}
Finally, using the above bound together with the Eq.~\eqref{eq:VtEq2}  enables to bound $\E V_t$ as follows,
\begin{align*}
\frac{1}{L^2}\E V_t
&\leq
\sum_{\tau=0}^t Q_\tau\\
&\leq
\sum_{\tau=0}^T Q_\tau \\
&\leq
200\eta^2 K^3\sum_{\tau=0}^{T} \alpha_\tau^2 \cdot (G_*^2 + 4L\Delta_\tau))
+
\sum_{r=0}^{R-1} \sum_{t = rK+1}^{rK+K} \alpha_t^2\E\| x_t^i -  x_t^j\|^2 \\
&\leq
200\eta^2 K^3\sum_{\tau=0}^{T} \alpha_\tau^2 \cdot (G_*^2 + 4L\Delta_\tau))
+
\sum_{r=0}^{R-1} 64\eta^2(r+1)^2K^6\sigma^2 \\
&\leq
200\eta^2 K^3\sum_{\tau=0}^{T} \alpha_\tau^2 \cdot (G_*^2 + 4L\Delta_\tau))
+
64\eta^2K^6\sigma^2 \cdot \frac{8}{6}R^3\\
&\leq
400\eta^2 K^3\sum_{\tau=0}^{T} \alpha_{0:\tau} \cdot (G_*^2 + 4L\Delta_\tau))
+
90\eta^2K^6R^3\sigma^2~,
\end{align*}
where we have used $\sum_{r=0}^{R-1} (r+1)^2 \leq  \frac{8}{6}R^3$, and the last line uses  $\alpha_\tau^2 = (\tau+1)^2\leq 2\alpha_{0:\tau}$
Consequently, we can bound 
$$
\E V_t \leq 400L^2 \eta^2 K^3\sum_{\tau=0}^{T} \alpha_{0:\tau} \cdot (G_*^2 + 4L\Delta_\tau))+90L^2\eta^2K^6R^3\sigma^2~,
$$ 
which established the lemma.
\end{proof}

\subsection{Proof of Lemma~\ref{lem:Tech9}}
\begin{proof}[Proof of Lemma~\ref{lem:Tech9}]
First note that by definition of $x_\tau$ we have for any $i\in[M]$,
\begin{align}\label{eq:JensenNonHom}
\|x_\tau-x_\tau^i\|^2 = 
\left\|\frac{1}{M}\sum_{l\in[M]} x_\tau^l-x_\tau^i \right\|^2 
&\leq
\frac{1}{M}\sum_{l\in[M]} \|x_\tau^l-x_\tau^i\|^2 
=
\frac{1}{M\alpha_\tau^2}q_\tau^i~.
\end{align}
where we have used Jensen's inequality, and the definition of $q_\tau^i$.

Using the above inequality, we obtain,
\begin{align*}
 \| \nabla f_i(x_\tau^i)- \nabla f_j(x_\tau^j) \|^2 
 &=
 \| (\nabla f_i(x_\tau^i)-\nabla f_i(x_\tau)) + 
 (\nabla f_i(x_\tau)-\nabla f_j(x_\tau))- (\nabla f_j(x_\tau^j)- \nabla f_j(x_\tau)) \|^2 \\
 &\leq
 3\| \nabla f_i(x_\tau^i)-\nabla f_i(x_\tau) \|^2+3\| \nabla f_i(x_\tau)-\nabla f_j(x_\tau) \|^2 
 + 3\| \nabla f_j(x_\tau^j)- \nabla f_j(x_\tau) \|^2 \\
 &\leq
 3L^2\left(\|x_\tau-x_\tau^i\|^2 +\|x_\tau-x_\tau^j\|^2\right) + 6\left( \|\nabla f_i(x_\tau)\|^2 + \|\nabla f_j(x_\tau)\|^2\right)\\
 &\leq
 \frac{3L^2}{M\alpha_\tau^2}(q_\tau^i + q_\tau^j)  
 + 6\left( \|\nabla f_i(x_\tau)\|^2 + \|\nabla f_j(x_\tau)\|^2\right)~,
 \end{align*}
 where the second and third  lines use  $\|\sum_{n=1}^{N} a_n\|^2 \leq N\sum_{n=1}^{N} \|a_n\|^2$ which holds for any $\{a_n\in\reals^d\}_{n=1}^{N}$; we also used the smoothness of the $f_i(\cdot)$'s; and the last line uses Eq.~\eqref{eq:JensenNonHom}.
\end{proof}

\subsection{Proof of Lemma~\ref{lem:Tech2}}
\begin{proof}[Proof of Lemma~\ref{lem:Tech2}]
Since the $Q_t$'s and $\theta$ are non-negative, we can further bound $Q_t$ for all $t\in[t_0,t_0+K]$ as follows,
$$
Q_t \leq \theta Q_{t_0+1:t} + H_t \leq  \theta Q_{t_0+1:t_0+K}+H_t~.
$$
Summing the above over $t$ gives,
$$
 Q_{t_0+1:t_0+K} := \sum_{t=t_0+1}^{t_0+K}Q_t \leq \theta K \cdot Q_{t_0+1:t_0+K}+H_{t_0+1:t_0+K} \leq \frac{1}{2}\cdot Q_{t_0+1:t_0+K} + H_{t_0+1:t_0+K}
$$
where we used  $\theta K\leq 1/2$.
Re-ordering the above equation immediately establishes the lemma.
\end{proof}

\subsection{Proof of Lemma~\ref{lem:Tech3}}
\begin{proof}[Proof of Lemma~\ref{lem:Tech3}]
Letting $\{\F_t\}_t$ be the natural filtration that is induces by the  random draws up to time $t$, i.e., by $\{\{z_{1}^i\}_{i\in[M]},\ldots,  \{z_{t}^i\}_{i\in[M]}\}$. 
By the definition of $\xi_t$ it is clear that $\xi_t$ is measurable with respect to $\F_t$, and that,
$$
\E [\xi_{t}\vert \F_{t-1}] = 0~.
$$
Implying that $\{\xi_t\}_t$ is martingale difference sequence with respect to the filtration $\{\F_t\}_t$.
The following implies that,
\begin{align*}
\E\left\|\sum_{\tau=t_0+1}^t  \frac{\alpha_{\tau:t}\alpha_\tau}{\alpha_t^2} \xi_\tau\right\|^2 
&\leq
\sum_{\tau=t_0+1}^t \left(\frac{\alpha_{\tau:t}\alpha_\tau}{\alpha_t^2} \right)^2 \E\left\|\xi_\tau\right\|^2 \\
&\leq
\sum_{\tau=t_0+1}^t \left(\frac{K\alpha_t\cdot\alpha_t}{\alpha_t^2} \right)^2 \E\left\|\xi_\tau\right\|^2 \\
&=
K^2\sum_{\tau=t_0+1}^t \E\left\|\xi_\tau\right\|^2 \\
&\leq
K^2\sum_{\tau=t_0+1}^t \E\left\|\xi_\tau^i - \xi_{\tau}^j\right\|^2 \\
&\leq
2K^2\sum_{\tau=t_0+1}^t (\E\|\xi_\tau^i\|^2 + \E\|\xi_\tau^i\|^2)\\
&\leq
2K^2\sum_{\tau=t_0+1}^t 2\sigma^2\\
&\leq
4K^2\sigma^2\cdot(t-t_0)\\
&\leq
4K^3\sigma^2~,
\end{align*}
where the first line follows by Lemma~\ref{lem:SumMart} below, the second line holds
since  $\alpha_\tau\leq\alpha_t$, and $\alpha_{\tau:t}\leq K\alpha_t$ (recall $\alpha_\tau\leq\alpha_t$ since $\tau\leq t$); the fourth line follows due to $\xi_\tau: = \xi_\tau^i-\xi_\tau^j$; the fifth line uses $\|a+b\|^2\leq 2\|a\|^2 + 2\|b\|^2$ for any $a,b\in\reals^d$;
the sixth line follows since $\E\|\xi_\tau^i\|^2: = \E\|g_\tau^i - \bg_\tau^i\|^2:=
 \E\|\nabla f(x_\tau^i,z_\tau^i) - \nabla f(x_\tau^i)\|^2 \leq \sigma^2$; and the last line uses $(t-t_0)\leq K$.

\begin{lemma}\label{lem:SumMart}
Let  $\{\xi_t\}_t$ be a martingale difference sequence with respect to a filtration $\{\F_t\}_t$, then the following holds for all time indexes $t_1,t_2\geq0$
\begin{align*}
\E \left\|\sum_{\tau=t_1}^{t_2} \xi_\tau\right\|^2 
&=
\sum_{\tau=t_1}^{t_2} \E\left\|\xi_\tau\right\|^2 ~.
\end{align*}
\end{lemma}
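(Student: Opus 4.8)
The plan is to expand the squared norm of the sum into diagonal terms plus cross terms, and then show that every cross term vanishes in expectation by exploiting the martingale difference structure. Concretely, I would first write
$$
\left\|\sum_{\tau=t_1}^{t_2} \xi_\tau\right\|^2 = \sum_{\tau=t_1}^{t_2} \|\xi_\tau\|^2 + 2\sum_{t_1\leq \tau < s\leq t_2} \xi_\tau\cdot \xi_s~,
$$
and take expectation of both sides, using linearity to separate the diagonal contribution $\sum_{\tau=t_1}^{t_2}\E\|\xi_\tau\|^2$ from the off-diagonal contribution.

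The key step is to argue that $\E[\xi_\tau\cdot \xi_s]=0$ whenever $\tau<s$. For this I would condition on $\F_{s-1}$ and invoke the tower property (law of total expectation). Since the sequence is adapted to $\{\F_t\}_t$ and $\tau\leq s-1$, the term $\xi_\tau$ is $\F_{s-1}$-measurable and can be factored out of the inner conditional expectation. The martingale difference property then gives $\E[\xi_s\vert \F_{s-1}]=0$, so that
$$
\E[\xi_\tau\cdot \xi_s] = \E\big[\xi_\tau\cdot \E[\xi_s\vert \F_{s-1}]\big] = \E[\xi_\tau\cdot 0] = 0~.
$$
Summing over all pairs $\tau<s$ shows that the entire off-diagonal contribution vanishes, which leaves exactly $\sum_{\tau=t_1}^{t_2}\E\|\xi_\tau\|^2$ and establishes the identity.

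This argument is essentially routine, so there is no substantial obstacle; the only point requiring care is the measurability bookkeeping, namely verifying that each $\xi_\tau$ with $\tau<s$ is indeed $\F_{s-1}$-measurable. This follows directly from the hypothesis that $\{\xi_t\}_t$ is a martingale difference sequence with respect to $\{\F_t\}_t$, which implicitly requires the sequence to be adapted to the filtration. With this in hand, both the factoring-out step and the vanishing-mean step go through immediately.
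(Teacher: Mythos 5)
Your proof is correct. The only difference from the paper is organizational: the paper proves the identity by induction on $t_2$, at each step expanding $\bigl\|\xi_{t_2+1}+\sum_{\tau=t_1}^{t_2}\xi_\tau\bigr\|^2$ and killing the single cross term $\E\bigl[\bigl(\sum_{\tau=t_1}^{t_2}\xi_\tau\bigr)\cdot\xi_{t_2+1}\bigr]$ by conditioning on $\F_{t_2}$, whereas you expand the square all at once into the diagonal sum plus the double sum $2\sum_{\tau<s}\xi_\tau\cdot\xi_s$ and kill each pair by conditioning on $\F_{s-1}$. The key mechanism is identical in both cases --- adaptedness makes the earlier terms measurable with respect to the conditioning $\sigma$-algebra, and the tower property together with $\E[\xi_s\vert\F_{s-1}]=0$ annihilates every cross term --- so these are two presentations of the same argument rather than different proofs. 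Your direct expansion is the more standard textbook route (it exhibits the orthogonality of martingale differences explicitly and handles all pairs symmetrically), while the paper's induction trades the double-sum bookkeeping for a one-term-at-a-time argument; both are equally rigorous, and your attention to the measurability of $\xi_\tau$ for $\tau\leq s-1$ is exactly the point the paper also flags in its induction step.
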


\end{proof}

\subsubsection{Proof of Lemma~\ref{lem:SumMart}}
\begin{proof}[Proof of Lemma~\ref{lem:SumMart}]
We shall prove the lemma by induction over $t_2$. The base case where $t_2=t_1$ clearly holds since it boils down to the following identity,
\begin{align*}
\E \left\|\sum_{\tau=t_1}^{t_1} \xi_\tau\right\|^2 =
\E \left\| \xi_{t_1}\right\|^2
&=
\sum_{\tau=t_1}^{t_1} \E\left\|\xi_\tau\right\|^2 ~.
\end{align*}
Now for induction step let us assume that the equality holds for $t_2\geq t_1$ and lets prove it holds for $t_2+1$.
Indeed,
\begin{align*}
\E \left\|\sum_{\tau=t_1}^{t_2+1} \xi_\tau\right\|^2 
&=
\E \left\| \xi_{t_2+1}+\sum_{\tau=t_1}^{t_2} \xi_\tau\right\|^2 \\
&=
\E \left\|\sum_{\tau=t_1}^{t_2} \xi_\tau\right\|^2+ \E \|\xi_{t_2+1}\|^2+
2\E  \left(\sum_{\tau=t_1}^{t_2} \xi_\tau\right)\cdot \xi_{t_2+1} \\
&=
\sum_{\tau=t_1}^{t_2+1} \E\left\|\xi_\tau\right\|^2+2\E \left[ \E\left[\left(\sum_{\tau=t_1}^{t_2} \xi_\tau\right)\cdot \xi_{t_2+1}\vert \F_{t_2} \right] \right]\\
&=
\sum_{\tau=t_1}^{t_2+1} \E\left\|\xi_\tau\right\|^2+
2\E \left[ \left(\sum_{\tau=t_1}^{t_2} \xi_\tau\right)\cdot \E\left[ \xi_{t_2+1}\vert \F_{t_2} \right] \right]\\
&=
\sum_{\tau=t_1}^{t_2+1} \E\left\|\xi_\tau\right\|^2+
0\\
&=
\sum_{\tau=t_1}^{t_2+1} \E\left\|\xi_\tau\right\|^2~,
\end{align*}
where the third line follows from the induction hypothesis, as well as from the law of total expectations; the fourth lines follows since $\{\xi_\tau\}_{\tau=0}^{t_2}$ are measurable w.r.t~$\F_{t_2}$, and the fifth line follows since $\E[\xi_{t_2+1}\vert \F_{t_2}] =0$. 
Thus, we have established the induction step and therefore the lemma holds.
\end{proof}

\section{Proof of Lemma~\ref{lem:Generic}}
\begin{proof}[Proof of Lemma~\ref{lem:Generic}]
Summing the inequality  $A_t \leq \B + \frac{1}{2(T+1)}\sum_{t=0}^T A_t$ over $t$ gives,
\begin{align*}
A_{0:T} \leq (T+1)\B + (T+1)\frac{1}{2(T+1)}A_{0:T} = (T+1)\B + \frac{1}{2}A_{0:T}~,
\end{align*}
Re-ordering we obtain,
\begin{align*}
A_{0:T} \leq 2(T+1)\B~.
\end{align*}
Plugging this back to the original inequality and taking $t=T$ gives,
$$
A_T \leq \B + \frac{1}{2(T+1)}A_{0:T} \leq 2\B~.
$$
which concludes the proof.
\end{proof}

\section{The Necessity of Non-uniform Weights}
\label{sec:AppWeights}
One may wonder, why should we employ increasing weights $\alpha_t \propto t$ rather than using standard uniform weights $\alpha_t=1~,\forall t$. Here we explain why uniform weights are insufficient and why increasing weights e.g.~$\alpha_t \propto t$ are crucial to obtain our result.

\paragraph{Intuitive Explanation.} Prior to providing an elaborate technical explanation we will provides some intuition.  
The intuition behind the importance of using increasing weights is the following: Increasing weights are a technical tool to put more emphasis on the last rounds. Now, in the context of Local update methods, the iterates of the last rounds are more attractive since the bias between different machines shrinks as we progress. Intuitively, this happens since as we progress with the optimization process, the bias in the stochastic gradients that we compute goes to zero (in expectation), and consequently the bias between different machines shrinks as we progress. 

\paragraph{Technical Explanation.}
Assume general weights $\{\alpha_t\}_t$, and let us go back to the proof of Lemma~\ref{lem:VtBoundFin} 
(see Section~\ref{sec:Prooflem:VtBoundFin}).
Recall that in this proof we derive a bound of the following form (see Eq.~\eqref{eq:GenForm1})
\begin{align}\label{eq:GenForm1App}
A_t \leq \theta A_{t_0+1:t} + B_t~,
\end{align}
where $A_t:=\alpha_t^2 \|x_t^i-x_t^j\|^2$, $B_t =     8\eta^2 \alpha_t^2
\left\|\sum_{\tau=t_0+1}^t  \alpha_{\tau:t}\frac{\alpha_\tau}{\alpha_t^2} \xi_\tau\right\|^2$, and importantly \footnote{Indeed, in the case where $\alpha_t:=t+1$ we can take $\theta :=  8\eta^2 K^3 L^2$, and this is used in the proof of Lemma~\ref{lem:VtBoundFin}},
$$
 \theta: = \eta^2\frac{2\alpha^4_{t}}{(\alpha_{0:t})^2}\cdot K^3 L^2~.
$$
Now, a crucial part of the proof is the fact that $\theta K\leq 1/2$, which in turn enables to employ Lemma~\ref{lem:Tech2} in order to bound $\E V_t$.

Not let us inspect the constraint $\theta K\leq 1/2$ for polynomial weights  of the form $\alpha_t\propto t^p$ where $p\geq 0$.
This condition boils down to,
$$
 \eta^2\frac{2\alpha^4_{t}}{(\alpha_{0:t})^2}\cdot K^3 L^2\cdot K \leq 1/2~,
$$
Implying the following bound should apply to $\eta$ \emph{for any $t\in[T]$},
$$
\eta \leq \frac{1}{2 L K^2}\cdot \frac{\alpha_{0:t}}{\alpha^2_{t}} \approx \frac{1}{2 L K^2}\cdot \frac{t^{p+1}}{t^{2p}} = \frac{1}{2 L K^2}\cdot  t^{1-p} 
$$
Now since the bound should hold for any $t\in[T]$ we could divide into two cases:\\
\textbf{Case 1: $p\leq1$.} In this case $t^{1-p}$ is monotonically increasing with $t$ so the above condition should be satisfied for the smallest $t$, namely $t =1 $, implying,
$$
\eta \leq \frac{1}{2 L K^2}\cdot \frac{\alpha_{0:t}}{\alpha^2_{t}} \approx \frac{1}{2 L K^2}~.
$$
The effect of this term on the overall error stems from the first term in the error analysis (see e.g. Eq.~\eqref{eq:Main5}), namely,
$$
\frac{1}{\alpha_{0:T}}\cdot \frac{\|w_0-w^*\|^2}{\eta} = \frac{2 LK^2\cdot\|w_0-w^*\|^2 }{T^{p+1}}
$$
Now,  for the extreme values $p=0$(uniform weights) and $p=1$ (linear weights), the above expression results an error term of the following form,
\begin{align}\label{eq:Peffect1}
\text{Err} ({\bf p=0}) = O\left({K^2}/{T}\right) = O({K}/{R})~~~\&~~~\text{Err} ({\bf p=1}) = O({K^2}/{T^2}) = O({1}/{R^2})~.
\end{align}
Thus, for $p=0$, the error term is considerably worse even compared to Minibatch-SGD, and $p=1$ is clearly an improvement.\\
\textbf{Case 2: $p>1$.} In this case $t^{1-p}$ is decreasing increasing with $t$ so the above condition should be satisfied for the largest $t$, namely $t =T $, implying,
$$
\eta \leq \frac{1}{2 L K^2}\cdot \frac{\alpha_{0:t}}{\alpha^2_{t}} \approx \frac{1}{2 L K^2}\cdot \frac{1}{T^{p-1}}~.
$$
Now, the effect of this term on the overall error stems from the first term in the error analysis (see e.g. Eq.~\eqref{eq:Main5}), namely,
$$
\frac{1}{\alpha_{0:T}}\cdot \frac{\|w_0-w^*\|^2}{\eta} = \frac{2 LK^2\cdot\|w_0-w^*\|^2 \cdot T^{p-1}}{T^{p+1}} = \frac{2 LK^2\cdot\|w_0-w^*\|^2 \cdot }{T^{2}}
$$
Thus, for any $p\geq 1$ we obtain, 
\begin{align}\label{eq:Peffect2}
\text{Err} ({\bf p}) = O({K^2}/{T^2}) = O({1}/{R^2})~.
\end{align}
\textbf{Conclusion:} As can be seen from Equations~\eqref{eq:Peffect1}~\eqref{eq:Peffect2}, using uniform weights or even polynomial weights $\alpha_t \propto t^p$ with $p<1$ yields strictly worse guarantees compared to taking $p\geq 1$.

\section{Experiments}
\label{app:exp}

\subsection{Data Distribution Across Workers}  
In federated learning or distributed machine learning settings, data heterogeneity among workers often arises due to non-identical data distributions. To simulate such scenarios, we split the MNIST dataset using a Dirichlet distribution. The Dirichlet distribution allows control over the degree of heterogeneity in the data assigned to each worker by adjusting the \textit{dirichlet-alpha} parameter. Lower values of \textit{dirichlet-alpha} result in more uneven class distributions across workers, simulating highly non-IID data.  

Given \( C \) classes and \( M \) workers, the probability \( p_{c,m} \) of assigning data from class \( c \) to worker \( m \) is based on sampling from a Dirichlet distribution:
\[
p_{c,1}, p_{c,2}, \ldots, p_{c,M} \sim \text{Dirichlet}(\alpha)
\]
where \( \alpha \) is the concentration parameter controlling the level of heterogeneity. A smaller \( \alpha \) value results in a more imbalanced distribution, meaning that each worker primarily receives data from a limited subset of classes. In this experiment, we set \( \alpha = 0.1 \) to induce high heterogeneity.  

The following figures present scatter plots illustrating the class frequencies assigned to individual workers for different worker configurations—16, 32, and 64—using a specific random seed.
\begin{figure}[H]
    \centering
    \begin{subfigure}[t]{0.32\linewidth}
        \centering
        \includegraphics[width=\linewidth]{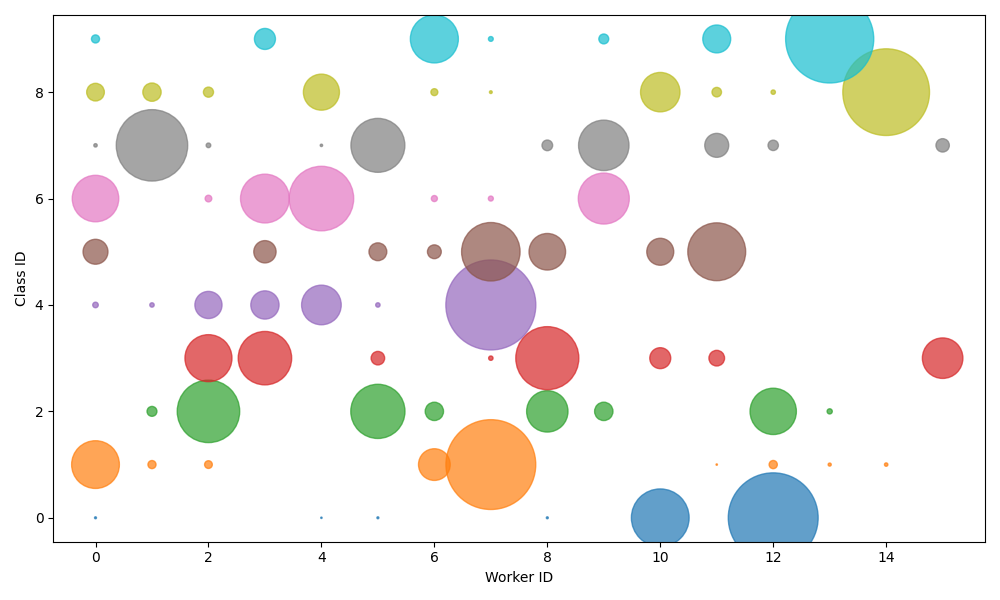}
        \caption{\footnotesize 16 Workers.}
        \label{fig:1a}
    \end{subfigure}
    \hfill
    \begin{subfigure}[t]{0.32\linewidth}
        \centering
        \includegraphics[width=\linewidth]{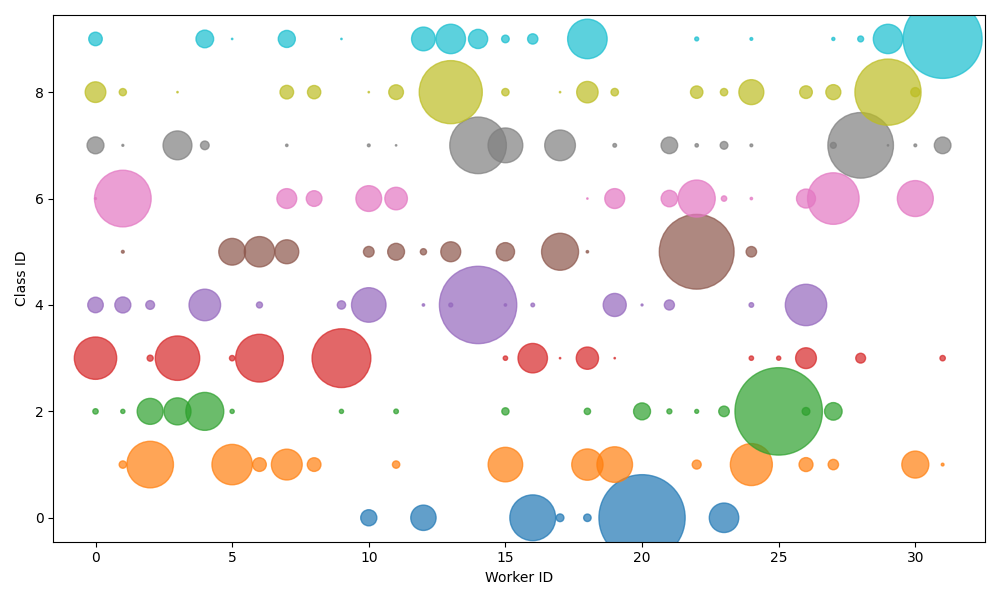}
        \caption{\footnotesize 32 Workers.}
        \label{fig:1b}
    \end{subfigure}
    \hfill
    \begin{subfigure}[t]{0.32\linewidth}
        \centering
        \includegraphics[width=\linewidth]{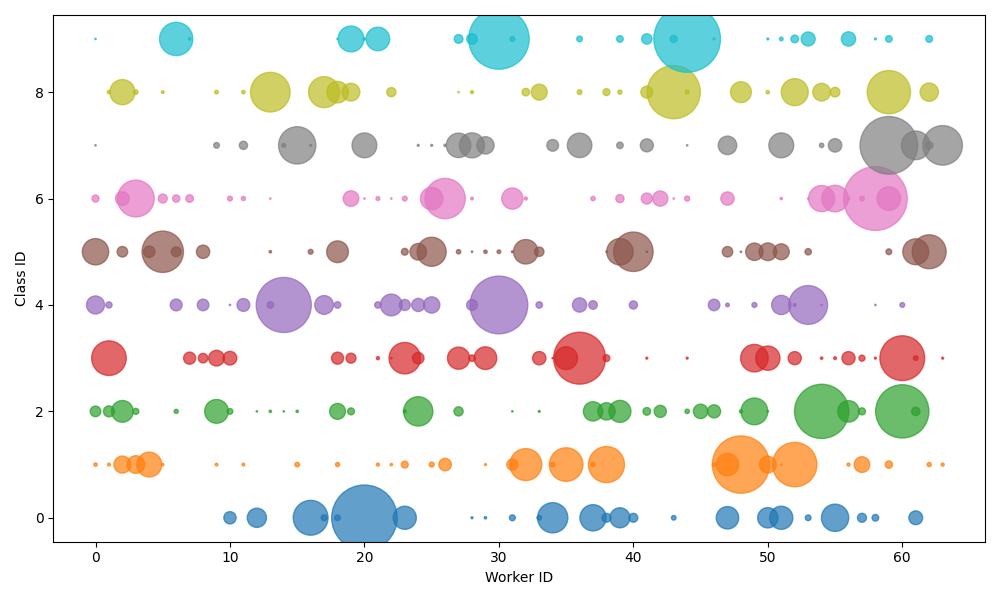}
        \caption{\footnotesize 64 Workers.}
        \label{fig:1b}
    \end{subfigure}
    \caption{\footnotesize Class distribution across workers for different numbers of workers (16, 32, and 64) on the MNIST dataset. The dataset was partitioned using a Dirichlet distribution, with the \textit{dirichlet-alpha} parameter set to 0.1 to induce high heterogeneity. Each scatter plot illustrates class frequencies for each worker.}
    \label{fig:data_dist}
\end{figure}

\subsection{Complete Experimental Results}
\label{app:complete-res}
This section presents the complete experimental results for 16, 32, and 64 workers, showing test accuracy and test loss as functions of local iterations \( K \). The plots illustrate the method’s scalability and performance, with higher accuracy (\(\uparrow\)) and lower loss (\(\downarrow\)) indicating better outcomes.

\begin{figure}[H]
    \centering
    \includegraphics[width=\linewidth]{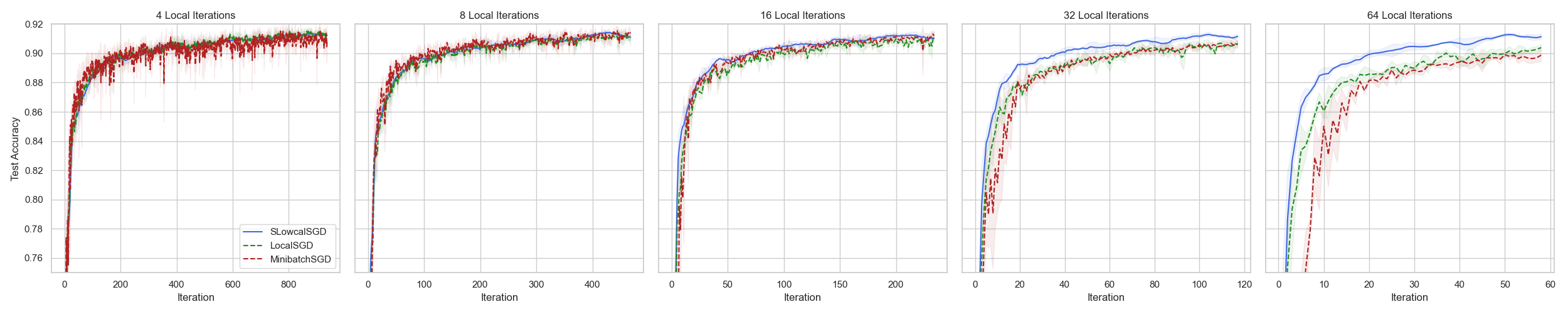}
    \label{fig:2b}
    \caption{\footnotesize Test Accuracy vs. Local Iterations ($K$) for 16 workers ($\uparrow$ is better).}
    \label{fig:loss_comparison}
\end{figure}

\begin{figure}[H]
    \centering
    \includegraphics[width=\linewidth]{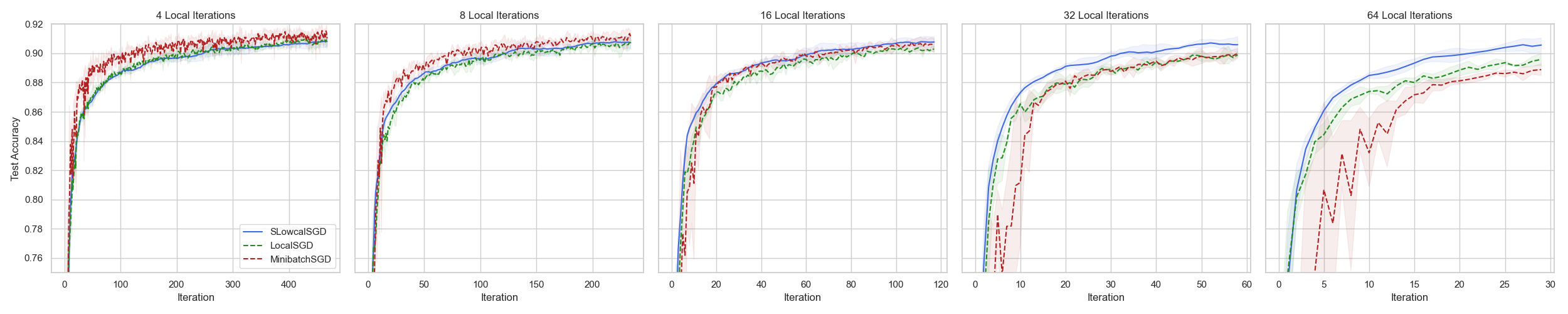}
    \label{fig:2b}
    \caption{\footnotesize Test Accuracy vs. Local Iterations ($K$) for 32 workers ($\uparrow$ is better).}
    \label{fig:loss_comparison}
\end{figure}

\begin{figure}[H]
    \centering
    \includegraphics[width=\linewidth]{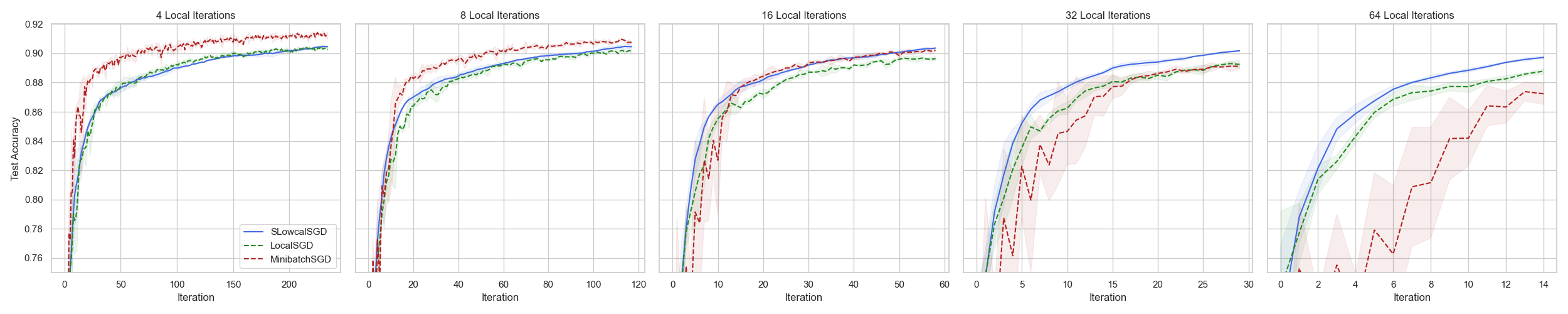}
    \label{fig:2b}
    \caption{\footnotesize Test Accuracy vs. Local Iterations ($K$) for 64 workers ($\uparrow$ is better).}
    \label{fig:loss_comparison}
\end{figure}

\begin{figure}[H]
    \centering
    \includegraphics[width=\linewidth]{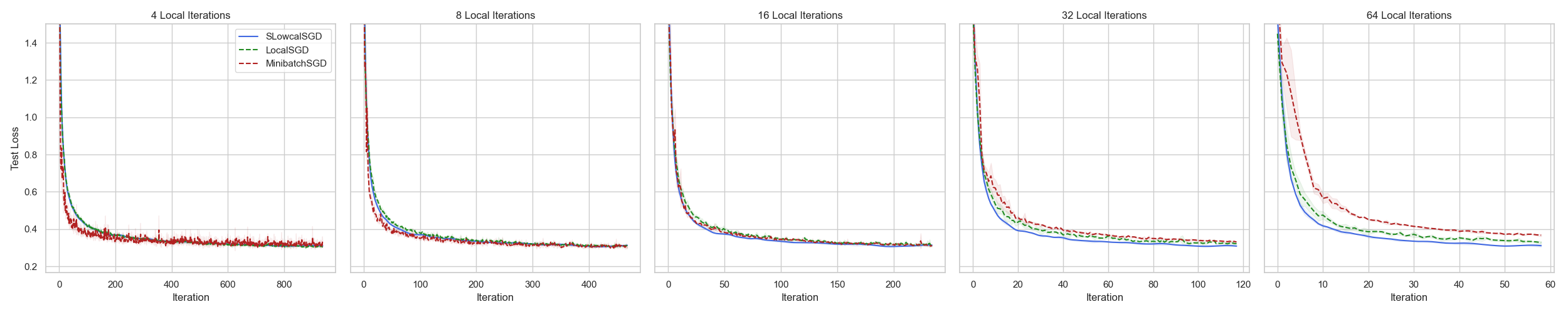}
    \label{fig:2b}
    \caption{\footnotesize Test Loss vs. Local Iterations ($K$) for 16 workers ($\downarrow$ is better).}
    \label{fig:loss_comparison}
\end{figure}

\begin{figure}[H]
    \centering
    \includegraphics[width=\linewidth]{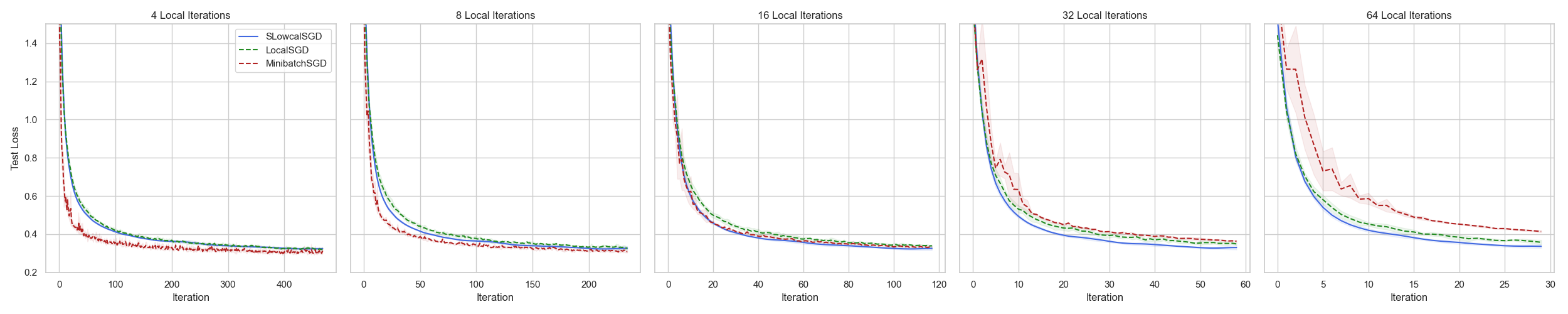}
    \label{fig:2b}
    \caption{\footnotesize Test Loss vs. Local Iterations ($K$) for 32 workers ($\downarrow$ is better).}
    \label{fig:loss_comparison}
\end{figure}

\begin{figure}[H]
    \centering
    \includegraphics[width=\linewidth]{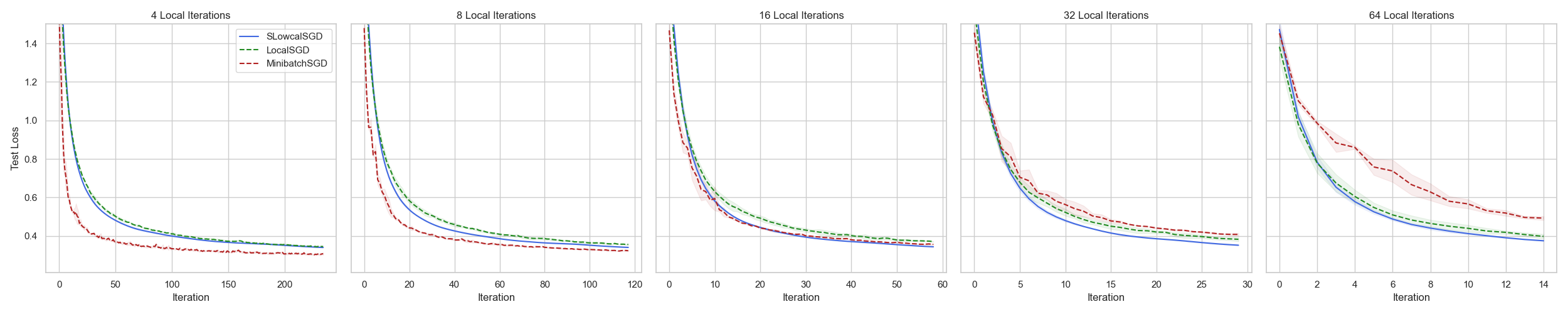}
    \label{fig:2b}
    \caption{\footnotesize Test Loss vs. Local Iterations ($K$) for 64 workers ($\downarrow$ is better).}
    \label{fig:loss_comparison}
\end{figure}

\end{document}